\documentclass{article}

% if you need to pass options to natbib, use, e.g.:
%     \PassOptionsToPackage{numbers, compress}{natbib}
% before loading neurips_2021

% ready for submission
\usepackage[preprint]{neurips_2021}

% to compile a preprint version, e.g., for submission to arXiv, add add the
% [preprint] option:
%     \usepackage[preprint]{neurips_2021}

% to compile a camera-ready version, add the [final] option, e.g.:
%     \usepackage[final]{neurips_2021}

% to avoid loading the natbib package, add option nonatbib:
%    \usepackage[nonatbib]{neurips_2021}

\usepackage[utf8]{inputenc} % allow utf-8 input
\usepackage{hyperref}
\hypersetup{
    pdfnewwindow=true,     % links in new window
    colorlinks=true,     % false: boxed links; true: colored links
    linkcolor=blue,     % color of internal links (change box color with linkbordercolor)
    citecolor=blue,     % color of links to bibliography
    filecolor=magenta,     % color of file links
    urlcolor=cyan     % color of external links
}
\usepackage{url}            % simple URL typesetting
\usepackage{booktabs}       % professional-quality tables
\usepackage{amsfonts}       % blackboard math symbols
\usepackage{nicefrac}       % compact symbols for 1/2, etc.
\usepackage{microtype}      % microtypography
\usepackage{xcolor}         % colors

\usepackage{graphicx}
\usepackage{subcaption}
\usepackage{amsfonts}
\usepackage{amsmath,amssymb,amsthm,thmtools}
\usepackage{bbm}
\usepackage[nolist,nohyperlinks]{acronym}
\usepackage[capitalize]{cleveref}

\usepackage[colorinlistoftodos, shadow,color=blue!30!white
					,disable
]{todonotes}
\setlength{\marginparwidth}{13ex}
\setlength{\parindent}{0pt}

\usepackage{enumitem}

\newcommand{\R}{\mathbb{R}}

\renewcommand{\P}{\mathbf{P}}

\newcommand{\E}{\mathbf{E}}

\usepackage{algorithm,algpseudocode}

\DeclareMathOperator{\argmin}{argmin}

\newtheorem{theorem}{Theorem}

\newtheorem{lemma}{Lemma}
\newtheorem{corollary}{Corollary}

\newtheorem{proposition}{Proposition}

\newtheorem{assumption}{Assumption}

\newcommand{\todod}[2][]{\todo[size=\scriptsize,color=red!20!white,#1]{Dominic: #2}}
\newcommand{\todoi}[2][]{\todo[size=\scriptsize,color=blue!20!white,#1]{Ilja: #2}}

\begin{acronym}
\acro{RLS}{Regularised Least Squares}
\acro{ERM}{Empirical Risk Minimisation}
\acro{R-ERM}{Regularised Empirical Risk Minimisation}
\acro{RKHS}{Reproducing kernel Hilbert space}
\acro{PSD}{Positive Semi-Definite}
\acro{SGD}{Stochastic Gradient Descent}
\acro{OGD}{Online Gradient Descent}
\acro{GD}{Gradient Descent}
\acro{SGLD}{Stochastic Gradient Langevin Dynamics}
\acro{IS}{Importance Sampling}
\acro{MGF}{Moment-Generating Function}
\acro{ES}{Efron-Stein}
\acro{ESS}{Effective Sample Size}
\acro{KL}{Kullback-Liebler}
\acro{SVD}{Singular Value Decomposition}
\acro{PL}{Polyak-Łojasiewicz}
\acro{NTK}{Neural Tangent Kernel}
\acro{NTF}{Neural Tangent Feature}
\acro{KLS}{Kernelised Least-Squares}
\acro{ReLU}{Rectified Linear Unit}
\end{acronym}
%\input{acronym}

%---------------------USEFUL SHORTCUTS (HERE)-----------------------------------------------

% Do manually as the above bugs the template
\newcommand{\ba}{\mathbf{a}}
\newcommand{\bb}{\mathbf{b}}
\newcommand{\bx}{\mathbf{x}}
\newcommand{\by}{\mathbf{y}}
\newcommand{\bhy}{\mathbf{\hat{y}}}
\newcommand{\bu}{\mathbf{u}}

\newcommand{\bW}{\mathbf{W}}
\newcommand{\bw}{\mathbf{w}}
\newcommand{\bz}{\mathbf{z}}
\newcommand{\bv}{\mathbf{v}}
\newcommand{\btilW}{\widetilde{\mathbf{W}}}

\newcommand{\bK}{\mathbf{K}}
\newcommand{\bI}{\mathbf{I}}
\newcommand{\bX}{\mathbf{X}}

\newcommand{\bWstar}{\mathbf{W}^{\star}}
\newcommand{\bhW}{\mathbf{\hat{W}}}
\newcommand{\fstar}{f^{\star}}

\newcommand{\bzero}{\mathbf{0}}

\newcommand{\sB}{\mathcal{B}}

\newcommand{\sL}{\mathcal{L}}
\newcommand{\sH}{\mathcal{H}}

\newcommand{\sW}{\mathcal{W}}
\newcommand{\sX}{\mathcal{X}}
\newcommand{\sN}{\mathcal{N}}
\newcommand{\sZ}{\mathcal{Z}}
\newcommand{\sO}{\mathcal{O}}
\newcommand{\stilO}{\tilde{\mathcal{O}}}
\newcommand{\bPhi}{\mathbf{\Phi}}
\newcommand{\bphi}{\boldsymbol{\phi}}
\newcommand{\balpha}{\boldsymbol{\alpha}}

\newcommand{\Udist}{\mathrm{unif}\pr{\cbr{\pm 1/\sqrt{m}}}^m}
\newcommand{\Uset}{\cbr{\nicefrac{\pm 1}{\sqrt{m}}}^m}

\newcommand{\lmin}{\lambda_{\min}}
\newcommand{\lmax}{\lambda_{\max}}

\newcommand{\repi}{^{(i)}}

\newcommand{\deli}{^{\backslash i}}
\newcommand{\eps}{\epsilon}

% ========== parentheses, brackets, etc.
\newcommand{\pr}[1]{\left( #1 \right)}
\newcommand{\br}[1]{\left[ #1 \right]}
\newcommand{\cbr}[1]{\left\{ #1 \right\}}
\newcommand{\abs}[1]{\left|#1\right|}

\newcommand{\lf}{\left}
\newcommand{\rt}{\right}
\newcommand{\bmid}{\;\middle|\;}
\newcommand{\tp}{^{\top}}
\newcommand{\ip}[1]{\left\langle #1 \right\rangle}
\newcommand{\poly}{\mathrm{poly}}
\newcommand{\ve}{\varepsilon}

\newcommand{\oracle}{\Delta_S^{\mathrm{oracle}}}
\newcommand{\egen}{\epsilon^{\mathrm{Gen}}}

\newcommand{\df}{\stackrel{\mathrm{def}}{=}}
\newcommand*\diff{\mathop{}\!\mathrm{d}}
\newcommand{\leqC}{\lesssim}
\newcommand{\geqC}{\gtrsim}
\newcommand{\diag}{\mathrm{diag}}
\newcommand{\initnoise}{\nu_{\mathrm{init}}^2}

\newcommand{\initparam}{\bW_0, \bu}

\declaretheoremstyle[
spaceabove=\topsep, spacebelow=\topsep,
headfont=\normalfont\bfseries,
notefont=\bfseries, notebraces={}{},
bodyfont=\normalfont\itshape,
postheadspace=0.5em,
name={\ignorespaces},
numbered=no,
headpunct=.]
{mystyle}
\declaretheorem[style=mystyle]{nameddef}

%-------------------------------------------------------------------------------------------

\title{Stability \& Generalisation of Gradient Descent for Shallow Neural Networks without the Neural Tangent Kernel}

\author{%
Dominic Richards \\
  Department of Statistics\\
  University of Oxford\\
  24-29 St Giles’, Oxford, OX1 3LB \\
  \texttt{Dominic.Richards94@gmail.com} \\  \And
  Ilja Kuzborskij \\
  DeepMind\\
  London\\
  \\
  \texttt{iljak@deepmind.com}
}

\begin{document}

\maketitle

\begin{abstract}  
  We revisit on-average algorithmic stability of \acf{GD} for training overparameterised shallow neural networks and prove new generalisation and excess risk bounds without the \ac{NTK} or \ac{PL} assumptions.
  In particular, we show oracle type bounds which reveal that the generalisation and excess risk of \ac{GD} is controlled by
  an interpolating network 
  with the shortest \ac{GD} path from initialisation (in a sense, an interpolating network with the smallest relative norm).
  While this was known for kernelised interpolants, our proof applies directly to networks trained by \ac{GD} without intermediate kernelisation.
  At the same time, by relaxing oracle inequalities developed here we recover existing \ac{NTK}-based risk bounds in a straightforward way, which demonstrates that our analysis is tighter.
  Finally, unlike most of the \ac{NTK}-based analyses we focus on regression with label noise and show
  that \ac{GD} with early stopping is \emph{consistent}.

\end{abstract}

\section{Introduction}
In a canonical statistical learning problem the learner is given a tuple of independently sampled training examples $S = \pr{z_1, \ldots, z_n}$,
where each example $z_i = (\bx_i, y_i)$ consists of an \emph{input} $\bx_i$ and \emph{label} $y_i$ jointly distributed according to some unknown probability measure $P$.
In the following we assume that inputs belong to an Euclidean ball $\sB_2^d(C_x)$ of radius $C_x$ and labels belong to $[-C_y, C_y]$.
Based on training examples the goal of the learner is to select \emph{parameters} $\bW$ from some parameter space $\sW$ in order to minimise the \emph{statistical risk}
\begin{align*}
  \sL(\bW) \df \frac12 \int \pr{f_{\bW}(\bx) - y}^2 \diff P
\end{align*}
where
$f_{\bW}$ is 
a predictor parameterised by $\bW$.
The best possible predictor in this setting is the \emph{regression function} $\fstar$, which is defined as $\fstar(\bx) = \int y \diff P_{Y|\bX=\bx}$,
while the minimum possible risk is equal to the noise-rate of the problem, which is given by 
$\sigma^2 = \int_{\sZ} (\fstar(\bx) - y)^2 \diff P$.

In this paper we will focus on a \emph{shallow neural network} predictor  
that takes the form
\begin{align*}
  f_{\bW}(\bx) \df \sum_{k=1}^m u_k \phi\pr{\ip{\bW_k, \bx}} \qquad  \pr{\bx \in \R^d, \ \bW \in \R^{d \times m}}
\end{align*}
defined with respect to some \emph{activation function} $\phi ~:~ \R \to \R$, fixed \emph{output layer} $\bu \in \Uset$, and a tunable (possibly randomised) \emph{hidden layer} $\bW$.
In particular, we will consider $f_{\bW_T}$, where the hidden layer is obtained by minimising an empirical proxy of $\sL$ called the \emph{empirical risk} $\sL_S(\bW) \df (2n)^{-1} \sum_{i=1}^n (f_{\bW}(\bx_i) - y_i)^2$ by running a \emph{\acf{GD}} procedure: For $t \in [T]$ steps with initial parameters $\bW_0$ and a step size $\eta > 0$,
we have iterates
$\bW_{t+1} = \bW_{t} - \eta \nabla \mathcal{L}_{t}(\bW_{t})$ where $\nabla \mathcal{L}_{t}(\bW)$ is the first order derivative of $\mathcal{L}_{t}(\bW)$.

Understanding the behaviour of the statistical risk for neural networks has been a long-standing topic of interest in the statistical learning theory~\citep{anthony1999neural}.
The standard approach to this problem is based on \emph{uniform} bounds on the \emph{generalisation gap}
\[
  \egen(\bW_T) \df \sL(\bW_T) - \sL_S(\bW_T)~,
\]
which, given a parameter space $\sW$, 
involves controlling the gap for the worst possible choice of $\bW \in \sW$ under some unknown data distribution. %?
The theory then typically leads to the capacity-based (Rademacher complexity, VC-dimension, or metric-entropy based) bounds which hold with high probability (w.h.p.) over $S$ \citep{bartlett2002rademacher,golowich2018size,neyshabur2018role}:
\todod{added neyshabur paper here from rebuttal}
\footnote{Throughout this paper, we use $f \leqC g$ to say that there exists a universal constant $c > 0$ and some $k \in \mathbb{N}$ such that $f \leq c g \log^k(g)$ holds uniformly over all arguments.}
\begin{align}
  \label{eq:sup_gen_bound}
  \sup_{\bW \in \sW}|\egen(\bW)| \leqC \sqrt{\frac{\mathrm{capacity}(\sW)}{n}}~.
\end{align}
Thus, if one could simultaneously control the empirical risk and the capacity of the class of neural networks, one could control the statistical risk. Unfortunately, controlling the empirical risk $\mathcal{L}_{T}$ turns out to be a challenging part here since 
it is non-convex, and thus, it is not clear whether \ac{GD} can minimise it up to a desired precision.
\todod[disable]{Maybe break to new paragraph here, as we are going from "capacity bounds" to "NTK framework".}

This issue has attracted considerable attention in recent years with
numerous works \citep{du2018gradient,lee2019wide,allen2019convergence,oymak2020toward} 
demonstrating
that \emph{overparameterised} shallow networks (in a sense $m \geqC \poly(n)$) trained on subgaussian inputs converge to global minima exponentially fast, namely, $\sL_S(\bW_T) \leqC (1-\eta \cdot \nicefrac{d}{n})^T$.
Loosely speaking, these proofs are based on the idea that a sufficiently overparameterised network trained by \ac{GD} initialised at $\bW_0$ with Gaussian entries, predicts closely to a solution of a \ac{KLS} formulation minimised by \ac{GD}, where the kernel function called the \emph{\acf{NTK}}~\citep{jacot2018neural} is implicitly given by the activation function.
This connection explains the observed exponential rate in case of shallow neural networks:
For the \ac{NTK} kernel matrix $\bK$ 
the convergence rate of \ac{GD} is $(1-\eta \lmin(\bK))^T$, where $\lmin(\bK)$ is its smallest eigenvalue,
and
as it turns out,
for subgaussian inputs
$\lmin(\bK) \geqC d/n$ \citep{bartlett2021deep}.
\footnote{Which is a tightest known bound, \cite{oymak2020toward} prove a looser bound without distributional assumption on the inputs.}
Naturally, the convergence was exploited to state bounds on the statistical risk:
\cite{arora2019fine}
showed that
for 
noise-free regression ($\sigma^2 = 0$) when
$T \geqC 1/(\eta \lmin(\bK))$, 
\begin{align}
  \label{eq:arora}
  \sL(\bW_T) \leqC \sqrt{\frac{\ip{\by, (n \bK)^{-1} \by}}{n}}~.
\end{align}
Clearly, the bound is non-vacuous whenever $\ip{\by, \bK^{-1} \by} \leqC n^{2 \alpha}$ for some $\alpha \in [0,1)$, and \cite{arora2019fine} present several examples of smooth target functions which satisfy this.
More generally, for $\fstar$ which belongs to the \ac{RKHS} induced by \ac{NTK} one has $\ip{\by, \bK^{-1} \by} \leq \|\fstar\|_{\sH_{\mathrm{NTK}}}^2$ \citep{scholkopf2002learning}.
The norm-based control of the risk is standard in the literature on kernels, and thus, one might wonder to which extent neural networks are kernel methods in disguise?
\todoi[disable]{We need to compare to this}
At the same time, some experimental and theoretical evidence~\citep{bai2019beyond,seleznova2020analyzing,suzuki2021benefit} suggest that connection to kernels might be good only at explaining the behaviour of very wide networks, much more overparameterised than those used in practice.
Therefore, an interesting possibility is to develop alternative ways to analyse generalisation in neural networks: Is there a more straightforward kernel-free optimisation-based perspective?

In this paper we take a step in this direction and explore a kernel-free approach, which at the same time avoids worst-case type uniform bounds such as \cref{eq:sup_gen_bound}.
In particular, we focus on the notion of the \emph{algorithmic stability}: If an algorithm is insensitive to replacement (or removal) of an observation in a training tuple, then it must have a small generalisation gap.
Thus, a natural question is whether \ac{GD} is sufficiently stable when training overparameterised neural networks.

The stability of \ac{GD} (and its stochastic counterpart) when minimising convex and non-convex smooth objective functions was first explored by \cite{hardt2016train}.
Specifically, for a time-dependent choice of a step size $\eta_t = 1/t$ and a problem-dependent constant $\alpha \in (0,1)$ they show that
\begin{align*}
  \E\br{\egen(\bW_T) \bmid \initparam} \leqC  \ln(T) n^{-\alpha}~.
\end{align*}
Unfortunately, when combined with the \ac{NTK}-based convergence rate of the empirical risk we have a vacuous bound since $\sL_S(\bW_T) \leqC 1$.
\footnote{If $\eta_s = \frac{1}{s}$ we have $\mathcal{L}_{S}(\bW_T) \lesssim \exp(\mu \sum_{j=1}^{T} \frac{1}{j}) \approx \frac{1}{T^{\mu}}$, thus, if $\mu \approx \frac{1}{n}$ we then require $T \sim \epsilon^{-n}$ for $\mathcal{L}_{S}(\bW_T) \lesssim \epsilon$. Plugging this into the Generalisation Error bound we get that $\log(T)n^{-\alpha} = n^{1-\alpha} \log(1/\epsilon) $ which is vacuous as $n$ grows.}
This is because the stability is enforced through a quickly decaying step size rather than by exploiting a finer structure of the loss,
and turns out to be insufficient to guarantee the convergence of the empirical risk.
\todoi[disable]{Check this, the step size is time-dependent whereas we present convergence rate for a fixed step size.}
That said, several works \cite{charles2018stability,lei2021sharper} have proved stability bounds exploiting an additional structure in mildly non-convex losses. Specifically, they studied the stability of \ac{GD} minimising a \emph{gradient-dominated} empirical risk,
meaning that for all $\bW$ in some neighbourhood $\sW$ and a problem-dependent quantity $\mu$, it is assumed that $\sL_S(\bW) - \min_{\bW'}\sL_S(\bW') \leq \|\nabla \sL_S(\bW)\|^2 / (2 \mu)$.
Having iterates of \ac{GD} within $\sW$ and assuming that the gradient is $\rho$-Lipschitz, this allows to show
\begin{align*}
  \E\br{\egen(\bW_T) \bmid \initparam} \leqC  \frac{\rho}{\mu} \cdot \frac1n~.
\end{align*}
As it turns out, the condition is satisfied for the iterates of \ac{GD} training overparameterised networks~\citep{du2018gradient} with high probability over $\bW_0$. The key quantity that controls the bound is $\rho/\mu \leqC 1/\lmin(\bK)$ which can be interpreted as a \emph{condition number} of the \ac{NTK} matrix.
However, it is known that for subgaussian inputs the condition number behaves as $n/d$ which renders the bound vacuous~\citep{bartlett2021deep}.

\section{Our Contributions}
\label{sec:contributions}
In this paper we revisit algorithmic stability of \ac{GD} for training overparameterised shallow neural networks, and prove new risk bounds \textbf{without the \acf{NTK} or \acf{PL}} machinery.
In particular, we first show a bound on the generalisation gap and then specialise it to state risk bounds for
regression with and without label noise.
In the case of learning with noise we demonstrate that \ac{GD} with a form of early stopping is \emph{consistent}, meaning that the risk asymptotically converges to the noise rate $\sigma^2$.

Our analysis brings out a key quantity, which controls all our bounds, the \emph{\ac{R-ERM} Oracle} defined as
\begin{align}
  \label{eq:oracle}
  \oracle \df \min_{\bW \in \R^{d \times m}} \sL_S(\bW) + \sO\pr{\frac{\|\bW- \bW_0\|_F^2}{\eta T}} \quad \mathrm{as} \quad \eta T \to \infty~,
\end{align}
which means that $\oracle$ is essentially an empirical risk of solution closest to initialisation (an interpolant when for $m$ large enough).
We first consider a bound on the generalisation gap.
\subsection{Generalisation Gap}
\label{sec:gen_gap}
For simplicity of presentation in the following we assume that the activation function $\phi$, its first and second derivatives are bounded.
Assuming parameterisation $m \geqC (\eta T)^5$ we show (\cref{cor:gen_gap_oracle}) that the expected generalisation gap is bounded as
\begin{align}
  \label{eq:intro_gen_bound}
  \E\br{\egen(\bW_T) \bmid \initparam}
  \leq
  C \cdot
  \frac{\eta T}{n} \pr{ 1 + \frac{\eta T}{n} }
  \E\br{\oracle
  \bmid \initparam
  }~,
\end{align}
where $C$ is a constant independent from $n, T, \eta$.

Dependence of $m$ on the total number of steps $T$ might appear strange at first, however things clear out once we pay attention to the scaling of the bound.
Setting the step size $\eta$ to be constant, $T=n^{\alpha}$, and overparameterisation $m \geqC n^{5 \alpha}$ for some free parameter $\alpha \in (0, 1]$ we have
\begin{align*}
  \E\br{\egen(\bW_T) \bmid \initparam} = \sO\pr{  
  \frac{1}{n} \, \E\br{\|\bhW - \bW_0\|_F^2\bmid \initparam}
  }
  \quad \mathrm{as} \quad n \to \infty~,
\end{align*}
where $\bhW$ is chosen as parameters of a \emph{minimal-norm interpolating network}, in a sense $\bhW \in \argmin_{\bW \in \R^{d \times m}}\{\|\bW- \bW_0\|_F^2 ~|~ \sL_S(\bW) = 0 \}$.
Thus, as \cref{eq:intro_gen_bound} suggests, the generalisation gap is controlled by a minimal relative norm of an interpolating network.
In comparison, previous work in the \ac{NTK} setting obtained results of a similar type where in place of $\bhW$ one has an interpolating minimal-norm solution to an \ac{NTK} least-squares problem.
Specifically, in \cref{eq:arora} the generalisation gap is controlled by $\ip{\by, (n \bK)^{-1} \by}$, which is a squared norm of such solution.

The generalisation gap of course only tells us a partial story, since the behaviour of the empirical risk is unknown.
Next, we take care of this and present bounds \emph{excess risk} bounds.
\subsection{Risk Bound without Label Noise}
\label{sec:intro_risk_bound}
We first present a bound on the statistical risk which does not use the usual \ac{NTK} arguments to control the empirical risk.
For now assume that $\sigma^2 = 0$ meaning that there is no label noise and randomness is only in the inputs.

\paragraph{\ac{NTK}-free Risk Bound.}

In \cref{cor:risk_noise_free} we show that the risk is bounded as
\begin{align}
  \label{eq:risk_bound}
  \E\br{\sL(\bW_T) \bmid \initparam}
  \leq
  \pr{1
    +
  C \cdot \frac{\eta T}{n} \pr{ 1 + \frac{\eta T}{n} }
    }
  \E\br{\oracle
  \bmid \initparam
  }~.
\end{align}
Note that this bound looks very similar compared to the bound on generalisation gap.
The difference lies in a ``$1+$'' term which accounts for the fact that $\sL(\bW_T) \leq \oracle$ as we show in \cref{lem:OptError}.

As before we let the step size be constant, set $T=n^{\alpha}$, and let overparameterisation be $m \geqC n^{5 \alpha}$ for some $\alpha \in (0, 1]$.
Then our risk bound implies that
\todoi[disable]{It's somewhat limiting that we have to ``stop early'' even in the noiseless case. Intuitively we should be able to take $T\to \infty$ even for a fixed $m$, but the stability bound has parameterisation $m \geqC (\eta T)^5$.}
\begin{align*}
  \E\br{\sL(\bW_T) \bmid \initparam} = \sO\pr{ \frac{1}{n^{\alpha}} \, \E\br{\|\bhW - \bW_0\|_F^2\bmid \initparam}}  \quad \mathrm{as} \quad n \to \infty~,
\end{align*}
which comes as a simple consequence of bounding $\oracle$ while choosing $\bhW$ to be parameters of a minimal relative norm interpolating network as before.
Recall that $y_i = \fstar(\bx_i)$: The bound suggests that target functions $\fstar$ are only learnable if $n^{-\alpha} \, \E[\|\bhW - \bW_0\|_F^2\mid \initparam] \to 0$ as $n \to \infty$ for a fixed $\alpha$ and input distribution.
\todoi[disable]{Presenting some teacher network (or $\fstar$) would give us here something stronger...}
A natural question is whether a class of such functions is non-empty.
Indeed, the \ac{NTK} theory suggests that it is not and in the following we will recover \ac{NTK}-based results by relaxing our oracle inequality.

Similarly as in \cref{sec:gen_gap} we choose $T = n^{\alpha}$ and so we require $m \geqC n^{5 \alpha}$, which
trades off overparameterisation and convergence rate through the choice of $\alpha \in (0, 1]$.
To the best of our knowledge this is the first result of this type, where one can achieve a smaller overparameterisation compared to the \ac{NTK} literature at the expense of a slower convergence rate of the risk.
Finally, unlike the \ac{NTK} setting we do not require randomisation of the initialisation and the risk bound we obtain holds for any $\bW_0$. 
\footnote{In the \ac{NTK} literature randomisation of $\bW_0$ is required to guarantee that $\lmin(\bK) > 0$ \citep{du2018gradient}.}

\paragraph{Comparison to \ac{NTK}-based Risk Bound.}
Our generalisation bound can be naturally relaxed to obtain \ac{NTK}-based empirical risk convergence rates.
In this case we observe that our analysis is general enough to recover results of \cite{arora2019fine} (up to a difference that our bounds hold in expectation rather than in high probability).

By looking at \cref{eq:risk_bound}, our task is in controlling $\oracle$: by its definition in \cref{eq:oracle} we can see that it can be 
bounded by the regularised empirical risk of any model, and we choose a Moore-Penrose pseudo-inverse solution to the least-squares problem supplied with the \ac{NTK} feature map $\bx \mapsto (\nabla_{\bW} f_{\bW}(\bx))(\bW_0)$.
Here, entries of $\bW_0$ are drawn from $\sN(0, \initnoise)$ for some appropriately chosen $\initnoise$, and entries of the outer layer $\bu$ are distributed according to the uniform distribution over $\cbr{\pm\nicefrac{1}{\sqrt{m}}}$, independently from each other and the data.
It is not hard to see that aforementioned pseudo-inverse solution $\bW^{\mathrm{pinv}}$ will have zero empirical risk, and so we are left with controlling $\|\bW^{\mathrm{pinv}} - \bW_0\|_F^2$ as can be seen from the definition of $\oracle$.
Note that it is straightforward to do since $\bW^{\mathrm{pinv}}$ has an analytic form.
In \cref{lem:oracle_NTK} we carry out these steps and show that
with high probability over initialisation $(\initparam)$,
\begin{align*}
  \oracle = \stilO_P\pr{ \frac1n \ip{\by, (n \bK)^{-1} \by} } \quad \mathrm{as} \quad n \to \infty~,
\end{align*}
which combined with \cref{eq:risk_bound} recovers the result of \cite{arora2019fine}.
Note that we had to \emph{relax} the oracle inequality, which suggests that
the risk bound we give here is \emph{tighter}
compared to the \ac{NTK}-based bound.
Similarly, \cite{arora2019fine} demonstrated that
$
  \|\bW_T - \bW_0\|_F^2 = \stilO_P\pr{\ip{\by, (n\bK)^{-1} \by}}
$
holds w.h.p.\ over initialisation and ReLU activation function,
however their proof requires a much more involved ``coupling'' argument where iterates $(\bW_t)_{t=1}^T$ are shown to be close to the \ac{GD} iterates minimising a \ac{KLS} problem with \ac{NTK} matrix.

\subsection{Risk Bound with Label Noise and Consistency}

So far we considered regression with random inputs, but without label noise.
In this section we use our bounds to show that \ac{GD} with early stopping is \emph{consistent} in the regression with label noise.
Here labels are generated as $y_i = \fstar(\bx_i) + \ve_i$ where zero-mean random variables $(\ve_i)_{i=1}^n$ are i.i.d., almost surely \emph{bounded},\footnote{We assume boundedness of labels throughout our analysis to ensure that $\sL_S(\bW_0)$ is bounded almost surely for simplicity. This can be relaxed by introducing randomised initialisation, e.g.\ as in \citep{du2018gradient}.}
and $\E[\ve_1^2] = \sigma^2$.

For now we will use the same settings of parameters as in \cref{sec:intro_risk_bound}: Recall that $\eta$ is constant, $T=n^{\alpha}$, $m \geqC n^{5 \alpha}$ for a free parameter $\alpha \in (0,1]$.
In addition assume a \emph{well-specified} scenario which means that $m$ is large enough such that for some subset of parameters $\sW$ we achieve $\sL(\bWstar) = \sigma^2$ for some parameters $\bWstar$.
Employing our risk bound of \cref{eq:risk_bound}, we relax \ac{R-ERM} oracle as
\[
\E[\oracle ~|~ \initparam]
  \leqC
  \sigma^2 + n^{-\alpha} \|\bWstar - \bW_0\|_F^2~, \quad
  \mathrm{where} \quad \bWstar \in \argmin_{\bW \in \R^{d \times m}} \sL(\bW)~.
\]
Then we immediately have an $\alpha$-dependent risk bound
  \begin{align}
    \label{eq:risk_noise_alpha}
    \E\br{\sL(\bW_T) \bmid \initparam}
    \leq
    \pr{1
    +
    \frac{C}{n^{1-\alpha}}
    }
    \pr{\sigma^2 + \sO\pr{\frac{\|\bWstar - \bW_0\|_F^2}{n^{\alpha}}}}
     \quad \mathrm{as} \quad n \to \infty~.
  \end{align}
  It is important to note that for any tuning $\alpha \in (0,1)$,
  as long as $\bWstar$ is constant in $n$ (which is the case in the parametric setting),  
we have $\E\br{\sL(\bW_T) \bmid \initparam} \to \sigma^2$ as $n \to \infty$, and so we achieve \emph{consistency}.
Adaptivity to the noise achieved here is due to \emph{early stopping}: Indeed, recall that we take $T=n^{\alpha}$ steps and having smaller $\alpha$ mitigates the effect of the noise as can be seen from \cref{eq:risk_noise_alpha}.
This should come as no surprise as early stopping is well-known to have a regularising effect in \ac{GD} methods~\citep{yao2007early}.
The current literature on the \ac{NTK} deals with this through a kernelisation perspective by introducing an explicit $L2$ regularisation~\citep{hu2021regularization}, while risk bound of \citep{arora2019fine} designed for a noise-free setting would be vacuous in this case.

\subsection{Future Directions and Limitations}
\label{sec:future_work}
We presented generalisation and risk bounds for shallow nets controlled by the \acl{R-ERM} oracle $\oracle$.
By straightforward relaxation of $\oracle$, we showed that the risk can be controlled by
the minimal (relative) norm of an interpolating network which is tighter than the \ac{NTK}-based bounds.
There are several interesting venues which can be explored based on our results.
For example, by assuming a specific form of a target function $\fstar$ (e.g., a ``teacher'' network), one possibility is to characterise the minimal norm. This would allow us to understand better which such target function are \emph{learnable} by \ac{GD}.

One limitation of our analysis is smoothness of the activation function and its boundedness.
While boundedness can be easily dealt with by localising the smoothness analysis (using a Taylor approximation around each GD iterate) and randomising $(\initparam)$, extending our analysis to non-smooth activations (such as ReLU $\phi(x)=\max(x,0)$) appears to be non-trivial because our stability analysis crucially relies on the control of the smallest eigenvalue of the Hessian.
Another peculiarity of our analysis is a time-dependent parameterisation requirement $m \geqC (\eta T)^5$.
Making parameterisation $n$-dependent introduces an implicit early-stopping condition, which helped us to deal with label noise.
On the other hand, such requirement might be limiting when dealing with noiseless interpolating learning where one could require to have $\eta T \to \infty$ while keeping $n$ finite.
\section*{Notation}
In the following denote $\ell(\bw, (\bx, y)) \df \tfrac12 (f_{\bW}(\bx) - y)^2$.
Operator norm is denoted as $\|\cdot\|_{\text{op}}$, while $L_2$ norm is denoted by $\|\cdot\|_2$ or $ \|\cdot\|_F$.
We use notation $(a \vee b) \df \max\cbr{a, b}$ and $(a \wedge b) \df \min\cbr{a,b}$ throughout the paper.
Let $(\bW_t\repi)_t$ be the iterates of \ac{GD} obtained from the data set with a resampled data point:
\[
  S\repi \df (z_1, \ldots, z_{i-1}, \widetilde{z}_i, z_{i+1}, \ldots, z_n)
\]
where $\widetilde{z}_i$ is an independent copy of $z_i$.
Moreover, denote a remove-one version of $S$ by
\[
  S\deli \df (z_1, \ldots, z_{i-1}, z_{i+1}, \ldots, z_n)~.
\]
\section{Main Result and Proof Sketch}
\label{sec:results}
This section presents both the main results as well as a proof sketch. Precisely, Section \ref{sec:MainResult} presents the main result and Section \ref{sec:proof_sketch} the proof sketch.

\subsection{Main Results}
\label{sec:MainResult}
We formally make the following assumption regarding the regularity of the activation function. 
\begin{assumption}[Activation]
\label{ass:Activation}
The activation $\phi(u)$ is continuous and twice differentiable with constant $B_{\phi},B_{\phi^{\prime}}, B_{\phi^{\prime\prime}} \geq 0$ bounding $|\phi(u)| \leq B_{\phi}$, $|\phi^{\prime}(u)| \leq B_{\phi^{\prime}}$ and $|\phi^{\prime\prime}(u)| \leq B_{\phi^{\prime\prime}}$ for any $u \in \mathbb{R}$. 
\end{assumption}
This is satisfied for sigmoid as well as hyperbolic tangent activations. 
\begin{assumption}[Inputs, labels, and the loss function]
  \label{ass:boundedness}
  For constants $C_x, C_y, C_0 > 0$,
  inputs belong to $\sB_2^d(C_x)$, labels belong to $[-C_y, C_y]$, and loss is uniformly bounded by $C_0$ almost surely.
\end{assumption}
A consequence of the above is an essentially constant smoothness of the loss and the fact that the Hessian scales with $1/\sqrt{m}$
(see~\cref{sec:smoothness_and_curvature} for the proof).
These facts will play a key role in our stability analysis.
\footnote{We use notation $(a \vee b) \df \max\cbr{a, b}$ and $(a \wedge b) \df \min\cbr{a,b}$ throughout the paper.}
\begin{lemma}[Smoothness and curvature]
  \label{lem:eigenvalues}
  Fix $\bW, \btilW \in \R^{d \times m}$.
  Consider \cref{ass:Activation}, \cref{ass:boundedness}, and assume that
  $\sL_S(\btilW) \leq C_0^2$.
  Then, for any $S$,
\begin{align*}
  &\lmax(\nabla^2 \sL_S(\bW)) \leq \rho \quad \text{where} \quad \rho \df C_x^2 \pr{ B_{\phi^{\prime}}^2 +B_{\phi^{\prime\prime}} B_{\phi} +\frac{B_{\phi^{\prime\prime}}C_y}{\sqrt{m}}}~, \nonumber\\
  \min_{\alpha \in [0,1]}&\lambda_{\min}(\nabla^2 \sL_{S}(\btilW + \alpha(\bW - \btilW))) \geq - \frac{B_{\phi^{\prime\prime}}
    \big(B_{\phi^{\prime}}C_x  + C_0\big)}{\sqrt{m}} \cdot
    (1 \vee \|\bW - \btilW\|_F)~.
\end{align*}
\end{lemma}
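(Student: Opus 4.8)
The plan is to differentiate $\sL_S$ twice by hand and control the two resulting pieces separately, using only two elementary facts about the shallow network. Writing $r_i(\bW) \df f_{\bW}(\bx_i) - y_i$, the chain rule gives $\nabla^2 \sL_S(\bW) = \frac1n\sum_{i=1}^n \big( \nabla_{\bW} f_{\bW}(\bx_i)\, \nabla_{\bW} f_{\bW}(\bx_i)\tp + r_i(\bW)\, \nabla^2_{\bW} f_{\bW}(\bx_i)\big)$, i.e.\ a positive semi-definite Gauss--Newton part plus a residual-weighted curvature part. The first fact is that $\nabla_{\bW_k} f_{\bW}(\bx) = u_k \phi'(\ip{\bW_k,\bx})\bx$ with $|u_k| = 1/\sqrt m$, so $\|\nabla_{\bW} f_{\bW}(\bx)\|_F^2 = \sum_{k=1}^m u_k^2\, \phi'(\ip{\bW_k,\bx})^2 \|\bx\|^2 \le B_{\phi^{\prime}}^2 C_x^2$; integrating this along a segment also yields the global Lipschitz estimate $|f_{\bW}(\bx) - f_{\btilW}(\bx)| \le B_{\phi^{\prime}} C_x \|\bW - \btilW\|_F$. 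The second fact is that each summand $u_k\phi(\ip{\bW_k,\bx})$ of $f_{\bW}$ depends only on row $\bW_k$, so $\nabla^2_{\bW} f_{\bW}(\bx)$ is block diagonal with blocks $u_k \phi''(\ip{\bW_k,\bx})\,\bx\bx\tp$, whence $\|\nabla^2_{\bW} f_{\bW}(\bx)\|_{\mathrm{op}} = \max_k |u_k|\,|\phi''(\ip{\bW_k,\bx})|\,\|\bx\|^2 \le B_{\phi^{\prime\prime}} C_x^2 / \sqrt m$ --- the source of the $1/\sqrt m$ scaling of the curvature.

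For the upper bound I would invoke subadditivity of $\lmax$ (Weyl). The averaged Gauss--Newton term is positive semi-definite with $\lmax \le \frac1n\sum_i \|\nabla_{\bW} f_{\bW}(\bx_i)\|_F^2 \le B_{\phi^{\prime}}^2 C_x^2$. For the residual term I split $r_i = f_{\bW}(\bx_i) - y_i$: the $y_i$ piece contributes at most $C_y B_{\phi^{\prime\prime}} C_x^2 / \sqrt m$, while the $f_{\bW}(\bx_i)$ piece is tamed by the cancellation $u_k f_{\bW}(\bx) = \sum_{j=1}^m u_k u_j \phi(\ip{\bW_j,\bx})$ with $|u_k u_j| = 1/m$, which gives $|u_k f_{\bW}(\bx)| \le B_{\phi}$ and hence $\|f_{\bW}(\bx_i)\,\nabla^2_{\bW} f_{\bW}(\bx_i)\|_{\mathrm{op}} \le B_{\phi} B_{\phi^{\prime\prime}} C_x^2$ with \emph{no} $\sqrt m$ loss. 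Adding the three contributions yields exactly $\rho$.

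For the lower bound along the segment, set $\bW_\alpha \df \btilW + \alpha(\bW - \btilW)$. The Gauss--Newton term is positive semi-definite, so it only raises $\lmin$, and therefore $\lmin(\nabla^2\sL_S(\bW_\alpha)) \ge -\big\| \frac1n\sum_i r_i(\bW_\alpha)\,\nabla^2_{\bW} f_{\bW_\alpha}(\bx_i)\big\|_{\mathrm{op}} \ge -\frac{B_{\phi^{\prime\prime}}C_x^2}{\sqrt m}\cdot\frac1n\sum_i |r_i(\bW_\alpha)|$. It remains to bound the average residual at $\bW_\alpha$ using only the hypothesis $\sL_S(\btilW)\le C_0^2$: by the triangle inequality, Cauchy--Schwarz, and the Lipschitz estimate above, $\frac1n\sum_i|r_i(\bW_\alpha)| \le \big(\frac1n\sum_i r_i(\btilW)^2\big)^{1/2} + \frac1n\sum_i |f_{\bW_\alpha}(\bx_i) - f_{\btilW}(\bx_i)| \le \sqrt 2\,C_0 + \alpha\, B_{\phi^{\prime}} C_x \|\bW - \btilW\|_F$, which is at most $(\sqrt 2\,C_0 + B_{\phi^{\prime}} C_x)(1\vee\|\bW-\btilW\|_F)$ uniformly over $\alpha\in[0,1]$. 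Collecting the $C_x$-dependent constants gives the stated bound.

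I do not expect a genuine obstacle here --- the argument is essentially careful bookkeeping --- but three points need attention: (i) exploiting the row-wise block structure of $\nabla^2_{\bW} f_{\bW}$ to extract the $1/\sqrt m$ factor, which is what eventually makes the empirical loss ``almost convex'' for large $m$; (ii) the identity $|u_k f_{\bW}(\bx)|\le B_{\phi}$, without which the $B_{\phi}B_{\phi^{\prime\prime}}$ term of $\rho$ would carry a spurious $\sqrt m$; and (iii) transferring control of the residuals from $\btilW$, where the assumption $\sL_S(\btilW)\le C_0^2$ lives, to the whole segment via the global Lipschitz-in-$\bW$ bound on $f_{\bW}$, which is precisely what produces the $(1\vee\|\bW-\btilW\|_F)$ factor.
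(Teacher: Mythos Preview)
Your proposal is correct and follows essentially the same route as the paper: the same Gauss--Newton plus residual-curvature decomposition of $\nabla^2\sL_S$, the same block-diagonal observation giving $\|\nabla^2_{\bW} f_{\bW}(\bx)\|_{\mathrm{op}}\le B_{\phi''}C_x^2/\sqrt m$, and the same triangle-inequality/Lipschitz transfer of the residual from $\btilW$ to $\bW_\alpha$. Your handling of the $B_{\phi}B_{\phi''}$ term via $|u_k f_{\bW}(\bx)|\le B_{\phi}$ is just a repackaging of the paper's $|f_{\bW}(\bx)|\le \sqrt m\,B_{\phi}$, and your $\sqrt 2\,C_0$ is in fact the honest constant (the paper drops the $\sqrt 2$); as you note, the stated lower bound also silently absorbs a $C_x^2$ factor.
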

Given these assumptions we are ready to present a bound on the Generalisation Error of the gradient descent iterates.

\begin{theorem}[Generalisation Error]
\label{thm:gen_gap}
Consider Assumptions \ref{ass:Activation} and \ref{ass:boundedness}.
Fix $t > 0$. If $\eta \leq 1 / (2  \rho) $ and 
 \begin{align}
 \label{equ:sufficientwidth}
   &m \geq 144 (\eta t)^2 
   C_x^4 C_0^2 B_{\phi^{\prime\prime}}^2 \pr{ 4 B_{\phi^{\prime}} C_x \sqrt{ \eta t} + \sqrt{2}}^2\\
   \mathrm{then} \qquad &\E\br{\epsilon^{\text{Gen}}(\bW_{t+1}) \bmid \initparam} \nonumber
   \leq 
   b
   \pr{ \frac{\eta }{n} + \frac{\eta^2 t}{n^2} } \sum_{j=0}^{t} \E\br{\mathcal{L}_{S}(\bW_{j}) \bmid \initparam}
 \end{align}
 where $b= 16 e^3 C_x^{\frac32}B_{\phi^{\prime}}^2(1 + C_x^{\frac32}B_{\phi^{\prime}}^2)$.
\end{theorem}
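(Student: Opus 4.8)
The plan is to bound the generalisation error via on-average algorithmic stability: by the standard stability-implies-generalisation argument (e.g. the version tailored to the quadratic loss), $\E[\egen(\bW_{t+1}) \mid \initparam]$ is controlled by the average sensitivity of the \ac{GD} iterates to a resampled data point, i.e. by quantities of the form $\E[\,\ell(\bW_{t+1}, z_i) - \ell(\bW_{t+1}\repi, z_i)\,]$ summed appropriately over $i$. Since the loss is smooth and has bounded gradient (\cref{lem:eigenvalues} gives $\rho$-smoothness, and the factor $C_x^{3/2} B_{\phi'}^2$ in $b$ reflects a Lipschitz-type constant on $\ell$ coming from \cref{ass:Activation,ass:boundedness}), this reduces to tracking $\E\|\bW_{t+1} - \bW_{t+1}\repi\|_F$, the on-average parameter perturbation.

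First I would set up the perturbation recursion. Writing the \ac{GD} update for $S$ and for $S\repi$ and subtracting, $\bW_{t+1} - \bW_{t+1}\repi = (\bW_t - \bW_t\repi) - \eta(\nabla \sL_S(\bW_t) - \nabla \sL_{S\repi}(\bW_t\repi))$. I would split the gradient difference into (a) a ``same-data'' term $\nabla \sL_S(\bW_t) - \nabla \sL_S(\bW_t\repi)$, handled by smoothness/co-coercivity of the per-example losses — here the width condition \cref{equ:sufficientwidth} is what guarantees the iterates stay in a region where the Hessian's negative part is $O(1/\sqrt{m})$ and hence $\eta$-small, so the map $\bW \mapsto \bW - \eta \nabla \sL_S(\bW)$ is effectively non-expansive; and (b) a ``different-data'' term involving only the $i$-th coordinate, $\frac{\eta}{n}(\nabla \ell(\bW_t\repi, \tilde z_i) - \nabla \ell(\bW_t\repi, z_i))$, whose norm is $O(\eta/n)$ times a bound on $\|\nabla\ell\|$. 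Because $\|\nabla \ell(\bW, z)\|$ is itself controlled by $\sqrt{\ell(\bW, z)} \lesssim \sqrt{\sL_S(\bW_t)}$ along the trajectory (the gradient of a square scales with the residual), the inhomogeneous term at step $j$ carries a factor $\sqrt{\sL_S(\bW_j)}$ rather than a crude constant — this is exactly what produces the $\sum_j \E[\sL_S(\bW_j)]$ on the right-hand side rather than a vacuous $\sum_j 1$. Unrolling the recursion over $j = 0, \dots, t$ with the non-expansiveness of the same-data step gives $\E\|\bW_{t+1} - \bW_{t+1}\repi\|_F \lesssim \frac{\eta}{n}\sum_{j=0}^t \E[\sqrt{\sL_S(\bW_j)}]$, and feeding this back into the stability bound (which itself contributes another $\sqrt{\sL_S(\bW_{t+1})}$-type factor, producing the product that becomes $\sum_j \E[\sL_S(\bW_j)]$ after Cauchy–Schwarz and $\sum\sqrt{a_j}\sqrt{a_k} \le t\sum a_j$ bookkeeping) yields the two-term prefactor $\eta/n + \eta^2 t / n^2$.

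The main obstacle is step (a): controlling the same-data gradient-difference term without a positive curvature lower bound. The loss is genuinely non-convex, so $\bW \mapsto \bW - \eta\nabla\sL_S(\bW)$ is not exactly non-expansive; \cref{lem:eigenvalues} only says $\lambda_{\min}(\nabla^2\sL_S) \ge -O(1/\sqrt m)\cdot(1 \vee \|\bW-\btilW\|_F)$. So one must simultaneously (i) prove an a priori bound on $\|\bW_j - \bW_0\|_F$ — presumably $O(\eta t)$-ish, using $\sL_S(\bW_0) \le C_0^2$ and descent — to keep the curvature term uniformly small, and (ii) argue that along the segment between $\bW_t$ and $\bW_t\repi$ the integrated Hessian has operator norm below $\rho$ and negative part below roughly $1/(\eta t)$-scale, so that over $t$ compounding steps the amplification $(1 + O(\eta/\sqrt m)\cdot\text{stuff})^t$ stays bounded by a constant — and this is precisely where the quantitative form $m \ge 144(\eta t)^2 C_x^4 C_0^2 B_{\phi''}^2(4B_{\phi'}C_x\sqrt{\eta t} + \sqrt 2)^2$ comes from (the $(\eta t)^2$ and the $\sqrt{\eta t}$ inside the square track the trajectory radius $\|\bW_j - \bW_0\|$ and its effect through the $(1 \vee \|\cdot\|)$ factor). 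I would isolate this as a lemma: "under \cref{equ:sufficientwidth}, for all $j \le t$ the one-step perturbation map is $(1 + c\eta/n)$-Lipschitz in the relevant sense and $\|\bW_j - \bW_0\|_F \lesssim \sqrt{\eta t}$", and then the rest is the bookkeeping above. The constant $b = 16 e^3 C_x^{3/2}B_{\phi'}^2(1 + C_x^{3/2}B_{\phi'}^2)$ — in particular the $e^3$ — is the residue of bounding $(1 + O(1/(\eta t)))^{\eta t}$-type products by $e^{O(1)}$ after the width condition makes the exponent $O(1)$.
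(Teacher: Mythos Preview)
Your proposal is correct and follows essentially the same route as the paper: stability via the replace-one coupling, a recursion on the parameter perturbation split into a ``same-data'' expansiveness term and a ``different-data'' $O(\eta/n)$ forcing term, almost-non-expansiveness of the former via the Hessian lower bound from \cref{lem:eigenvalues} together with the a-priori trajectory bound $\|\bW_j-\bW_0\|_F\lesssim\sqrt{\eta t}$, and the observation $\|\nabla\ell(\bW,z)\|^2\lesssim\ell(\bW,z)$ to turn the forcing into $\sL_S(\bW_j)$.

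Two small points where the paper's execution differs from your sketch. First, the recursion is run on the \emph{squared} norm via Young's inequality $(a+b)^2\le(1+\tfrac1t)a^2+(1+t)b^2$ rather than on the norm; this is what makes the extra factor $t$ in $\eta^2 t/n^2$ appear cleanly and gives the stated constants. Second, the expansiveness factor for the same-data step is $(1+O(\eta\epsilon))$ with $\eta\epsilon=O(1/t)$ under the width condition --- not $(1+c\eta/n)$ as you wrote in your proposed lemma; the $1/n$ enters only through the forcing term, and the $e^3$ is $(1+\tfrac1t)^t\cdot(1-2\eta\epsilon)^{-t}\le e\cdot e^2$. You have the mechanism right elsewhere in your outline (the $(1+O(\eta/\sqrt m)\cdot\text{stuff})^t$ discussion), so this is a slip rather than a gap. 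The almost-co-coercivity step is proved in the paper by a convexification trick (introducing $\psi(\bW)=\sL_{S^{\backslash i}}(\bW)-\langle\nabla\sL_{S^{\backslash i}}(\bW_t^{(i)}),\bW\rangle$ and its counterpart and showing convexity along the relevant segment), which is the concrete realisation of what you flag as the main obstacle.
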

\begin{proof}
For full proof see Appendix \ref{sec:thm:gen_gap:proof} with sketch proof in Section \ref{sec:proof_sketch}. 
\end{proof}
Theorem \ref{thm:gen_gap} then provides a formal upper bound on the Generalisation Gap of a shallow neural network trained with \ac{GD}.
Specifically, provided the network is sufficiently wide the generalisation gap can be controlled through both: the gradient descent trajectory evaluated at the Empirical Risk  $\sum_{t=0}^{T} \E[\mathcal{L}_{S}(\bW_{t})~|~\initparam]$, as well as step size and number of iterations in the multiplicative factor. 
Similar bounds for the test performance of gradient descent have been given by \cite{lei2020fine}, although in our case the risk is non-convex, and thus, a more delicate bound on the Optimisation Gap is required. This is summarised within the lemma, which is presented in an ``oracle'' inequality manner. 
\begin{lemma}[Optimisation Error]
\label{lem:OptError}
Consider Assumptions \ref{ass:Activation} and \ref{ass:boundedness}.
Fix $t > 0$. If $\eta \leq 1 / (2  \rho) $, then
\begin{align*}
    \frac{1}{t} \sum_{j=0}^{t} \mathcal{L}_{S}(\bW_{j})
    \leq 
    \min_{\bW \in \mathbb{R}^{d \times m }}
    \Big\{ 
    \mathcal{L}_{S}(\bW) 
    + 
    \frac{\|\bW - \bW_{0}\|_F^2}{\eta t}
    + 
    \frac{\widetilde{b} \|\bW - \bW_0\|_F^3 }{\sqrt{m} }
    \Big\}
    + \widetilde{b}C_0 \cdot \frac{ (\eta t)^{\frac32}}{\sqrt{m}}
\end{align*}
where $\widetilde{b} = C_x^2  B_{\phi^{\prime\prime}} \pr{B_{\phi^{\prime}}C_x  + C_0}$.
\end{lemma}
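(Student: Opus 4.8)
The plan is to run a standard "descent lemma plus telescoping" argument adapted to the mildly non-convex setting, where the usual convexity inequality $\sL_S(\bW) \geq \sL_S(\bW_j) + \ip{\nabla \sL_S(\bW_j), \bW - \bW_j}$ is replaced by a version that pays an error term controlled by the (small, $O(1/\sqrt m)$) negative curvature from \cref{lem:eigenvalues}. First I would record the descent inequality: since $\eta \leq 1/(2\rho)$ and $\nabla^2 \sL_S \preceq \rho \bI$ everywhere by \cref{lem:eigenvalues}, a one-step Taylor expansion gives $\sL_S(\bW_{j+1}) \leq \sL_S(\bW_j) - \tfrac{\eta}{2}\|\nabla \sL_S(\bW_j)\|_F^2$, and in particular the iterates are non-increasing in $\sL_S$ so they stay in the sublevel set where the curvature bound applies; this also bounds $\|\bW_j - \bW_0\|_F$ in terms of $\sqrt{\eta t}$ via $\sum_j \eta\|\nabla\sL_S(\bW_j)\|_F^2 \leq 2(\sL_S(\bW_0)-\sL_S(\bW_{t+1})) \leq 2C_0$, which is where the $(\eta t)^{3/2}/\sqrt m$ additive term and the cube $\|\bW-\bW_0\|_F^3$ ultimately originate.

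Next, fix an arbitrary comparator $\bW$. Along the segment from $\bW_j$ to $\bW$, Taylor's theorem with the lower eigenvalue bound of \cref{lem:eigenvalues} yields
\[
\sL_S(\bW) \geq \sL_S(\bW_j) + \ip{\nabla \sL_S(\bW_j), \bW - \bW_j} - \frac{B_{\phi''}(B_{\phi'}C_x + C_0)}{2\sqrt m}\,(1 \vee \|\bW - \bW_j\|_F)\,\|\bW - \bW_j\|_F^2~,
\]
so that $\ip{\nabla \sL_S(\bW_j), \bW_j - \bW} \leq \sL_S(\bW_j) - \sL_S(\bW) + \frac{c}{\sqrt m}(1\vee\|\bW-\bW_j\|_F)\|\bW-\bW_j\|_F^2$ with $c = B_{\phi''}(B_{\phi'}C_x+C_0)/2$. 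Now combine this with the standard GD identity $\eta\ip{\nabla\sL_S(\bW_j), \bW_j - \bW} = \tfrac12\|\bW_j - \bW\|_F^2 - \tfrac12\|\bW_{j+1}-\bW\|_F^2 + \tfrac{\eta^2}{2}\|\nabla\sL_S(\bW_j)\|_F^2$, and use the descent inequality to absorb the $\tfrac{\eta^2}{2}\|\nabla\sL_S(\bW_j)\|_F^2$ term into $\eta(\sL_S(\bW_j)-\sL_S(\bW_{j+1}))$. Rearranging gives, for each $j$,
\[
\eta\,\sL_S(\bW_j) \;-\; \eta\,\sL_S(\bW) \;\leq\; \tfrac12\|\bW_j-\bW\|_F^2 - \tfrac12\|\bW_{j+1}-\bW\|_F^2 + \eta(\sL_S(\bW_j)-\sL_S(\bW_{j+1})) + \frac{\eta c}{\sqrt m}(1\vee\|\bW-\bW_j\|_F)\|\bW-\bW_j\|_F^2~.
\]
Summing over $j = 0,\ldots,t$, the first pair telescopes to at most $\tfrac12\|\bW_0-\bW\|_F^2$, the second pair telescopes to at most $\eta\,\sL_S(\bW_0) \leq \eta C_0$, and it remains to bound the curvature residual $\sum_{j=0}^t \frac{\eta c}{\sqrt m}(1\vee\|\bW-\bW_j\|_F)\|\bW-\bW_j\|_F^2$. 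Using $\|\bW - \bW_j\|_F \leq \|\bW - \bW_0\|_F + \|\bW_j - \bW_0\|_F$ and the a priori bound $\|\bW_j - \bW_0\|_F \lesssim \sqrt{\eta t}$ (from the step-size-weighted gradient sum above, since $\|\bW_j-\bW_0\|_F \le \sum_{s<j}\eta\|\nabla\sL_S(\bW_s)\|_F \le \sqrt{j\sum_{s<j}\eta^2\|\nabla\sL_S(\bW_s)\|_F^2} \lesssim \sqrt{\eta t}\cdot\sqrt{\eta C_0}$, so in fact $\|\bW_j - \bW_0\|_F \lesssim \eta\sqrt{tC_0}$), the residual splits into a term of order $\frac{\eta t}{\sqrt m}\|\bW-\bW_0\|_F^3$ and a term of order $\frac{\eta t}{\sqrt m}(\eta t)^{1/2}C_0 = C_0\frac{(\eta t)^{3/2}}{\sqrt m}$ after dividing by $\eta t$. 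Dividing the whole summed inequality by $\eta t$ then produces exactly
\[
\frac1t\sum_{j=0}^t \sL_S(\bW_j) \leq \sL_S(\bW) + \frac{\|\bW-\bW_0\|_F^2}{\eta t} + \frac{\widetilde b\,\|\bW-\bW_0\|_F^3}{\sqrt m} + \widetilde b C_0\frac{(\eta t)^{3/2}}{\sqrt m}~,
\]
and taking the minimum over $\bW$ gives the claim; tracking the constants through the $1\vee(\cdot)$ split and the $\sqrt{\eta t}$ a priori bound is what fixes $\widetilde b = C_x^2 B_{\phi''}(B_{\phi'}C_x + C_0)$.

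The main obstacle I anticipate is handling the $(1 \vee \|\bW - \bW_j\|_F)$ factor in the negative-curvature bound cleanly: it forces the a priori control $\|\bW_j - \bW_0\|_F = O(\sqrt{\eta t})$ on the trajectory (which is itself a consequence of the descent lemma, so the argument is slightly circular and must be ordered carefully — establish the trajectory bound first from $\eta \le 1/(2\rho)$ alone, then plug it in), and it is responsible for both the cubic comparator term and the final $(\eta t)^{3/2}/\sqrt m$ slack. Everything else — the descent lemma, the GD three-point identity, and the telescoping — is routine; the only real care is in bookkeeping the constants so that a single $\widetilde b$ absorbs all the curvature contributions, and in checking that the sublevel-set hypothesis $\sL_S(\btilW) \leq C_0^2$ needed by \cref{lem:eigenvalues} holds along the segment (which follows since $\sL_S(\bW_j) \leq \sL_S(\bW_0) \leq C_0$ and, where the comparator segment is used, only the lower curvature bound is needed which \cref{lem:eigenvalues} states along the whole segment $\btilW + \alpha(\bW - \btilW)$).
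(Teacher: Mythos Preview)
Your proposal is correct and follows essentially the same route as the paper: descent lemma from smoothness, an approximate-convexity inequality derived from the lower eigenvalue bound of \cref{lem:eigenvalues} (the paper phrases this via an auxiliary convex function $g(\alpha)=\sL_S(\bW_t+\alpha(\bW-\bW_t))+\tfrac{\tilde b}{\sqrt m}\tfrac{\alpha^2}{2}(1\vee\|\bW-\bW_t\|)^3$ and uses $g(1)-g(0)\geq g'(0)$, but this is exactly your second-order Taylor bound), then the three-point identity, telescoping, and the a priori trajectory bound $\|\bW_j-\bW_0\|_F\lesssim\sqrt{\eta t}$ to split the curvature residual into the $\|\bW-\bW_0\|_F^3$ and $(\eta t)^{3/2}$ pieces. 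The only cosmetic difference is that the paper substitutes the approximate-convexity bound directly into the descent inequality so that $-\langle\bW-\bW_t,\nabla\rangle-\tfrac{\eta}{2}\|\nabla\|^2$ collapses exactly to $\tfrac{1}{2\eta}(\|\bW-\bW_t\|^2-\|\bW-\bW_{t+1}\|^2)$, which avoids your extra $\eta C_0$ telescoping term; this is immaterial to the bound.
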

\begin{proof}
  Full proof is given in Appendix \ref{sec:lem:OptError} with sketch proof in Section \ref{sec:proof_sketch}. 
\end{proof}

By combining \cref{thm:gen_gap} and \cref{lem:OptError} we get the following where $\oracle$ is defined in~\cref{eq:oracle}:
\begin{corollary}
  \label{cor:gen_gap_oracle}
  Assume the same as in \cref{thm:gen_gap} and \cref{lem:OptError}.
  Then,
  \begin{align*}
      \E\br{\egen(\bW_T) \bmid \bW_0, \bu}
  \leq
    C\cdot \frac{\eta T}{n} \pr{ 1 + \frac{\eta T}{n} }
  \E\br{\oracle
  \bmid \bW_0,\bu
  }
  \end{align*}  
  where $\oracle$ is defined in \cref{eq:oracle} and $C$ is a constant independent from $n, T, \eta$.
\end{corollary}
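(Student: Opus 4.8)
The plan is to chain \cref{thm:gen_gap} with \cref{lem:OptError} and then recognise the resulting regularised-risk minimisation as a constant multiple of $\E[\oracle\mid\initparam]$. First I would instantiate \cref{thm:gen_gap} with its free parameter $t$ set to $T-1$, so that the left-hand side reads $\E[\egen(\bW_T)\mid\initparam]$ and the sum runs over $j=0,\dots,T-1$ (the width condition \cref{equ:sufficientwidth} then being the one with $\eta t$ replaced by $\eta(T-1)$), giving
\begin{align*}
  \E\br{\egen(\bW_T)\bmid\initparam}
  \le
  b\pr{\frac{\eta}{n}+\frac{\eta^2 (T-1)}{n^2}}\sum_{j=0}^{T-1}\E\br{\mathcal{L}_S(\bW_j)\bmid\initparam}.
\end{align*}
Pulling a factor $T-1$ out of the sum and using $T-1\le T$ in the prefactor, the right-hand side is at most $\frac{b\,\eta T}{n}\pr{1+\frac{\eta T}{n}}$ times the running average $\frac1{T-1}\sum_{j=0}^{T-1}\E[\mathcal{L}_S(\bW_j)\mid\initparam]$, so it remains to control that average.

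This is precisely what \cref{lem:OptError} provides. Since that lemma holds for every fixed data set $S$, I would take the conditional expectation given $\initparam$ (leaving the $\min_\bW$ inside the expectation), which yields
\begin{align*}
  \frac1{T-1}\sum_{j=0}^{T-1}\E\br{\mathcal{L}_S(\bW_j)\bmid\initparam}
  &\le
  \E\br{\min_{\bW\in\R^{d\times m}}\cbr{\mathcal{L}_S(\bW)+\frac{\|\bW-\bW_0\|_F^2}{\eta(T-1)}+\frac{\widetilde{b}\,\|\bW-\bW_0\|_F^3}{\sqrt m}}\bmid\initparam}\\
  &\qquad+\frac{\widetilde{b}\,C_0\,(\eta(T-1))^{3/2}}{\sqrt m}.
\end{align*}
Plugging this into the previous display already produces a bound of the claimed shape, with the right-hand side above standing in for $\E[\oracle\mid\initparam]$; the only work left is to show it is $\sO\pr{\E[\oracle\mid\initparam]}$.

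For that I would evaluate the inner minimisation at $\bW^\star\in\argmin_\bW\{\mathcal{L}_S(\bW)+\|\bW-\bW_0\|_F^2/(\eta(T-1))\}$. Comparing its objective value at $\bW^\star$ with that at $\bW=\bW_0$ and invoking $\mathcal{L}_S(\bW_0)\le C_0$ from \cref{ass:boundedness} gives the a priori estimate $\|\bW^\star-\bW_0\|_F^2\le C_0\,\eta(T-1)$, so the cubic penalty at $\bW^\star$ is at most $\widetilde{b}\,(C_0\eta(T-1))^{3/2}/\sqrt m$. Hence the inner $\min$ is bounded by $\min_\bW\{\mathcal{L}_S(\bW)+\|\bW-\bW_0\|_F^2/(\eta(T-1))\}$ plus a term of order $(\eta(T-1))^{3/2}/\sqrt m$, and the additive remainder of \cref{lem:OptError} has the same order. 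Using $\eta(T-1)\asymp\eta T$, the leading term is exactly $\min_\bW\{\mathcal{L}_S(\bW)+\sO(\|\bW-\bW_0\|_F^2/(\eta T))\}$ as in \cref{eq:oracle}, while under the overparameterisation hypothesis the $\sO\pr{(\eta T)^{3/2}/\sqrt m}$ remainders are lower order (they vanish as $\eta T\to\infty$ once $m$ is a sufficiently large power of $\eta T$, cf.\ the scaling $m\geqC(\eta T)^5$ in \cref{sec:contributions}) and are absorbed into the $\sO(\cdot)$ defining $\oracle$. Gathering the universal constants into a single $C=\sO(b)$ then yields the corollary.

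The main obstacle is this last reconciliation step. \cref{lem:OptError} outputs a regularised-risk minimisation carrying a non-standard cubic penalty together with an extra additive term, whereas $\oracle$ is the clean object $\min_\bW\{\mathcal{L}_S(\bW)+\sO(\|\bW-\bW_0\|_F^2/(\eta T))\}$; bridging the two requires the a priori norm control at the near-minimiser (through $\mathcal{L}_S(\bW_0)\le C_0$) together with the overparameterisation assumption to certify that the cubic and additive pieces are genuinely negligible. The remaining ingredients — the prefactor bookkeeping and the off-by-one in the step index — are routine.
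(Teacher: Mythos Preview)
Your proposal is correct and follows the paper's intended approach: the paper states this corollary simply as the combination of \cref{thm:gen_gap} and \cref{lem:OptError} without spelling out details, and its proof of the closely related \cref{cor:risk_noise_free} in \cref{sec:additional_proofs} proceeds in exactly the way you describe (instantiate \cref{thm:gen_gap} at $t=T-1$, apply \cref{lem:OptError} to the resulting running average, and absorb the cubic and additive $(\eta T)^{3/2}/\sqrt{m}$ remainders using the overparameterisation hypothesis). Your observation that the reconciliation with the asymptotic definition of $\oracle$ is the only non-routine step is accurate, and your handling of it via the a~priori bound $\|\bW^\star-\bW_0\|_F^2\le C_0\,\eta(T-1)$ matches the paper's treatment.
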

The above combined with \cref{lem:OptError}, and the fact that $\sL_S(\bW_T) = \min_{t \in [T]} \sL_S(\bW_{t})$ gives us:
\begin{corollary}
  \label{cor:risk_noise_free}
  Assume the same as in \cref{thm:gen_gap} and \cref{lem:OptError}.
  Then,
  \begin{align*}
    \E\br{\sL(\bW_T) \bmid \bW_0,\bu}
    \leq
    \pr{1
    +
    C \cdot \frac{\eta T}{n} \pr{ 1 + \frac{\eta T}{n} }
    }
  \E\br{\oracle
  \bmid \bW_0,\bu
  }~.
  \end{align*}
\end{corollary}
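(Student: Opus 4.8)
The plan is to split the risk into its empirical counterpart and the generalisation gap,
\[
  \E\br{\sL(\bW_T) \bmid \bW_0,\bu}
  =
  \E\br{\sL_S(\bW_T) \bmid \bW_0,\bu}
  +
  \E\br{\egen(\bW_T) \bmid \bW_0,\bu}~,
\]
and to bound each piece by $\E\br{\oracle \bmid \bW_0,\bu}$. The generalisation term is handled immediately by \cref{cor:gen_gap_oracle}, which contributes the factor $C\cdot\tfrac{\eta T}{n}(1+\tfrac{\eta T}{n})$. So the whole task reduces to producing the ``$1+$'' term, i.e.\ to showing $\E\br{\sL_S(\bW_T)\bmid\bW_0,\bu} \le \E\br{\oracle\bmid\bW_0,\bu}$; since the left-hand side here is a deterministic function of $(S,\bW_0,\bu)$, it suffices to establish the pointwise inequality $\sL_S(\bW_T) \le \oracle$ and then take conditional expectations.

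For the pointwise bound I would first note that the empirical risk is non-increasing along the \ac{GD} trajectory: because $\eta \le 1/(2\rho)$ and $\lmax(\nabla^2\sL_S(\bW)) \le \rho$ for every $\bW$ by \cref{lem:eigenvalues}, the descent lemma gives $\sL_S(\bW_{t+1}) \le \sL_S(\bW_t) - \eta(1-\tfrac{\eta\rho}{2})\|\nabla\sL_S(\bW_t)\|_F^2 \le \sL_S(\bW_t)$. Hence $\sL_S(\bW_T) = \min_{0\le j\le T}\sL_S(\bW_j) \le \tfrac1T\sum_{j=0}^T\sL_S(\bW_j)$, and \cref{lem:OptError} (whose hypotheses are exactly $\eta\le 1/(2\rho)$ together with our standing assumptions) upper bounds this average by
\[
  \min_{\bW\in\R^{d\times m}}\cbr{\sL_S(\bW) + \frac{\|\bW-\bW_0\|_F^2}{\eta T} + \frac{\widetilde{b}\,\|\bW-\bW_0\|_F^3}{\sqrt m}} + \widetilde{b}\,C_0\cdot\frac{(\eta T)^{3/2}}{\sqrt m}~.
\]

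It remains to match the right-hand side above with the definition of $\oracle$ in \cref{eq:oracle}, which is the one place needing care. Comparing the minimised expression with the choice $\bW=\bW_0$ shows that its minimiser $\bW^\star$ obeys $\|\bW^\star-\bW_0\|_F^2 \le \eta T\,\sL_S(\bW_0) \le C_0\,\eta T$ (all three terms being nonnegative and the loss bounded by $C_0$), so the cubic penalty satisfies $\widetilde{b}\,\|\bW^\star-\bW_0\|_F^3/\sqrt m \leqC \widetilde{b}\,\|\bW^\star-\bW_0\|_F^2\cdot\sqrt{\eta T}/\sqrt m$, while the trailing term is $\widetilde{b}C_0(\eta T)^{3/2}/\sqrt m$. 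Under the overparameterisation required by \cref{thm:gen_gap} (and a fortiori in the regime $m\geqC(\eta T)^5$ emphasised in \cref{sec:contributions}) both of these are lower order relative to the quadratic term $\|\bW-\bW_0\|_F^2/(\eta T)$, hence absorbed into the $\sO\pr{\|\bW-\bW_0\|_F^2/(\eta T)}$ slack in \cref{eq:oracle}; this yields $\sL_S(\bW_T)\le\oracle$. Combining with \cref{cor:gen_gap_oracle}, taking $\E\br{\cdot\bmid\bW_0,\bu}$ and factoring out $\E\br{\oracle\bmid\bW_0,\bu}$ then gives the claim. I expect the only real obstacle to be exactly this last bookkeeping step — certifying that the $m^{-1/2}$ curvature remainders from \cref{lem:OptError} are genuinely dominated by the quadratic part of $\oracle$ under the stated width — which is where the time-dependent parameterisation $m\geqC(\eta T)^5$ is really used; everything else is a direct combination of \cref{cor:gen_gap_oracle}, \cref{lem:OptError}, and monotonicity of the empirical risk along \ac{GD}.
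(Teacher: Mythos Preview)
Your proposal is correct and follows essentially the same approach as the paper: decompose the risk into empirical risk plus generalisation gap, use the monotonicity $\sL_S(\bW_T)=\min_{j\le T}\sL_S(\bW_j)\le \tfrac1T\sum_{j=0}^T\sL_S(\bW_j)$, and bound the average via \cref{lem:OptError}. The only cosmetic difference is that you invoke \cref{cor:gen_gap_oracle} for the generalisation term and then apply \cref{lem:OptError} a second time to control $\sL_S(\bW_T)$, whereas the paper goes back to \cref{thm:gen_gap} directly, factors out the common quantity $\tfrac1T\sum_j\E[\sL_S(\bW_j)\mid\initparam]$, and applies \cref{lem:OptError} once; your final bookkeeping step (absorbing the $m^{-1/2}$ curvature remainders into the $\sO(\cdot)$ defining $\oracle$) is handled at the same informal level in the paper's own proof.
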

\begin{proof}
  The proof is given in \cref{sec:additional_proofs}.
\end{proof}
Finally, $\oracle$ is controlled by the norm of the \ac{NTK} solution, which establishes connection to the \ac{NTK}-based risk bounds:
\begin{theorem}[Connection between $\oracle$ and \ac{NTK}]
  \label{lem:oracle_NTK}
  Consider \cref{ass:Activation} and that $\eta T = n$.
  Moreover, assume that entries of $\bW_0$ are i.i.d., that $\lmin(\bK) \geqC 1/n$,
  and assume that $\bu \sim \Udist$ independently from all sources of randomness.
  Then, with probability least $1-\delta$ for $\delta \in (0,1)$, over $(\initparam)$,
  \begin{align*}
    \oracle
    = \stilO_P\pr{
      \frac1n \ip{\by, (n \bK)^{-1} \by}
    }
    \quad \mathrm{as} \quad n \to \infty~.
  \end{align*}
\end{theorem}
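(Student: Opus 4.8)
The plan is to control $\oracle$ by exhibiting a single good competitor $\bW$ in the minimisation defining $\oracle$ and bounding the three resulting terms. Since $\oracle = \min_{\bW}\{\sL_S(\bW) + \sO(\|\bW-\bW_0\|_F^2/(\eta T))\}$ and we are told $\eta T = n$, it suffices to find $\bW$ with $\sL_S(\bW)$ small (ideally zero) and $\|\bW-\bW_0\|_F^2 \lesssim \ip{\by,(n\bK)^{-1}\by}$, up to the lower-order $1/\sqrt{m}$ corrections appearing in \cref{lem:OptError}. The natural candidate is the minimum-norm (Moore--Penrose) solution to the \emph{linearised} least-squares problem at initialisation: let $\bphi(\bx) \df (\nabla_\bW f_\bW(\bx))(\bW_0) \in \R^{dm}$ be the \ac{NTF} map, stack these into a design matrix, and set $\bW^{\mathrm{pinv}}$ (reshaped) to be the least-norm vector interpolating $(\bphi(\bx_i), y_i)_{i=1}^n$. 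First I would record that the Gram matrix of $\bphi$ over the sample is exactly (or, for finite $m$, concentrates around) the \ac{NTK} matrix $n\bK$ — this is standard and uses $\bu \sim \Udist$ and the i.i.d.\ entries of $\bW_0$ — so that $\|\bW^{\mathrm{pinv}} - \bW_0\|_F^2 = \ip{\by, \widehat{\bK}^{-1}\by}$ where $\widehat{\bK}$ is the empirical \ac{NTF} Gram matrix, and then transfer to $\ip{\by,(n\bK)^{-1}\by}$ via a concentration bound on $\|\widehat{\bK} - n\bK\|_{\mathrm{op}}$ together with the assumed lower bound $\lmin(\bK) \geqC 1/n$ to keep the inverse stable; this is where the ``w.h.p.\ over $(\initparam)$'' and the $\stilO_P$ (suppressing $\log(1/\delta)$ and polylog-in-$n$ factors) enter.

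The second point is the empirical risk of this competitor. Crucially, $\bW^{\mathrm{pinv}}$ interpolates the \emph{linearised} network, not the true network $f_{\bW}$, so $\sL_S(\bW^{\mathrm{pinv}})$ is not exactly zero; it equals $\tfrac{1}{2n}\sum_i (f_{\bW^{\mathrm{pinv}}}(\bx_i) - y_i)^2$, and since the linear model interpolates, the residual is the second-order Taylor remainder $f_{\bW^{\mathrm{pinv}}}(\bx_i) - f_{\bW_0}(\bx_i) - \ip{\bphi(\bx_i), \bW^{\mathrm{pinv}} - \bW_0}$, which by \cref{lem:eigenvalues} (curvature $\lesssim 1/\sqrt{m}$) is $O(\|\bW^{\mathrm{pinv}} - \bW_0\|_F^2 / \sqrt{m})$ per coordinate. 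Wait — one must also account for $f_{\bW_0}$ not matching $y_i$; this is handled by choosing $\initnoise$ appropriately (as the excerpt mentions) so that $f_{\bW_0}$ is small, or by folding $f_{\bW_0}(\bx_i)$ into the targets of the linear problem. Either way, $\sL_S(\bW^{\mathrm{pinv}}) = O(\|\bW^{\mathrm{pinv}}-\bW_0\|_F^4/m)$, which is dominated by the $\|\bW-\bW_0\|_F^2/(\eta T) = \|\bW-\bW_0\|_F^2/n$ term once $m$ is polynomially large (consistent with the $m \geqC (\eta T)^5$ standing assumption). Plugging $\bW = \bW^{\mathrm{pinv}}$ into \cref{lem:OptError}'s right-hand side, the dominant contribution is $\|\bW^{\mathrm{pinv}} - \bW_0\|_F^2/(\eta T) = \tfrac1n \ip{\by, \widehat{\bK}^{-1}\by} = \stilO_P(\tfrac1n \ip{\by,(n\bK)^{-1}\by})$, while the cubic term $\widetilde b\|\bW-\bW_0\|_F^3/\sqrt m$ and the additive $\widetilde b C_0 (\eta t)^{3/2}/\sqrt m$ are again lower-order by the parameterisation; since $\oracle$ is defined in the $\eta T \to \infty$ limit (so those $1/\sqrt m$ terms vanish under $m \geqC (\eta T)^5$), we obtain the claimed equality.

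The main obstacle I anticipate is the transfer from the finite-width empirical \ac{NTF} Gram matrix $\widehat{\bK}$ to the population/infinite-width \ac{NTK} matrix $n\bK$: one needs $\|\widehat{\bK} - n\bK\|_{\mathrm{op}}$ small relative to $\lmin(n\bK) \geqC 1$, which requires a matrix-concentration argument over the $m$ i.i.d.\ hidden units (each contributing a rank-considerations term) and careful tracking of how the quadratic form $\ip{\by, \widehat{\bK}^{-1}\by}$ degrades — via $\ip{\by,\widehat{\bK}^{-1}\by} \leq \ip{\by,(n\bK)^{-1}\by}(1 + \|\widehat{\bK}-n\bK\|_{\mathrm{op}}/\lmin(n\bK) + \cdots)$ or a Weyl-type bound — so that the multiplicative error is $1 + o(1)$ and can be absorbed into $\stilO_P$. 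A secondary technical nuisance is keeping the second-order remainder bounds uniform over the sample (a union bound over the $n$ points, costing another $\log n$) and ensuring the event on which all concentration statements hold has probability at least $1-\delta$ simultaneously. Everything else — the pseudo-inverse norm identity, the Taylor remainder estimate via \cref{lem:eigenvalues}, and the final substitution into \cref{lem:OptError} — is routine bookkeeping.
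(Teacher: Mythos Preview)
Your proposal is correct and follows essentially the same route as the paper: linearise $f_{\bW}$ around $\bW_0$ via a second-order Taylor expansion (the paper's \cref{lem:NTK}), plug in a minimum-norm NTK-feature interpolant as the competitor in $\oracle$, read off $\|\bW-\bW_0\|_F^2$ as the NTK quadratic form, and pass from the empirical Gram matrix $\hat{\bK}$ to $\bK$ by entrywise Hoeffding concentration together with the assumed $\lmin(\bK)\geqC 1/n$. The only cosmetic differences are that the paper folds $f_{\bW_0}$ into the targets (working with $\by-\bhy_0$ throughout and bounding $\|\bhy_0\|$ at the end via Hoeffding over $\bu$), and for the $\bK$-statement it chooses the competitor $\bW_{\hat\balpha}=\bPhi_0(n\bK)^{-1}(\by-\bhy_0)+\bW_0$ directly rather than taking the $\hat{\bK}$-pseudo-inverse and then perturbing the inverse---but this is the same concentration argument applied in the other order.
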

\begin{proof}
  The proof is given in \cref{sec:W_W0_NTK}.
\end{proof}

\subsection{Proof Sketch of Theorem \ref{thm:gen_gap} and Lemma \ref{lem:OptError}}
\label{sec:sketch}
\label{sec:proof_sketch}
Throughout this sketch let expectation be understood as $\E[\cdot] = \E[\cdot ~|~ \initparam]$.
For brevity we will also vectorise parameter matrices, so that $\bW \in \mathbb{R}^{dm}$ and thus $\|\cdot\|_2 = \|\cdot\|_F$.

Let us begin with the bound on the generalisation gap, for which we use the notion of algorithmic stability \citep{bousquet2002stability}.
With $\bW_t\repi$ denoting a \ac{GD} iterate with the dataset with the resampled data point $S\repi$, the Generalisation Gap can be rewritten~\citep[Chap. 13]{shalev2014understanding}
\begin{align*}
  \egen
  =
  \E[\mathcal{L}(\bW_{T})- \mathcal{L}_{S}(\bW_T)]
     = 
    \frac{1}{n} \sum_{i=1}^{n}\E[ \ell(\bW_T,\widetilde{z}_i) - \ell(\bW_T\repi,\widetilde{z}_i)]~.
\end{align*}
The equality suggests that if trained parameters do not vary much when a data point is resampled, that is $\bW_T \approx \bW_T \repi$, then the Generalisation Gap will be small.
Indeed, prior work \citep{hardt2016train,kuzborskij2018data} has been dedicated to proving uniform bounds on $L_2$ norm $\max_{i \in [n]}\|\bW_T - \bW_T \repi\|_2 $.
In our case, we consider the more refined squared $L_2$ expected stability and consider a bound for smooth losses similar to that of \cite{lei2020fine}:
\begin{lemma}
\label{lem:NNGenL2}
Consider Assumptions \ref{ass:Activation} and \ref{ass:boundedness}. Then,
\begin{align*}
  \egen
    & \leqC
    \sqrt{\E\br{\sL_{S}(\bW_{T})}}
    \sqrt{\frac{1}{n} \sum_{i=1}^{n} \E\br{ \|\bW_{T} - \bW_{T}\repi\|_{\text{op}}^2}}
    +
    \frac{1}{n} \sum_{i=1}^{n} \E\br{ \|\bW_{T}-\bW_{T}\repi\|_{\text{op}}^2}~.
\end{align*}
\end{lemma}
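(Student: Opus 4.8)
The plan is to run the standard on-average stability argument, but with enough care to pull out an \emph{operator}-norm (not Frobenius-norm) dependence on the parameter perturbation $\bW_T - \bW_T\repi$, since that is what the later stability analysis will exploit. Starting from the representation $\egen = \frac1n\sum_{i=1}^n \E[\ell(\bW_T, \widetilde z_i) - \ell(\bW_T\repi, \widetilde z_i)]$ stated above (with $\widetilde z_i = (\widetilde\bx_i, \widetilde y_i)$), I would first rewrite each summand by factoring the difference of squares. Setting $\delta_i \df f_{\bW_T}(\widetilde\bx_i) - f_{\bW_T\repi}(\widetilde\bx_i)$ and applying $a^2 - b^2 = (a-b)(a+b)$ with $a = f_{\bW_T}(\widetilde\bx_i) - \widetilde y_i$ and $b = f_{\bW_T\repi}(\widetilde\bx_i) - \widetilde y_i$ yields the exact identity
\[
\ell(\bW_T, \widetilde z_i) - \ell(\bW_T\repi, \widetilde z_i) = \delta_i\pr{f_{\bW_T\repi}(\widetilde\bx_i) - \widetilde y_i} + \tfrac12 \delta_i^2~,
\]
so it remains to control a cross term and a quadratic term after averaging and taking expectations.

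The second ingredient is the Lipschitz estimate: for any $\bW,\bW' \in \R^{d\times m}$ and any $\bx\in\sB_2^d(C_x)$,
\[
\abs{f_\bW(\bx) - f_{\bW'}(\bx)} \le C_x\, B_{\phi^{\prime}}\, \|\bW - \bW'\|_{\text{op}}~.
\]
I would prove this by applying the mean value theorem to $\phi$ coordinate-wise, giving $f_\bW(\bx) - f_{\bW'}(\bx) = \sum_{k=1}^m u_k\phi^{\prime}(\xi_k)\ip{\bW_k - \bW'_k, \bx} = \bx\tp(\bW - \bW')\bv$ with $v_k \df u_k\phi^{\prime}(\xi_k)$; since $\bu\in\Uset$ we have $\|\bv\|_2^2 \le \sum_k u_k^2 B_{\phi^{\prime}}^2 = B_{\phi^{\prime}}^2$, and Cauchy--Schwarz gives $|\bx\tp(\bW-\bW')\bv| \le \|\bx\|_2\|\bW-\bW'\|_{\text{op}}\|\bv\|_2$. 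This immediately bounds $\E[\delta_i^2] \le C_x^2 B_{\phi^{\prime}}^2\,\E[\|\bW_T - \bW_T\repi\|_{\text{op}}^2]$, which disposes of the quadratic term $\frac{1}{2n}\sum_i\E[\delta_i^2]$.

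For the cross term I would apply Cauchy--Schwarz twice (pointwise in $i$, then over the average) and use $\pr{f_{\bW_T\repi}(\widetilde\bx_i) - \widetilde y_i}^2 = 2\ell(\bW_T\repi,\widetilde z_i)$ to get
\[
\frac1n\sum_{i=1}^n \E\br{\delta_i\pr{f_{\bW_T\repi}(\widetilde\bx_i) - \widetilde y_i}} \le \sqrt{\frac1n\sum_{i=1}^n \E[\delta_i^2]}\cdot\sqrt{\frac2n\sum_{i=1}^n \E\br{\ell(\bW_T\repi,\widetilde z_i)}}~,
\]
and then invoke the resampling symmetry: because $z_1,\dots,z_n,\widetilde z_i$ are i.i.d.\ and $\widetilde z_i$ is itself a coordinate of $S\repi$, relabelling shows $(\bW_T\repi,\widetilde z_i)\stackrel{d}{=}(\bW_T, z_i)$, hence $\E[\ell(\bW_T\repi,\widetilde z_i)] = \E[\ell(\bW_T,z_i)]$ and $\frac1n\sum_i\E[\ell(\bW_T\repi,\widetilde z_i)] = \E[\sL_S(\bW_T)]$. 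Plugging in the operator-norm bound on $\E[\delta_i^2]$ into both the cross term and the quadratic term and absorbing $C_x, B_{\phi^{\prime}},\sqrt2$ into $\leqC$ gives exactly the stated inequality. The only genuinely delicate point is the operator-norm refinement of the Lipschitz constant: the naive route via $\|\nabla_\bW f_\bW(\bx)\|_2 \le C_x B_{\phi^{\prime}}$ would only produce the Frobenius norm, and recovering the operator norm — crucial so that the subsequent stability computation can use the low effective rank of $\bW_T - \bW_T\repi$ — hinges on recognising the sum $\sum_k u_k\phi^{\prime}(\xi_k)\ip{\bW_k-\bW'_k,\bx}$ as the bilinear form $\bx\tp(\bW-\bW')\bv$; the rest is routine bookkeeping with Cauchy--Schwarz and the standard symmetry identity.
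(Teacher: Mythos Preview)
Your proof is correct and follows essentially the same skeleton as the paper's: difference-of-squares identity on the loss, Cauchy--Schwarz on the cross term, a Lipschitz bound on $f_\bW - f_{\bW'}$, and the resampling symmetry $\E[\ell(\bW_T\repi,\widetilde z_i)] = \E[\ell(\bW_T,z_i)]$ (which the paper leaves implicit but which you spell out). The one genuine refinement you add is the bilinear-form identity $f_\bW(\bx)-f_{\bW'}(\bx)=\bx\tp(\bW-\bW')\bv$ with $\|\bv\|_2\le B_{\phi'}$, which cleanly delivers the operator-norm Lipschitz constant; the paper instead reuses its earlier Cauchy--Schwarz computation (eqs.\ for $|f_\bW-f_{\widetilde\bW}|$), which under its vectorisation convention only yields the Frobenius norm, so your argument actually closes a small gap between the paper's stated lemma and its written proof.

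One correction to your commentary, though: the operator-norm sharpening is \emph{not} ``crucial'' for the downstream stability computation in this paper. Immediately after stating the lemma the authors remark that ``while the stability is only required on the spectral norm, our bound will be on the element wise $L_2$-norm i.e.\ Frobenius norm, which upper bounds the spectral norm,'' and the parameter-stability lemma that follows bounds $\|\bW_{t+1}-\bW_{t+1}\repi\|_F^2$. So the low-rank structure of the perturbation is never exploited here; the operator norm in the lemma is a free improvement that the rest of the analysis does not cash in.
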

\begin{proof}
  The proof is given in \cref{sec:lem:NNGenL2}.
\end{proof}

To this end, we bound $\|\bW_{T}-\bW_{T}\repi\|_2^2$ recursively by using the definition of the gradient iterates and applying Young's inequality $(a+b)^2 \leq (1+\frac{1}{t})a^2 + (1+t)b^2$ to get that for any $i \in [n]$:
\begin{align}
  &\|\bW_{t+1} - \bW_{t+1} \repi\|_2^2 \label{eq:recurse_1}\\
  &=
    \|\bW_{t} - \bW_{t} \repi - \eta (\nabla \sL_{S}(\bW_t) - \nabla \sL_{S\repi}(\bW_t\repi))\|_2^2 \nonumber\\
  &=
    \|\bW_{t} - \bW_{t} \repi - \eta (\nabla \sL_{S\deli}(\bW_t) - \nabla \sL_{S\deli}(\bW_t\repi))
    + \frac{\eta}{n} (\nabla \ell(\bW_t, z_i) - \nabla \ell(\bW_t\repi, \tilde{z}_i)\|_2^2 \nonumber\\
  &\leq
  (1+\frac1t)
    \underbrace{ 
    \|\bW_{t} - \bW_{t}\repi - \eta\pr{\nabla \mathcal{L}_{S\deli}(\bW_t) -  \nabla \mathcal{L}_{S\deli}(\bW_t\repi)}\|_2^2
    }_{\text{Expansiveness of the Gradient Update}} \nonumber\\
    &+
    \frac{\eta^2(1+t)}{n^2}
    \pr{\|\nabla \ell(\bW_t,z_i)\|_2^2 + \|\nabla \ell(\bW_t\repi,\widetilde{z}_i)\|_2^2}~. \label{eq:recurse_2}
\end{align}
The analysis of the \emph{expansiveness of the gradient update} then completes the recursive relationship~\citep{hardt2016train,kuzborskij2018data,lei2020fine,richards2021weakly}.
Clearly, a trivial handling of the term (using smoothness) would yield a bound $(1+\eta \rho)^2 \|\bW_t - \bW\repi_t\|_2^2$, which leads to an exponential blow-up when unrolling the recursion.
So, we must ensure that the expansivity coefficient is no greater than one.
While the classic analysis of \cite{hardt2016train} controls this by having a polynomially decaying learning rate schedule,
here we leverage on observation of \cite{richards2021weakly} that the negative Eigenvalues of the loss's Hessian $\nabla^2 \mathcal{L}_{S^{\deli}}(\cdot) \in \mathbb{R}^{dm \times dm}$ can control the expansiveness.
This boils down to \cref{lem:eigenvalues}:
\begin{align*}
  \lmin \pr{\int_0^1 \nabla^2 \sL_{S\deli}(\bW_t\repi + \alpha (\bW_t - \bW_t\repi)) \diff \alpha}
  \geqC
    - \frac{1 \vee \|\bW_t - \bW_t\repi\|_2}{\sqrt{m}}
  \geqC
  - \sqrt{\frac{\eta t}{m}}
\end{align*}
where $\|\bW_t - \bW_t\repi\|_2 \leqC \sqrt{\eta t}$ comes from smoothness and standard analysis of \ac{GD} (see \cref{lem:Useful}).
Note that the eigenvalue scales $1/\sqrt{m}$ because Hessian has a block-diagonal structure.
\paragraph{Controlling expansiveness of \ac{GD} updates.}
Abbreviate $\Delta = \nabla \sL_{S\deli}(\bW_t) - \nabla \sL_{S\deli}(\bW_t\repi)$ and $\nabla^2 = \int_0^1 \nabla^2 \sL_{S\deli}(\bW_t\repi + \alpha (\bW_t - \bW_t\repi)) \diff \alpha$.
Now we ``open'' the squared norm
\begin{align*}
  \|\bW_{t} - \bW_{t}\repi - \eta \Delta \|_2^2
  =
    \|\bW_{t} - \bW_{t}\repi\|_2^2
  + \eta^2 \lf\|\Delta \rt\|_2^2
   - 2 \eta \ip{\bW_{t} - \bW_{t}\repi, \Delta}
\end{align*}
and use Taylor's theorem for gradients to have
\begin{align*}
  \eta^2 \lf\|\Delta \rt\|_2^2 - 2 \eta \ip{\bW_{t} - \bW_{t}\repi, \Delta}
  &=
    \ip{\Delta, \pr{ \eta^2 \nabla^2  - 2 \eta \bI} (\bW_t - \bW_t\repi)}\\
  &=
    \ip{\bW_t - \bW_t\repi, \eta \nabla^2 \pr{ \eta \nabla^2  - 2 \bI} (\bW_t - \bW_t\repi)}\\
  &\leqC
    \pr{\eta \sqrt{\frac{\eta t}{m}} + \eta^2 \frac{\eta t}{m}} \|\bW_t - \bW_t\repi\|_2^2
\end{align*}
where we noted that $(\eta \nabla^2  - 2 \bI)$ has only negative eigenvalues due to assumption $\eta \leq 1 / (2 \rho)$.

By rearranging this inequality one can verify that the gradient operator is \emph{approximately co-coercive}~\citep{hardt2016train}.
This result, crucial to our proof, is supported by some empirical evidence: In \cref{fig:monotone_gd} we show a synthetic experiment where we train a shallow neural network with sigmoid activation. 
\begin{align*}
  \|\bW_{t} - \bW_{t}\repi - \eta \Delta \|_2^2
  \leqC
  \pr{1 + \eta \sqrt{\frac{\eta t}{m}}} \|\bW_t - \bW_t\repi\|_2^2~.
\end{align*}
Plugging this back into \cref{eq:recurse_1}-\eqref{eq:recurse_2} we unroll the recursion:
\begin{align*}
  \|\bW_{t+1} - \bW_{t+1} \repi\|_2^2
     & \leqC 
       \frac{\eta^2 t}{n^2}
       \sum_{j=0}^{t}
       \pr{1 + \frac1t}^t \pr{1 + \eta \sqrt{\frac{\eta t}{m}}}^t
       \pr{ \|\nabla \ell(\bW_j,z_i)\|_2^2 + \|\nabla \ell(\bW_j\repi,\widetilde{z}_i)\|_2^2 }~.
\end{align*}
The above suggest that to prevent exponential blow-up (and ensure that the expansiveness remains under control), it's enough to have $\eta \sqrt{\frac{\eta t}{m}} \leq 1/t$, which is an assumption of the theorem, enforcing relationship between $\eta t$ and $m$.
\paragraph{On-average stability bound.}
Now we are ready to go back to \cref{lem:NNGenL2} and complete the proof of the generalisation bound.
Since we have a squared loss we have $\|\nabla \ell(\bW_j,z_i)\|_2^2 \leqC \ell(\bW_{j},z_i)$, and thus, when summing over $i \in [n]$ we recover the Empirical risk  $\frac{1}{n} \sum_{i=1}^{n} \|\nabla \ell(\bW_j,z_i)\|_2^2 \leqC \mathcal{L}_{S}(\bW_j)$. Finally, noting that $\E[ \ell(\bW_{j},z_i)] = \E[ \ell(\bW_{j}\repi,\widetilde{z}_i)] $ we then arrive at the following bound on the expected squared $L_2$ stability
\begin{align*}
    \frac{1}{n} \sum_{i=1}^{n}\E[ \|\bW_{t+1} - \bW_{t+1} \repi\|_2^2]
    \lesssim  \frac{\eta^2 t}{n^2} \sum_{j=0}^{t} \E[\mathcal{L}_{S}(\bW_j)]~.
\end{align*}
Note that the Generalisation Gap is directly controlled by the Optimisation Performance along the path of the iterates, matching the shape of bounds by \cite{lei2020fine}.
Using smoothness of the loss as well as that  $\mathcal{L}_{S}(\bW_{T}) \leq \frac{1}{T} \sum_{j=0}^{T} \mathcal{L}_{S}(\bW_{j})$, the risk is bounded when $m \gtrsim (\eta t)^{3}$ as
\begin{align*}
    \E[\mathcal{L}(\bW_{T})]
    \lesssim
    \pr{1 + \frac{\eta T}{n}\pr{1 + \frac{\eta T}{n}}} 
    \frac{1}{T} \sum_{j=0}^{T} \E[\mathcal{L}_{S}(\bW_j)]~.
\end{align*}
\paragraph{Optimisation error.}
We must now bound the Optimisation Error averaged across the iterates. Following standard optimisation arguments for gradient descent on a smooth objective we then get
\begin{align*}
    \mathcal{L}_{S}(\bW_{t+1}) 
    \leq 
    \mathcal{L}_{S}(\bW_{t}) - \frac{\eta}{2} \|\nabla \mathcal{L}_{S}(\bW_{t})\|_2^2
\end{align*}
At this point convexity would usually be applied to upper bound $\mathcal{L}_{S}(\bW_{t}) \leq \mathcal{L}_{S}(\bhW) - \langle \nabla \mathcal{L}_{S}(\bW_{t}), \bhW -  \bW_{t}\rangle $ where $\bhW$ is a minimiser of $\mathcal{L}_{S}(\cdot)$. The analysis differs from this approach in two respects. Firstly, we do not consider a minimiser of $\mathcal{L}_{S}(\cdot)$ but a general point $\bW$, which is then optimised over at the end. Secondly, the objective is not convex, and therefore, we leverage that the Hessian negative Eigenvalues are on the order of $-\frac{1}{\sqrt{m}}$ to arrive at the upper bound (\cref{lem:eigenvalues}):
\begin{align*}
    \mathcal{L}_{S}(\bW_{t})
    \lesssim 
    \underbrace{ 
    \mathcal{L}_{S}(\bW) 
    - 
    \langle \nabla \mathcal{L}_{S}(\bW_{t}), \bW - \bW_{t}\rangle}_{\text{Convex Component}} + 
    \underbrace{ 
    \frac{1}{\sqrt{m}} \|\bW - \bW_{t}\|_2^3
    }_{\text{Negative Hessian Eigenvalues}}
\end{align*}
After this point, the standard optimisation analysis for gradient descent can be performed i.e.  
$- \langle \nabla \mathcal{L}_{S}(\bW_{t}), \bW - \bW_{t}\rangle  - \frac{\eta}{2} \|\mathcal{L}_{S}(\bW_{t})\|_2^2 = \frac{1}{\eta} \pr{ \|\bW - \bW_{t} \|_2^2 - \|\bW - \bW_{t+1}\|_2^2}$, which then yields to a telescoping sum over $t=0,1,\dots,T-1$. This leads to the upper bound for $\bW \in \mathbb{R}^{m \times d}$ 
\begin{align*}
     \frac{1}{T} \sum_{j=0}^{T} \E[\mathcal{L}_{S}(\bW_j)]
     \lesssim 
     \mathcal{L}_{S}(\bW) + \frac{\|\bW - \bW_{0}\|_2^2}{\eta T} 
     + 
     \frac{1}{\sqrt{m}} \cdot \frac{1}{T} \sum_{j=0}^{T} \|\bW - \bW_{j}\|_2^3
\end{align*}
where we plug this into the generalisation bound and take a minimum over $\bW \in \mathbb{R}^{m d}$.
\begin{figure}
  \centering
  \includegraphics[width=9cm]{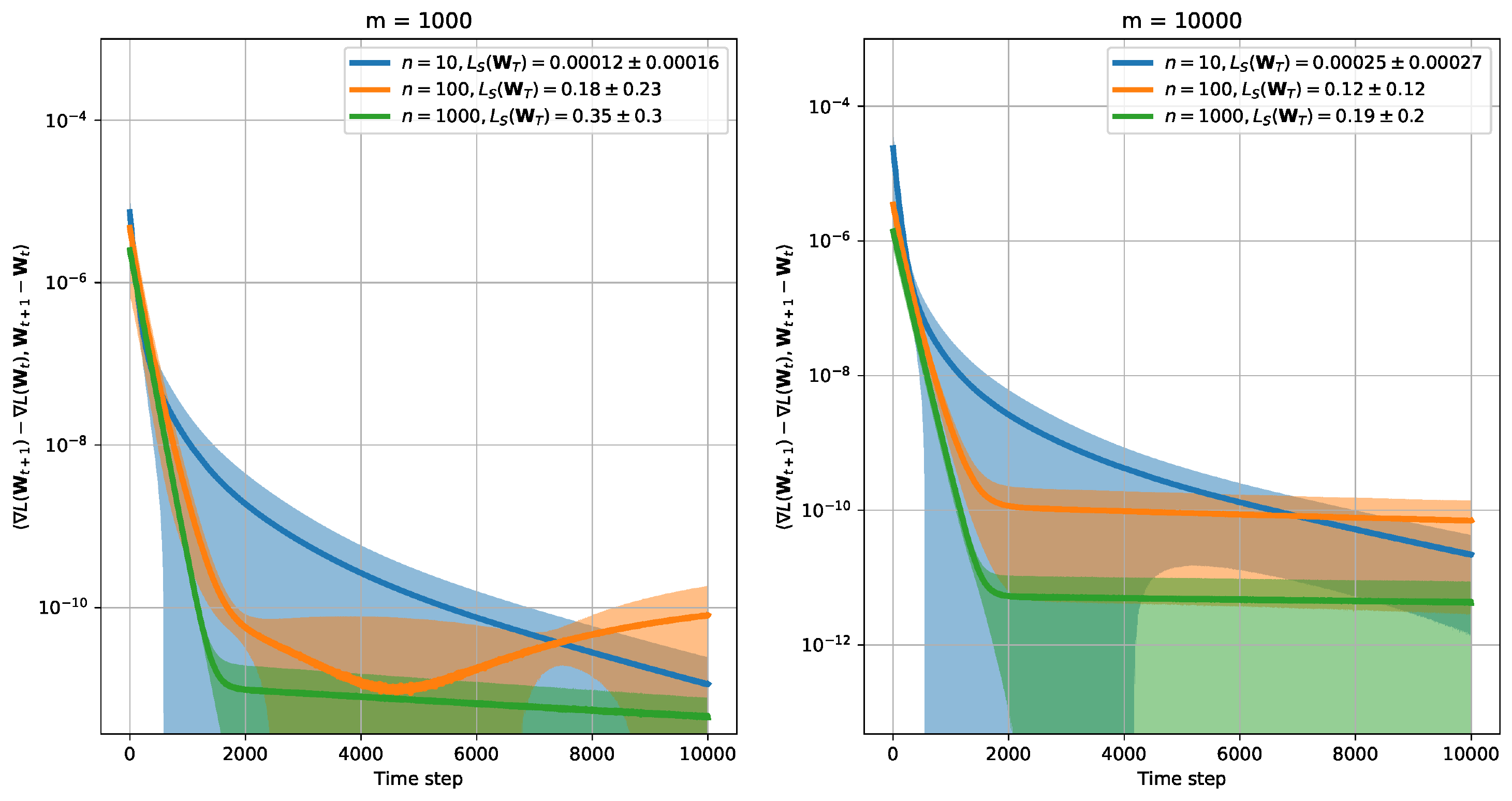}
  \caption{The gradient operator of an overparameterised shallow neural network is almost-monotone: Here, inputs are uniformly distributed on a $10$-sphere and labels are generated by $\bx \mapsto \frac{e^{\ip{\bWstar, \bx}}}{1+e^{\ip{\bWstar, \bx}}}$ where $\bWstar$ is once sampled from $\sN(\bzero, \bI_{10})$ at the beginning of the training. The initialisation $\bW_0$ is sampled from $\sN(\bzero, \bI_{10})$ and each experiment is performed $10$ times (which is reflected by standard deviation).
    The shallow network is then trained by \ac{GD} with $\eta = 1$, $T=10^4$, and sigmoid activation.
  }
  \label{fig:monotone_gd}
\end{figure}

\section{Additional Related Literature}

\textbf{Stability of Gradient Descent}. Algorithmic stability \citep{bousquet2002stability} has been used to investigate the generalisation performance of \ac{GD} in a number of works \citep{hardt2016train,kuzborskij2018data,chen2018stability,lei2020fine,richards2021weakly}.
While near optimal bounds are achieved for convex losses, the non-convex case aligning with this work is more challenging. Within \citep{hardt2016train,kuzborskij2018data} in particular a restrictive $1/t$ step size is required if $t$ iterations of gradient descent are performed.
This was partially alleviated within \citep{richards2021weakly} which demonstrated that the magnitude of Hessian's negative eigenvalues can be leveraged to allow for much larger step sizes to be taken.
In this work we also utilise the magnitude of Hessian's negative Eigenvalue, although a more delicate analysis is required as both: the gradient is potentially unbounded in our case; and the Hessian's negative eigenvalues need to be bounded when evaluated across multiple points.
We note the scaling of the Hessian network has also previously been observed within \citep{liu2020linearity,liu2020loss}, with empirical investigations into the magnitude of the Hessian's negative Eigenvalues conducted within \citep{sagun2016eigenvalues,yuan2018stagewise}.
Stability-based generalisation bounds were also shown for certain types of non-convex functions, such \acl{PL} functions~\citep{charles2018stability,lei2021sharper}
and functions with strict saddles among stationary points \citep{gonen2017fast}.
In line with our results, \cite{rangamani2020interpolating} shows that for interpolating kernel predictors, minimizing the norm of the \acs{ERM} solution minimizes stability.

Several works have also recently demonstrated consistency of \ac{GD} with early stopping for training shallow neural networks in the presence of label noise.
The concurrent work \citep{ji2021early} also showed that shallow neural networks trained with gradient descent are consistent, although their approach distinctly different to ours e.g. leveraging structure of logistic loss as well as connections between shallow neural networks and random feature models / NTK.
Earlier, \cite{li2020gradient} showed consistency under a certain gaussian-mixture type parametric classification model.
In a more general nonparametric setting, \cite{kuzborskij2021nonparametric} showed that early-stopped \ac{GD} is consistent when learning Lipschitz regression functions.

Certain notions of stability, such the uniform stability (taking $\sup$ over the data rather than expectation) allows to prove \textbf{high-probability risk bounds (h.p.)}~\citep{bousquet2002stability} which are known to be optimal up to log-terms~\citep{feldman2019high,bousquet2020sharper}.
In this work we show bounds in expectation, or equivalently, control the first moment of a generalisation gap.
To prove a h.p. bounds while enjoying benefits of a stability in expectation, one would need to control higher order moments~\citep{maurer2017second,abou2019exponential}, however often this is done through a higher-order uniform stability: Since our proof closely relies on the stability in expectation, it is not clear whether our results can be trivially stated with high probability.

\section*{Acknowledgements}
D.R. is supported by the EPSRC and MRC through the
OxWaSP CDT programme (EP/L016710/1), and the London Mathematical Society ECF-1920-61.

\bibliographystyle{plainnat}
\bibliography{References}

\appendix

\newpage

\section*{Notation}
In the following denote $\ell(\bw, (\bx, y)) \df \tfrac12 (f_{\bW}(\bx) - y)^2$.
Unless stated otherwise, we work with vectorised quantities so $\bW \in \mathbb{R}^{dm}$ and therefore simply interchange $\|\cdot\|_2$ with $ \|\cdot\|_F$.
We also use notation $(\bW)_k$ so select $k$-th block of size $d$, that is $(\bW)_k = [W_{(d-1)k+1}, \ldots, W_{dk}]\tp$.
We use notation $(a \vee b) \df \max\cbr{a, b}$ and $(a \wedge b) \df \min\cbr{a,b}$ throughout the paper.
Let $(\bW_t\repi)_t$ be the iterates of \ac{GD} obtained from the data set with a resampled data point:
\[
  S\repi \df (z_1, \ldots, z_{i-1}, \widetilde{z}_i, z_{i+1}, \ldots, z_n)
\]
where $\widetilde{z}_i$ is an independent copy of $z_i$.
Moreover, denote a remove-one version of $S$ by
\[
  S\deli \df (z_1, \ldots, z_{i-1}, z_{i+1}, \ldots, z_n)~.
\]

\section{Smoothness and Curvature of the Empirical Risk (Proof of \cref{lem:eigenvalues})}
\label{sec:smoothness_and_curvature}

\begin{nameddef}[\cref{lem:eigenvalues} (restated)]
  Fix $\bW, \btilW \in \R^{d \times m}$.
  Consider \cref{ass:Activation}, \cref{ass:boundedness}, and assume that
  $\sL_S(\btilW) \leq C_0^2$.
  Then, for any $S$,
\begin{align}
  &\lmax(\nabla^2 \sL_S(\bW)) \leq \rho \quad \text{where} \quad \rho \df C_x^2 \pr{ B_{\phi^{\prime}}^2 +B_{\phi^{\prime\prime}} B_{\phi} +\frac{B_{\phi^{\prime\prime}}C_y}{\sqrt{m}}}~, \nonumber\\
  \min_{\alpha \in [0,1]}&\lambda_{\min}(\nabla^2 \sL_{S}(\btilW + \alpha(\bW - \btilW))) \geq - \frac{B_{\phi^{\prime\prime}}
    \big(B_{\phi^{\prime}}C_x  + C_0\big)}{\sqrt{m}} \cdot
                           (1 \vee \|\bW - \btilW\|_F)~. \label{equ:EigBound}
\end{align}
\end{nameddef}

\begin{proof}
Vectorising allows the loss's Hessian to be denoted
\begin{align}
\label{equ:LossHessian}
    \nabla^2 \ell(\bW,z) 
    = 
    \nabla f_{\bW}(\bx) \nabla f_{\bW}(\bx)^{\top} 
    + 
    \nabla^2 f_{\bW}(\bx) (f_{\bW}(\bx) - y)
\end{align}
where 
\begin{align*}
    \nabla f_{\bW}(\bx) 
    = 
    \begin{pmatrix}
    u_1 \bx \phi^{\prime}\pr{\langle (\bW)_1,\bx\rangle}  \\
    u_2 \bx \phi^{\prime}\pr{\langle (\bW)_2,\bx\rangle} \\
    \vdots \\
    u_m \bx \phi^{\prime}\pr{\langle (\bW)_m,\bx\rangle}
    \end{pmatrix}
    \in \mathbb{R}^{dm}
\end{align*}
and $\nabla^2 f_{\bW}(\bx)  \in \mathbb{R}^{dm \times dm}$ with
\begin{align*}
    \nabla^2 f_{\bW}(\bx)
    = 
    \begin{pmatrix}
    u_1 \bx \bx^{\top} \phi^{\prime\prime}(\langle (\bW)_1,\bx \rangle )  & 0 & 0 & \dots & 0 \\
    0 & u_2 \bx \bx^{\top} \phi^{\prime\prime}(\langle (\bW)_2,\bx \rangle )& 0 &  \dots & 0 \\
    \vdots & \ddots & \ddots & \vdots & \vdots \\
    0 & 0 & 0 & \dots  & u_m \bx \bx^{\top} \phi^{\prime\prime}(\langle (\bW)_m,\bx \rangle ) 
    \end{pmatrix}
\end{align*}
Note that we then immediately have with $\bv = (\bv_1,\bv_2,\dots,\bv_m) \in \mathbb{R}^{dm}$ with $\bv_i \in \mathbb{R}^{d}$ 
\begin{align}
\label{equ:HessianBound}
    \|\nabla^2 f_{\bW}(\bx)\|_2 
    & = \max_{\bv: \|\bv\|_2 \leq 1} 
    \sum_{j=1}^{m} u_j \langle \bv_j,\bx\rangle^2 \phi^{\prime\prime}(\langle (\bW)_j,\bx \rangle)
    \nonumber \\
    & \leq 
    \frac{1}{\sqrt{m}} 
    \|\bx\|_2^2 
    B_{\phi^{\prime\prime}}
    \max_{\bv: \|\bv\|_2 \leq 1} 
    \sum_{j=1}^{m} \|\bv_j\|_2^2 \nonumber 
    \\
    & \leq  
    \frac{C^2_x  B_{\phi^{\prime\prime}}}{\sqrt{m}}~.
\end{align}
We then see that the maximum Eigenvalue of the Hessian is upper bounded for any $\bW \in \mathbb{R}^{dm}$, that is
\begin{align}
  \label{eq:grad_is_lip_smooth}
    \|\nabla^2 \ell(\bW,z) \|_2
    & \leq 
    \|\nabla f_{\bW}(\bx)\|_2^2 
    + 
    \|\nabla^2 f_{\bW}(\bx)\|_2|f_{\bW}(\bx) - y|\\
    & \leq 
    C_{x}^2 B_{\phi^{\prime}}^2 + \frac{C_x^2 B_{\phi^{\prime\prime}}}{\sqrt{m}}( \sqrt{m} B_{\phi} + C_y)
\end{align}
and therefore the objective is $\rho$-smooth with $\rho = C_x^2  \big( B_{\phi^{\prime}}^2 + B_{\phi^{\prime\prime}} B_{\phi} +\frac{B_{\phi^{\prime\prime}} C_y}{\sqrt{m}}\big)$.

Let us now prove the lower bound \eqref{equ:EigBound}.
For some fixed $\bW, \btilW \in \R^{d \times m}$ define
\begin{align*}
  \bW(\alpha) \df \btilW + \alpha(\bW - \btilW) \qquad \alpha \in [0,1]~.
\end{align*}
Looking at the Hessian in \eqref{equ:LossHessian}, the first matrix is positive semi-definite, therefore
\begin{align*}
    \lambda_{\min}(\nabla^2 \sL_{S}(\bW(\alpha))) 
    & \geq 
    - \Big( \max_{i=1,\dots,n} \big\{ \|\nabla^2 f_{\bW(\alpha)}(\bx_i)\|_2\big\} \Big)
    \frac{1}{n}\sum_{i=1}^{n}|f_{\bW(\alpha)}(\bx_i) - y_i|\\
    & \geq - \frac{C_x^2 B_{\phi^{\prime\prime}}}{\sqrt{m}}
      \cdot
    \frac{1}{n}\sum_{i=1}^{n}|f_{\bW(\alpha)}(\bx_i) - y_i|
\end{align*}
where we have used the upper bound on $\|\nabla^2 f_{\bW}(\bx_i)\|_2 $. Adding and subtracting $f_{\btilW}(\bx_i)$ inside the absolute value we then get 
\begin{align*}
    \frac{1}{n}\sum_{i=1}^{n}|f_{\bW(\alpha)}(\bx_i) - y_i|
    &\leq 
    \frac{1}{n}\sum_{i=1}^{n}|f_{\bW(\alpha)}(\bx_i) - f_{\btilW}(\bx_i)| + \frac{1}{n}\sum_{i=1}^{n} |f_{\btilW}(\bx_i) - y_i|\\
    & \leq 
    B_{\phi^{\prime}}C_x \|\bW(\alpha) - \btilW\|_2
    + 
    \sqrt{ \sL_{S}(\btilW)} \\
    & \leq 
    B_{\phi^{\prime}}C_x \|\bW(\alpha) - \btilW\|_2
    + 
    \sqrt{\sL_{S}(\bW_0)} \\
    & \leq 
     \big(B_{\phi^{\prime}}C_x  + C_0\big)(1 \vee \|\bW(\alpha) - \btilW\|_2)
\end{align*}
where for the second term we have simply applied Cauchy-Schwarz inequality.
For the first term, we used that for any $\bW,\widetilde{\bW} \in \mathbb{R}^{d m}$ we see that
\begin{align}
  \label{eq:f_diff_1}
    |f_{\bW}(\bx) - f_{\widetilde{\bW}}(\bx)| 
    & \leq 
    \frac{1}{\sqrt{m}} \sum_{i=1}^{m} |\phi(\langle (\bW)_i,\bx\rangle) - \phi(\langle (\widetilde{\bW})_i,\bx\rangle)|\\
    & \leq
    \frac{B_{\phi^{\prime}}}{\sqrt{m}}\sum_{i=1}^{m}
    \abs{\langle (\bW)_i - (\widetilde{\bW})_i,\bx\rangle} \nonumber \\
    & \leq 
    C_x B_{\phi^{\prime}}\|\bW - \widetilde{\bW}\|_2. \label{eq:f_diff_2}
\end{align}
Bringing everything together yields the desired lower bound 
\begin{align*}
    \lambda_{\min}(\nabla^2 \sL_{S}(\bW(\alpha)))
  &\geq 
    -
    \frac{C_x^2}{\sqrt{m}} B_{\phi^{\prime\prime}}
  \big(B_{\phi^{\prime}}C_x  + C_0\big)(1 \vee \|\bW(\alpha) - \btilW\|_2)\\
  &\geq 
    -
    \frac{C_x^2}{\sqrt{m}} B_{\phi^{\prime\prime}}
    \big(B_{\phi^{\prime}}C_x  + C_0\big)(1 \vee \|\bW - \btilW\|_2)~.
\end{align*}
This holds for any $\alpha \in [0,1]$, therefore, we took the minimum. 
\end{proof}

\section{Optimisation Error Bound (Proof of Lemma \ref{lem:OptError}) }
\label{sec:lem:OptError}
In this section we present the proof for the Optimisation Error term. We begin by quoting the result which we set to prove. 
\begin{nameddef}[\cref{lem:OptError} (restated)]
Consider Assumptions \ref{ass:Activation} and \ref{ass:boundedness}.
Fix $t > 0$. If $\eta \leq 1 / (2  \rho) $, then
\begin{align*}
    \frac{1}{t} \sum_{j=0}^{t} \mathcal{L}_{S}(\bW_{j})
    \leq 
    \min_{\bW \in \mathbb{R}^{d \times m }}
    \Big\{ 
    \mathcal{L}_{S}(\bW) 
    + 
    \frac{\|\bW - \bW_{0}\|_F^2}{\eta t}
    + 
    \frac{\widetilde{b} \|\bW - \bW_0\|_F^3 }{\sqrt{m} }
    \Big\}
    + \widetilde{b}C_0 \cdot \frac{ (\eta t)^{\frac32}}{\sqrt{m}}
\end{align*}
where $\widetilde{b} = C_x^2  B_{\phi^{\prime\prime}} \pr{B_{\phi^{\prime}}C_x  + C_0}$.
\end{nameddef}
\begin{proof}
Using \cref{lem:eigenvalues} as well as that $\eta \rho \leq 1$ from the assumption within the theorem yields for $t \geq 0 $
\begin{align*}
    \sL_{S}(\bW_{t+1}) 
    & \leq 
    \sL_{S}(\bW_t)
    - \eta\big(1 - \frac{\eta \rho}{2}\big)\|\nabla \sL_{S}(\bW_t)\|_2^2\\
    & \leq 
    \sL_{S}(\bW_t)
    - \frac{\eta}{2} \|\nabla \sL_{S}(\bW_t)\|_2^2.
\end{align*}
Fix some $\bW \in \mathbb{R}^{d m}$. We then use the following inequality which will be proven shortly:
\begin{align}
\label{equ:Inequality}
    \sL_{S}(\bW_t) 
    \leq 
    \sL_{S}(\bW) 
    - 
    \langle \bW - \bW_t, \nabla \sL_{S}(\bW_t)\rangle 
    + 
    \frac{\widetilde{b}}{\sqrt{m}} 
    \big( 1 \vee \|\bW - \bW_t\|_2\big)^3
\end{align}
Plugging in this inequality we then get 
\begin{align*}
    \sL_{S}(\bW_{t+1}) 
    \leq
    \sL_{S}(\bW) 
    - 
    \langle \bW - \bW_t, \nabla \sL_{S}(\bW_t)\rangle
    - \frac{\eta}{2}\|\nabla \sL_{S}(\bW_t)\|_2^2 
    + 
    \frac{\widetilde{b}}{\sqrt{m}} 
    \big( 1 \vee \|\bW - \bW_t\|_2\big)^3~.
\end{align*}
Note that we can rewrite 
\begin{align*}
    & -  \langle \bW - \bW_t, \nabla \sL_{S}(\bW_t)\rangle
    - \frac{\eta}{2}\|\nabla \sL_{S}(\bW_t)\|_2^2 \\
    & = 
    \frac{1}{\eta} \langle \bW - \bW_t,\bW_{t+1} - \bW_t\rangle 
    - 
    \frac{1}{2\eta}\|\bW_{t+1} - \bW_t\|_2^2\\
    & = 
    \frac{1}{\eta} \big( \|\bW - \bW_t\|_2^2 -\|\bW_{t+1}-\bW\|_2^2\big)
\end{align*}
where we used that for any vectors $\bx, \by, \bz$: $2 \langle \bx-\by,\bx-\bz\rangle = \|\bx-\by\|_2^2 + \|\bx-\bz\|_2^2 - \|\by-\bz\|_2^2$ (which is easier to see if we relabel $2\langle \ba,\bb\rangle = \|\ba\|_2^2 + \|\bb\|_2^2 - \|\ba -\bb \|_2^2$). Plugging in and summing up we get 
\begin{align*}
    \frac{1}{t}\sum_{s=0}^{t}
    \sL_{S}(\bW_t) 
    \leq 
    \sL_{S}(\bW) 
    +
    \frac{\|\bW - \bW_0\|_2^2}{\eta t}
    + 
  \frac{\widetilde{b}}{\sqrt{m}}
  \cdot
    \frac{1}{t}\sum_{s=0}^{t}
    \big( 1 \vee \|\bW - \bW_t\|_2\big)^3.
\end{align*}
Since the choice of $\bW$ was arbitrary, we can simply take the minimum. 

\paragraph{Proof of \cref{equ:Inequality}.} Let us now prove the key \cref{equ:Inequality}. Fix $t \geq 0 $, and let us define the following functions for $\alpha \in [0,1]$
\begin{align*}
  \bW(\alpha) & \df \bW_t + \alpha(\bW - \bW_t)~,\\
    g(\alpha) & \df \sL_{S}(\bW(\alpha)) 
    + 
    \frac{\widetilde{b}}{\sqrt{m}} \cdot \frac{\alpha^2}{2} \big(1 \vee \|\bW - \bW_t\|_2\big)^3~.
\end{align*}
Note that computing the derivative we have  
\begin{align*}
    g^{\prime\prime}(\alpha) 
    = 
    (\bW - \bW_t)^{\top}\nabla^2 \sL_{S}(\bW(\alpha))  (\bW - \bW_t)
    + 
    \frac{\widetilde{b} }{\sqrt{m}} \big(1 \vee \|\bW - \bW_t\|_2\big)^3.
\end{align*}
On the other hand by \cref{lem:eigenvalues} we have
\begin{align*}
\min_{\alpha \in [0,1]}\lambda_{\min}(\nabla^2 \sL_{S}(\bW(\alpha))) \geq - \frac{\widetilde{b} }{\sqrt{m}} \big(1 \vee \|\bW - \bW_t\|_2\big)
\end{align*}
and we immediately have $g^{\prime\prime}(\alpha)  \geq 0$, and thus, $g(\cdot)$ is convex on $[0,1]$. Inequality \eqref{equ:Inequality} then arises from $g(1) - g(0) \geq g^{\prime}(0)$, in particular 
\begin{align*}
    g(1) - g(0) 
    & = 
    \sL_{S}(\bW) + \frac{\widetilde{b} }{\sqrt{m}} \big(1 \vee \|\bW - \bW_t\|_2\big)^3  - \sL_S(\bW_t) \\
    & \geq 
    \langle \bW - \bW_t,\nabla \sL_S(\bW_t)\rangle\\
    & = g^{\prime}(0)
\end{align*}
as required. 
\end{proof}

\section{Generalisation Gap Bound (Proof of Theorem \ref{thm:gen_gap}) }
\label{sec:thm:gen_gap:proof}
In this section we prove:
\begin{nameddef}[\cref{thm:gen_gap} (restated)]
Consider Assumptions \ref{ass:Activation} and \ref{ass:boundedness}.
Fix $t > 0$. If $\eta \leq 1 / (2  \rho) $ and 
 \begin{align*}
   &m \geq 144 (\eta t)^2 
   C_x^4 C_0^2 B_{\phi^{\prime\prime}}^2 \pr{ 4 B_{\phi^{\prime}} C_x \sqrt{ \eta t} + \sqrt{2}}^2\\
   \mathrm{then} \qquad &\E\br{\epsilon^{\text{Gen}}(\bW_{t+1}) \bmid \initparam} \nonumber
   \leq 
   b
   \pr{ \frac{\eta }{n} + \frac{\eta^2 t}{n^2} } \sum_{j=0}^{t} \E\br{\mathcal{L}_{S}(\bW_{j}) \bmid \initparam}
 \end{align*}
 where $b= 16 e^3 C_x^{\frac32}B_{\phi^{\prime}}^2(1 + C_x^{\frac32}B_{\phi^{\prime}}^2)$.
\end{nameddef}

To prove this result we use algorithmic stability arguments.
Recall that we can write \citep[Chapter 13]{shalev2014understanding},
\begin{align*}
    \E\br{\sL(\bW_{t+1}) - \sL_{S}(\bW_{t+1}) \bmid \initparam}
    = 
    \frac{1}{n} \sum_{i=1}^{n} \E\br{ \ell(\bW_{t+1},\widetilde{z}_i) - \ell(\bW_{t+1}\repi,\widetilde{z}_i) \bmid \initparam}~.
\end{align*}
The following lemma shown in \cref{sec:lem:NNGenL2} then bounds the Generalisation error in terms of a notation of stability. 
\begin{nameddef}[\cref{lem:NNGenL2} (restated)]
Consider Assumptions \ref{ass:Activation} and \ref{ass:boundedness}. Then, for any $t \geq 0$,
\begin{align*}
  &\E\br{\sL(\bW_{t}) - \sL_{S}(\bW_{t})\bmid \initparam}\\
    & \leq 
    B_{\phi^{\prime}} \sqrt{ C_x} \sqrt{\E\br{\sL_{S}(\bW_{t})\bmid \initparam}}
    \sqrt{\frac{1}{n} \sum_{i=1}^{n} \E\br{ \|\bW_{t} - \bW_{t}\repi\|_{\text{op}}^2\bmid \initparam}}\\
    &+ C_x B_{\phi^{\prime}}^2 \cdot
    \frac{1}{n} \sum_{i=1}^{n} \E\br{ \|\bW_{t}-\bW_{t}\repi\|_{\text{op}}^2\bmid \initparam}
\end{align*}
where $\|\cdot\|_{\text{op}}$ denotes the spectral norm. 
\end{nameddef}
We note while the stability is only required on the spectral norm, our bound will be on the element wise $L_2$-norm i.e. Frobenius norm, which upper bounds the spectral norm. It is summarised within the following lemma shown in \cref{sec:lem:NNL2Gen}.
\begin{lemma}[Bound on On-Average Parameter Stability]
\label{lem:NNL2Gen}
Consider Assumptions \ref{ass:Activation} and \ref{ass:boundedness}.
Fix $t > 0$. If $\eta \leq 1 / (2  \rho) $, then
\begin{align*}
    \frac{1}{n} \sum_{i=1}^{n} \E\br{ \|\bW_{t+1} - \bW_{t+1}\repi\|_F^2\bmid \initparam}
    \leq 
  8 e\frac{\eta^2 t }{n^2}  
    \Big(\frac{1}{1 - 2 \eta \epsilon} \Big)^t
    \frac{1}{n} \sum_{i=1}^{n}
    \sum_{j=0}^{t} \E\br{\|\nabla \ell(\bW_{j},z_i)\|_2^2\bmid \initparam} 
\end{align*}
where $\epsilon = 2 \cdot \frac{C_x^2 \sqrt{C_0} B_{\phi^{\prime\prime}} }{\sqrt{m}} \big( 4 B_{\phi^{\prime}} C_x \sqrt{ \eta t} + \sqrt{2}\big)$.
\end{lemma}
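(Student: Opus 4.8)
\textbf{Proof plan for \cref{lem:NNL2Gen}.}

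The plan is to unroll the recursion \cref{eq:recurse_1}--\eqref{eq:recurse_2} that was set up in the proof sketch, controlling the ``expansiveness of the gradient update'' term uniformly over the iterates. First I would fix $i \in [n]$ and, writing $\Delta_t = \nabla \mathcal{L}_{S\deli}(\bW_t) - \nabla \mathcal{L}_{S\deli}(\bW_t\repi)$, apply Young's inequality exactly as in \cref{eq:recurse_1}--\eqref{eq:recurse_2} to obtain
\begin{align*}
  \|\bW_{t+1} - \bW_{t+1}\repi\|_2^2
  \leq
  \Big(1 + \tfrac1t\Big)\|\bW_t - \bW_t\repi - \eta \Delta_t\|_2^2
  + \frac{\eta^2(1+t)}{n^2}\pr{\|\nabla \ell(\bW_t,z_i)\|_2^2 + \|\nabla \ell(\bW_t\repi,\widetilde{z}_i)\|_2^2}~.
\end{align*}
The core step is to show the approximate co-coercivity bound $\|\bW_t - \bW_t\repi - \eta\Delta_t\|_2^2 \leq (1 + 2\eta\epsilon)\|\bW_t - \bW_t\repi\|_2^2$ with $\epsilon$ as in the statement. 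For this I would invoke \cref{lem:eigenvalues}: since $\mathcal{L}_{S\deli}(\bW_0) \leq C_0$ (boundedness of the loss), the curvature bound gives $\min_{\alpha}\lambda_{\min}(\nabla^2 \mathcal{L}_{S\deli}(\bW_t\repi + \alpha(\bW_t - \bW_t\repi))) \geq -\tfrac{\widetilde b}{\sqrt m}(1 \vee \|\bW_t - \bW_t\repi\|_2)$, and a standard \ac{GD} bound (from \cref{lem:Useful}) gives $\|\bW_t - \bW_t\repi\|_2 \leqC \sqrt{\eta t}$; combining these and using the Taylor-with-integral-remainder expansion $\eta^2\|\Delta_t\|_2^2 - 2\eta\langle \bW_t - \bW_t\repi, \Delta_t\rangle = \langle \bW_t - \bW_t\repi, \eta\nabla^2(\eta\nabla^2 - 2\bI)(\bW_t - \bW_t\repi)\rangle$, together with $\eta \leq 1/(2\rho)$ so that $\eta\nabla^2 - 2\bI \preceq -\bI$, yields the claimed contraction factor once the width lower bound $m \geqC (\eta t)^2(\ldots)^2$ is used to ensure the relevant quantity is genuinely $\leq 2\eta\epsilon$ with the displayed $\epsilon = 2 C_x^2\sqrt{C_0}B_{\phi^{\prime\prime}}m^{-1/2}(4B_{\phi^{\prime}}C_x\sqrt{\eta t} + \sqrt 2)$.

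With the contraction in hand, unrolling from $0$ to $t$ produces a geometric factor $\prod_{s}(1 + \tfrac1s)(1 + 2\eta\epsilon) \leq (1+\tfrac1t)^t(1-2\eta\epsilon)^{-t} \cdot(\text{const})$; here $(1+\tfrac1t)^t \leq e$ and the remaining $(1 - 2\eta\epsilon)^{-t}$ is exactly the factor appearing in the statement. Pulling the $\frac{\eta^2(1+t)}{n^2} \leqC \frac{\eta^2 t}{n^2}$ prefactor out, summing the telescoped gradient-norm terms over $j = 0, \ldots, t$, averaging over $i \in [n]$, and bounding the resampled-point gradient norms $\|\nabla\ell(\bW_t\repi,\widetilde z_i)\|_2^2$ symmetrically (they contribute the same after averaging), gives the stated bound, with the constant $8e$ absorbing the numerical factors. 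Throughout, all expectations are conditional on $(\initparam)$, and the GD path bound $\|\bW_t - \bW_t\repi\|_2 \leqC \sqrt{\eta t}$ must hold deterministically (which it does under \cref{ass:boundedness}) so that the curvature bound can be applied pathwise before taking expectations.

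The main obstacle I anticipate is handling the ``$1 \vee \|\bW_t - \bW_t\repi\|_2$'' factor in the curvature bound in a way that is consistent with the very contraction we are trying to prove: the size of $\|\bW_t - \bW_t\repi\|_2$ enters the expansivity coefficient, so one needs an \emph{a priori} bound on $\|\bW_t - \bW_t\repi\|_2$ (via $\|\bW_t - \bW_0\|_2 + \|\bW_t\repi - \bW_0\|_2 \leqC \sqrt{\eta t}$, using that each GD run individually stays within $O(\sqrt{\eta t})$ of initialisation because the empirical risk is bounded and $\eta \leq 1/(2\rho)$) rather than a bound obtained from the recursion itself. Getting the constants in the width requirement to line up — so that the per-step blow-up $\eta\sqrt{\eta t / m}$ is dominated by $2\eta\epsilon$ and the geometric product stays $O(1)$ after the $(1-2\eta\epsilon)^{-t}$ factor is peeled off — is the delicate bookkeeping part, but it is routine once the a priori path bound is fixed.
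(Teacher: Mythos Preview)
Your proposal is correct and in fact follows the paper's own proof \emph{sketch} (\cref{sec:proof_sketch}) almost verbatim: split via Young's inequality with $p=1/t$, control the expansiveness term by the spectral identity $\eta^2\|\Delta_t\|^2 - 2\eta\langle \bW_t-\bW_t\repi,\Delta_t\rangle = \langle \bW_t-\bW_t\repi,\eta\nabla^2(\eta\nabla^2-2\bI)(\bW_t-\bW_t\repi)\rangle$ for the integrated Hessian, then unroll. You have also correctly identified the crux---that $\|\bW_t-\bW_t\repi\|$ enters the expansivity coefficient and so must be bounded \emph{a priori} via $\|\bW_t-\bW_0\|+\|\bW_t\repi-\bW_0\|\leqC\sqrt{\eta t}$ (\cref{lem:Useful}), not via the recursion itself. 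One small correction: the width lower bound is \emph{not} needed for \cref{lem:NNL2Gen}; the factor $(1-2\eta\epsilon)^{-t}$ sits in the statement and is only tamed later in \cref{thm:gen_gap}.

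The paper's actual appendix proof packages the expansiveness step differently. Instead of the direct spectral computation, it isolates an explicit almost-co-coercivity inequality (\cref{lem:Coecer}):
\[
\langle \bW_t-\bW_t\repi,\Delta_t\rangle \;\geq\; 2\eta\big(1-\tfrac{\eta\rho}{2}\big)\|\Delta_t\|_2^2 \;-\; \epsilon\,\|\bW_t-\bW_t\repi-\eta\Delta_t\|_2^2,
\]
proved in the style of Nesterov's classical co-coercivity argument for smooth convex functions: one introduces the shifted functions $\psi(\bW)=\sL_{S\deli}(\bW)-\langle\nabla\sL_{S\deli}(\bW_t\repi),\bW\rangle$ and its symmetric counterpart, uses smoothness for upper bounds and the curvature estimate (via a dedicated \cref{lem:Useful2}, evaluating the Hessian along the paths $\bW_t\repi+\alpha(\bW_t-\bW_t\repi-\eta\Delta_t)$ rather than the straight segment) for lower bounds, and sums. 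Rearranging this inequality yields the clean factor $\tfrac{1}{1-2\eta\epsilon}$ directly. Both routes rest on the same two ingredients---the $1/\sqrt m$ curvature bound and the $O(\sqrt{\eta t})$ path bound---so the difference is one of presentation: your spectral argument is more elementary, while the paper's makes the link to classical co-coercivity explicit and produces exactly the $\epsilon$ displayed in the lemma.
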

Theorem \ref{thm:gen_gap} then arises by combining Lemma \ref{lem:NNGenL2} and Lemma \ref{lem:NNL2Gen}, and noting the following three points. Firstly, recall that 
\begin{align*}
    \frac{1}{n} \sum_{i=1}^{n} \|\nabla \ell(\bW,z_i)\|_2^2  
    & \leq \big(\max_{i=1,\dots,n} \|\nabla f_{\bW}(\bx_i)\|_2^2 \big) \frac{1}{n} \sum_{i=1}^{n} (f_{\bW}(\bx_i) - y_i)^2\\ & \leq 2 C_x^2 B_{\phi^{\prime}}^2 \sL_{S}(\bW).
\end{align*}
Secondly, note that we have $\big(\frac{1}{1 - 2 \eta \epsilon} \big)^t \leq \exp( \frac{2 \eta t \epsilon}{1- 2 \eta t \epsilon}) \leq e^2 $ when $2 \eta t \epsilon \leq 2/3$. For this to occur we then require
\begin{align*}
    \epsilon =  2 \cdot \frac{C_x^2 \sqrt{C_0} B_{\phi^{\prime\prime}} }{\sqrt{m}} \cdot \big( 4 B_{\phi^{\prime}} C_x \sqrt{ \eta t} + \sqrt{2}\big) \leq \frac{1}{3 \eta t},
\end{align*}
which is satisfied by scaling $m$ sufficient large, in particular, as required within condition \eqref{equ:sufficientwidth} within the statement of Theorem \ref{thm:gen_gap}. This allows us to arrive at the bound on the $L_2$-stability 
\begin{align*}
    \frac{1}{n} \sum_{i=1}^{n} \E\br{ \|\bW_{t+1} - \bW_{t+1}\repi\|_F^2 \bmid \initparam}
    \leq
    \frac{\eta^2 t}{n^2} \cdot 16 e^3 C_x^2 B_{\phi^{\prime}}^2 \sum_{j=0}^{t} \E\br{\sL_{S}(\bW_j)\bmid \initparam}~.
\end{align*}
Third and finally,  
note that we can bound
\begin{align*}
  &\sqrt{\E\br{\sL_{S}(\bW_{t+1}) \bmid \initparam}}
    \sqrt{ \frac{\eta^2 t }{n^2}  
    \sum_{j=0}^{t} \E\br{\sL_{S}(\bW_{j}) \bmid \initparam}
    }\\
    & = 
    \frac{\eta }{n} 
    \sqrt{t\E\br{\sL_{S}(\bW_{t+1}) \bmid \initparam}}
    \sqrt{
    \sum_{j=0}^{t} \E\br{\sL_{S}(\bW_{j}) \bmid \initparam}
    } \\
    & \leq 
    \frac{\eta }{n} 
    \sum_{j=0}^{t} \E\br{\sL_{S}(\bW_{j}) \bmid \initparam}
\end{align*}
since $\sL_{S}(\bW_{t+1}) \leq 
\frac{1}{t} \sum_{j=1}^{t}\sL_{S}(\bW_{j}) $. This then results in 
\begin{align*}
    &\E\br{\sL(\bW_{t+1}) - \sL_{S}(\bW_{t+1})\bmid \initparam}\\
    & \leq 
    \Big( \frac{\eta}{n} \big(4 e^2 C_x^{3/2}B_{\phi^{\prime}}^{2}\big) + 
    \frac{\eta^2 t}{n^2} \big(16 e^3 C_x^{3} B_{\phi^{\prime}}^4\big)
    \Big)
    \sum_{j=0}^{t}\E\br{\sL_{S}(\bW_j)\bmid \initparam}\\
    & \leq 
    16 e^3 C_x^{3/2}B_{\phi^{\prime}}^2(1 + C_x^{3/2}B_{\phi^{\prime}}^2)
    \Big( \frac{\eta}{n} + \frac{\eta^2 t}{n^2}\Big)
    \sum_{j=0}^{t}\E\br{\sL_{S}(\bW_j)\bmid \initparam}
\end{align*}
as required.

\subsection{Proof of Lemma \ref{lem:NNGenL2}: From loss stability to parameter stability}
\label{sec:lem:NNGenL2}
Recall that $\widetilde{z}_i = (\widetilde{\bx}_i, y_i) \in \sB_2^d(C_x) \times [-C_y, C_y]$.
Expanding the square loss and some basic algebra gives us:
\begin{align*}
    & 2 \pr{ \ell(\bW_{t},\widetilde{z}_i) - \ell(\bW_{t}\repi,\widetilde{z}_i)} \\
    & = 
      \pr{f_{\bW_{t}}(\widetilde{\bx}_i) - \widetilde{y}_i}^2 - 
     \pr{f_{\bW_{t}\repi}(\widetilde{\bx}_i) - \widetilde{y}_i}^2 \\
    & = 
    \pr{f_{\bW_{t}}(\widetilde{\bx}_i) - \widetilde{y}_i}
      \pr{f_{\bW_{t}}(\widetilde{\bx}_i) - f_{\bW_{t}\repi}(\widetilde{\bx}_i)}
    + 
      \pr{f_{\bW_{t}\repi}(\widetilde{\bx}_i) - \widetilde{y}_i}
      \pr{f_{\bW_{t}}(\widetilde{\bx}_i)- f_{\bW_{t}\repi}(\widetilde{\bx}_i) }\\
    & = 
    \pr{f_{\bW_{t}}(\widetilde{\bx}_i)- f_{\bW_{t}\repi}(\widetilde{\bx}_i) }^2 
    + 
    2 \pr{f_{\bW_{t}\repi}(\widetilde{\bx}_i) - \widetilde{y}_i}
    \pr{f_{\bW_{t}}(\widetilde{\bx}_i)- f_{\bW_{t}\repi}(\widetilde{\bx}_i) }~.
\end{align*}
We then have 
\begin{align*}
    & \frac{1}{n} \sum_{i=1}^{n} \E\br{ \ell(\bW_{t},\widetilde{z}_i) - \ell(\bW_{t}\repi,\widetilde{z}_i) \bmid \initparam} \\
    & \leq
    \frac{1}{n} \sum_{i=1}^{n}
    \E\br{ 
    \abs{\pr{f_{\bW_{t}\repi}(\widetilde{\bx}_i) - \widetilde{y}_i}}
    \abs{\pr{f_{\bW_{t}}(\widetilde{\bx}_i)- f_{\bW_{t}\repi}(\widetilde{\bx}_i) }}\bmid \initparam}\\
    &\quad\quad 
    +
    \frac{1}{2n} \sum_{i=1}^{n} \E\br{ \pr{f_{\bW_{t}}(\widetilde{\bx}_i)- f_{\bW_{t}\repi}(\widetilde{\bx}_i) }^2 \bmid \initparam}\\
    & \leq 
     \sqrt{ \frac{1}{n} \sum_{i=1}^{n} \E\br{\pr{f_{\bW_{t}\repi}(\widetilde{\bx}_i) - \widetilde{y}_i}^2\bmid \initparam}}
    \sqrt{ 
    \frac{1}{n} \sum_{i=1}^{n} \E\br{ \pr{f_{\bW_{t}}(\widetilde{\bx}_i)- f_{\bW_{t}\repi}(\widetilde{\bx}_i) }^2 \bmid \initparam}}
    \\
    &\quad\quad + 
    \frac{1}{2n} \sum_{i=1}^{n} \E\br{ \pr{f_{\bW_{t}}(\widetilde{\bx}_i)- f_{\bW_{t}\repi}(\widetilde{\bx}_i) }^2 \bmid \initparam} 
\end{align*}
where performing steps as in \cref{eq:f_diff_1}-\eqref{eq:f_diff_2} we have
\begin{align*}
  \pr{f_{\bW_{t}}(\widetilde{\bx}_i)- f_{\bW_{t}\repi}(\widetilde{\bx}_i) }^2
  \leq
  C^2_x B_{\phi^{\prime}}^2 
  \|\bW_{t} - \bW_{t}\repi\|_2^2~.
\end{align*}
Plugging in this bound then yields the result.

\subsection{Proof of Lemma \ref{lem:NNL2Gen}: Bound on on-average parameter stability}
\label{sec:lem:NNL2Gen}
Throughout the proof empirical risk w.r.t.\ remove-one tuple $S\deli$ is denoted as
\[
  \sL_{S\deli}(\bW) = \sL_{S}(\bW) - \frac{1}{n} \ell(\bW,z_i) = \sL_{S_i}(\bW) - \frac{1}{n} \ell(\bW,\widetilde{z}_i)~.
\]
Plugging in the gradient updates with the inequality $(a+b)^2 \leq (1+p)a^2 + (1+1/p)b^2$ for $p > 0$   then yields (this technique having been applied within \citep{lei2020fine})
\begin{align*}
    \|\bW_{t+1} - \bW_{t+1}\repi\|_2^2 
    & \leq 
    (1+p) \underbrace{\|\bW_{t} - \bW_{t}\repi - \eta \pr{ \nabla \sL_{S\deli}(\bW_t) - \nabla \sL_{S\deli}(\bW_t\repi)} \|_2^2}_{\text{Expansiveness of the Gradient Update}} \\
    &
    + 
    (1+1/p) \cdot \frac{2 \eta^2}{n^2} \cdot \pr{ \|\nabla \ell(\bW_t,z_i)\|_2^2 + \|\nabla \ell(\bW_t\repi),\widetilde{z}_i)\|_2^2 }~.
\end{align*}

We must now bound the expansiveness of the gradient update.
Opening-up the squared norm we get
\begin{align*}
  &\|\bW_{t} - \bW_{t}\repi - \eta ( \nabla \sL_{S\deli}(\bW_t) - \nabla \sL_{S\deli}(\bW_t\repi)) \|_2^2\\
  &=
  \|\bW_{t} - \bW_{t}\repi\|^2_2 + \eta^2 \|\nabla \sL_{S\deli}(\bW_t) - \nabla \sL_{S\deli}(\bW_t\repi) \|_2^2\\
  &\quad\quad
    - 2 \eta \ip{\bW_{t} - \bW_{t}\repi, \nabla \sL_{S\deli}(\bW_t) - \nabla \sL_{S\deli}(\bW_t\repi)}
\end{align*}
For this purpose we will use the following key lemma shown in \cref{sec:lem:Coecer_proof}:
\begin{lemma}[Almost Co-coercivity of the Gradient Operator]
\label{lem:Coecer}
Consider the assumptions of Lemma \ref{lem:NNL2Gen}. Then for $t \geq 1$
\begin{align*}
    & \langle \bW_{t} - \bW_{t}\repi , \nabla \sL_{S\deli}(\bW_t) - \nabla \sL_{S\deli}(\bW_t\repi)
    \rangle 
     \geq 
    2 \eta \pr{ 1 - \frac{\eta \rho}{2}} \|\nabla \sL_{S\deli}(\bW_t) - \nabla \sL_{S\deli}(\bW_t\repi)\|_2^2 \\
    & \quad\quad\quad\quad\quad\quad 
    - \epsilon 
    \lf\|\bW_{t} - \bW_{t}\repi - \eta\pr{ 
    \nabla \sL_{S\deli}(\bW_t) - \nabla \sL_{S\deli}(\bW_t\repi)
    } \rt\|_2^2
\end{align*}
where 
\begin{align*}
    \rho & = 
    C_x^2 \pr{ 
    B_{\phi^{\prime}}^2
    +B_{\phi^{\prime\prime}}B_{\phi} +\frac{B_{\phi^{\prime\prime}} C_y}{\sqrt{m}}
    }~,\\
    \epsilon & = 
    2 \cdot \frac{C_x^2 \sqrt{C_0} B_{\phi^{\prime\prime}}}{\sqrt{m}} \pr{ 4 B_{\phi^{\prime}} C_x \sqrt{ \eta t} + \sqrt{2}}~.
\end{align*}
\end{lemma}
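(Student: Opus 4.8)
The plan is to recognise the claimed inequality as a spectral statement about an averaged Hessian and to reduce it to a one‑dimensional inequality. Write $\bv \df \bW_t - \bW_t\repi$ and $\Delta \df \nabla\sL_{S\deli}(\bW_t) - \nabla\sL_{S\deli}(\bW_t\repi)$, and note that the quantity $\|\bv - \eta\Delta\|_2^2$ on the right of the statement is exactly the ``expansiveness of the gradient update'' term that \cref{lem:NNL2Gen} subsequently needs to control, so the lemma is purposely stated in this implicit (self‑referential) form. By the integral mean value theorem for gradients, $\Delta = \mathbf{H}\bv$ with the symmetric matrix $\mathbf{H} \df \int_0^1 \nabla^2\sL_{S\deli}\pr{\bW_t\repi + \alpha\bv}\,\diff\alpha$. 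Substituting $\Delta = \mathbf{H}\bv$ turns the target into $\bv\tp\mathbf{M}\bv \geq 0$, where $\mathbf{M} \df \mathbf{H} - 2\eta\pr{1-\tfrac{\eta\rho}{2}}\mathbf{H}^2 + \epsilon\pr{\bI - \eta\mathbf{H}}^2$ is a polynomial in $\mathbf{H}$ and hence diagonal in its eigenbasis; so it suffices to prove $q(\lambda) \df \lambda - 2\eta\pr{1-\tfrac{\eta\rho}{2}}\lambda^2 + \epsilon\pr{1-\eta\lambda}^2 \geq 0$ for every eigenvalue $\lambda$ of $\mathbf{H}$.

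The second step is to confine the spectrum of $\mathbf{H}$ to an interval $[-\epsilon_0,\rho]$. The bound $\lmax(\mathbf{H}) \leq \rho$ follows from \cref{lem:eigenvalues} (the uniform smoothness bound, which survives averaging over $\alpha$), after checking the hypothesis $\sL_{S\deli}(\bW_t\repi) \leq C_0^2$ — true by \cref{ass:boundedness} together with the descent property of \ac{GD} on $\sL_{S\repi}$. For the lower bound, \cref{lem:eigenvalues} yields $\lmin\pr{\nabla^2\sL_{S\deli}(\bW_t\repi + \alpha\bv)} \geq -\tfrac{\widetilde{b}}{\sqrt m}\pr{1\vee\|\bv\|_2}$ for every $\alpha\in[0,1]$, with $\widetilde{b} = C_x^2 B_{\phi^{\prime\prime}}\pr{B_{\phi^{\prime}}C_x + C_0}$; averaging over $\alpha$ preserves it, so $\lmin(\mathbf{H}) \geq -\epsilon_0$ with $\epsilon_0 \df \tfrac{\widetilde{b}}{\sqrt m}\pr{1\vee\|\bv\|_2}$. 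Finally, since $\bW_t$ and $\bW_t\repi$ are \ac{GD} iterates (of $\sL_S$ and $\sL_{S\repi}$) with $\eta\leq 1/(2\rho)$, the standard displacement bound (\cref{lem:Useful}) gives $\|\bv\|_2 \lesssim \sqrt{\eta t}$; tracking constants, this makes $\epsilon = 2\epsilon_0$ equal to the value displayed in the statement.

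The last step is to check $q\geq 0$ on $[-\epsilon_0,\rho]$. The leading coefficient of the quadratic $q$ is $\eta\pr{\epsilon\eta + \eta\rho - 2} < 0$ (using $\eta\rho\leq\tfrac12$ and $\epsilon\eta\ll1$), so $q$ is concave and it suffices to verify its two endpoints. At $\lambda=\rho$, $q(\rho) = \rho\pr{1 - 2\eta\rho\pr{1-\tfrac{\eta\rho}{2}}} + \epsilon(1-\eta\rho)^2 \geq 0$ because $2\eta\rho\pr{1-\tfrac{\eta\rho}{2}} \leq 2\eta\rho \leq 1$. At $\lambda=-\epsilon_0$, using $(1+\eta\epsilon_0)^2\geq 1$ and $1-\tfrac{\eta\rho}{2}\leq 1$, $q(-\epsilon_0) \geq -\epsilon_0 - 2\eta\epsilon_0^2 + \epsilon = \epsilon_0\pr{1 - 2\eta\epsilon_0} \geq 0$ once $2\eta\epsilon_0\leq 1$, which is precisely what the width hypothesis $m \geq 144(\eta t)^2 C_x^4 C_0^2 B_{\phi^{\prime\prime}}^2\pr{4B_{\phi^{\prime}}C_x\sqrt{\eta t}+\sqrt 2}^2$ of \cref{thm:gen_gap} secures, since there $\eta\epsilon_0 \lesssim \eta\sqrt{\eta t}/\sqrt m$. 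Thus $\mathbf{M}\succeq 0$, proving the lemma.

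The part I expect to be the main obstacle is not this algebra but obtaining a $t$‑uniform handle on $\|\bW_t - \bW_t\repi\|_2$ (and hence on $\epsilon_0$): the curvature lower bound of \cref{lem:eigenvalues} degrades linearly in this distance, so it must be combined with the a priori \ac{GD} displacement estimate, and the argument only closes if $m$ is large enough that $\eta\epsilon_0 < 1$ — which is exactly where the width requirement of the theorem is spent, and the reason the parameterisation has to scale with $\eta t$. A second, easy‑to‑miss point is that the curvature estimate must be applied uniformly along the whole segment $\{\bW_t\repi + \alpha\bv : \alpha\in[0,1]\}$ rather than only at its endpoints, which is why \cref{lem:eigenvalues} is stated with the $\min_{\alpha\in[0,1]}$ and what makes averaging it into the bound on $\lmin(\mathbf{H})$ legitimate.
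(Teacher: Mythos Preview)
Your spectral reduction — writing $\Delta = \mathbf{H}\bv$, reducing the claim to $q(\lambda) \geq 0$ on $\mathrm{spec}(\mathbf{H}) \subset [-\epsilon_0,\rho]$, and checking the quadratic directly — is correct and is a genuinely different route from the paper's. The paper instead adapts Nesterov's classical co-coercivity proof for smooth convex functions: it introduces the auxiliary functions $\psi(\bW) = \sL_{S\deli}(\bW) - \langle\nabla\sL_{S\deli}(\bW_t\repi),\bW\rangle$ and its symmetric counterpart $\psi^\star$, sandwiches each between a smoothness upper bound (after one gradient step) and an approximate-convexity lower bound coming from the Hessian estimate, and sums the two resulting inequalities. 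The curvature bound there is applied along the segments from $\bW_t\repi$ to $\bW_t - \eta\nabla\psi(\bW_t)$ and from $\bW_t$ to $\bW_t\repi - \eta\nabla\psi^\star(\bW_t\repi)$ (handled by a dedicated auxiliary lemma, \cref{lem:Useful2}), not along your segment $[\bW_t\repi,\bW_t]$; this is why your $2\epsilon_0$ does not reproduce the exact displayed $\epsilon$ — both are $\Theta(\sqrt{\eta t}/\sqrt m)$, but the $C_0$-dependence differs, so your claim that the constants match is not quite accurate. Your argument is nonetheless more elementary and self-contained. Two small clean-ups: your appeal to the width hypothesis at the lower endpoint is unnecessary, since keeping the full $(1+\eta\epsilon_0)^2$ term rather than dropping it to $1$ gives $q(-\epsilon_0) = \epsilon_0\pr{1 + 2\eta\epsilon_0 + \eta^2\rho\epsilon_0 + 2\eta^2\epsilon_0^2} > 0$ unconditionally; and concavity of $q$ is likewise not needed, since for $\lambda\in[0,\rho]$ one has directly $\lambda\pr{1 - 2\eta(1-\tfrac{\eta\rho}{2})\lambda} \geq \lambda(1-\eta\rho)^2 \geq 0$.
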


Thus by \cref{lem:Coecer} we get
\begin{align*}
    & \|\bW_{t} - \bW_{t}\repi - \eta \pr{ \nabla \sL_{S\deli}(\bW_t) - \nabla \sL_{S\deli}(\bW_t\repi)} \|_2^2\\
    & \leq  
    \|\bW_{t} - \bW_{t}\repi\|_2^2 
    + \eta^2(2 \eta \rho- 3) \lf\|\nabla \sL_{S\deli}(\bW_t) - \nabla \sL_{S\deli}(\bW_t\repi) \rt\|_2^2 \\
    & \quad\quad 
    + 2 \eta \epsilon 
    \lf\|\bW_{t} - \bW_{t}\repi - \eta\pr{ 
    \nabla \sL_{S\deli}(\bW_t) - \nabla \sL_{S\deli}(\bW_t\repi)
    } \rt\|_2^2.
\end{align*}
Rearranging  and using that $\eta \rho \leq 1/2$ we then arrive at the recursion 
\begin{align*}
  \|\bW_{t+1} - \bW_{t+1}\repi\|_F^2 
   &\leq 
    \frac{1 + p}{1 - 2 \eta \epsilon}
    \cdot
    \|\bW_{t} - \bW_{t}\repi\|_F^2\\
    &\quad\quad + 
    \pr{1+\frac{1}{p}} \cdot \frac{2 \eta^2  }{n^2}
    \pr{ \|\nabla \ell(\bW_t,z_i)\|_2^2 + \|\nabla \ell(\bW_t\repi),\widetilde{z}_i)\|_2^2} \\
  &\leq 
    \pr{1+\frac{1}{p}} \cdot \frac{2 \eta^2 }{n^2}
    \pr{\frac{1 + p}{1 - 2 \eta \epsilon} }^t
    \sum_{j=0}^{t} 
    \pr{ \|\nabla \ell(\bW_j,z_i)\|_2^2 + \|\nabla \ell(\bW_j\repi),\widetilde{z}_i)\|_2^2}~.
\end{align*}
Taking expectation and summing we then get 
\begin{align*}
  &\frac{1}{n}\sum_{i=1}^{n}\E\br{ \|\bW_{t+1} - \bW_{t+1}\repi\|_F^2\bmid \initparam}\\
    &\qquad\qquad\leq 
    4 (1+1/p) \frac{2 \eta^2 }{n^2}  
    \pr{\frac{(1 + p)}{1 - 2 \eta \epsilon} }^t
    \sum_{j=0}^{t} \E\br{ \|\nabla \ell(\bW_j,z_i)\|_2^2\bmid \initparam}
\end{align*}
where we note that $\E\br{\|\nabla \ell(\bW_j,z_i)\|_2^2\bmid \initparam} = \E\br{\|\nabla\ell(\bW_{j}\repi,\widetilde{z}_i)\|_2^2\bmid \initparam}$ since $z_i$ and $\widetilde{z}_i$ are identically distributed.
Picking $p = 1/t$ and noting that $(1+p)^{t} = (1+1/t)^t \leq e$ yields the bound.

\subsection{Proof of \cref{lem:Coecer}: Almost-co-coercivity of the Gradient Operator}
In this section we show \cref{lem:Coecer} which says that a gradient operator of an overparameterised shallow network is almost-co-coercive.
The proof of this lemma will require two auxiliary lemmas.
\begin{lemma}
\label{lem:Useful}
Consider Assumptions \ref{ass:Activation} and \ref{ass:boundedness} and assume that $\eta \leq 1/(2 \rho)$.
Then for any $t \geq 0 $, $i \in [n]$,
\begin{align*}
    \|\bW_{t} - \bW_0\|_F &\leq 
    \sqrt{2 \eta t \sL_{S}(\bW_{0})}~, \\
    \|\bW_{t}\repi - \bW_0\|_F 
  &\leq 
     \sqrt{2 \eta t  \sL_{S\repi}(\bW_{0})}~.
\end{align*}
\end{lemma}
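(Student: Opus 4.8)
The plan is to establish the bound for $\|\bW_t - \bW_0\|_F$ and observe that the bound for $\|\bW_t\repi - \bW_0\|_F$ follows identically, since $(\bW_t\repi)_t$ are the \ac{GD} iterates on the perturbed dataset $S\repi$, which satisfies all the same assumptions. The core of the argument is the standard descent-lemma analysis for \ac{GD} on a $\rho$-smooth objective, combined with the observation that $\sL_S \geq 0$. First I would invoke \cref{lem:eigenvalues} (applied with $\btilW = \bW_0$, noting $\sL_S(\bW_0) \leq C_0^2$ by \cref{ass:boundedness}), which gives $\lmax(\nabla^2 \sL_S(\cdot)) \leq \rho$ everywhere, hence $\sL_S$ is $\rho$-smooth. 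The descent lemma then yields, for the step $\eta \leq 1/(2\rho)$,
\begin{align*}
  \sL_S(\bW_{j+1}) \leq \sL_S(\bW_j) - \eta\pr{1 - \tfrac{\eta\rho}{2}}\|\nabla\sL_S(\bW_j)\|_2^2 \leq \sL_S(\bW_j) - \tfrac{\eta}{2}\|\nabla\sL_S(\bW_j)\|_2^2~,
\end{align*}
exactly as used in the proof of \cref{lem:OptError}.

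Next I would telescope this inequality over $j = 0, \dots, t-1$ and use $\sL_S(\bW_t) \geq 0$ to get
\begin{align*}
  \frac{\eta}{2}\sum_{j=0}^{t-1}\|\nabla\sL_S(\bW_j)\|_2^2 \leq \sL_S(\bW_0) - \sL_S(\bW_t) \leq \sL_S(\bW_0)~.
\end{align*}
To convert this into a bound on $\|\bW_t - \bW_0\|_F$, write $\bW_t - \bW_0 = -\eta\sum_{j=0}^{t-1}\nabla\sL_S(\bW_j)$, apply the triangle inequality and then Cauchy--Schwarz over the index $j$:
\begin{align*}
  \|\bW_t - \bW_0\|_F \leq \eta\sum_{j=0}^{t-1}\|\nabla\sL_S(\bW_j)\|_2 \leq \eta\sqrt{t}\sqrt{\sum_{j=0}^{t-1}\|\nabla\sL_S(\bW_j)\|_2^2} \leq \eta\sqrt{t}\sqrt{\tfrac{2}{\eta}\sL_S(\bW_0)} = \sqrt{2\eta t\,\sL_S(\bW_0)}~,
\end{align*}
which is the claimed bound. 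The bound for $\bW_t\repi$ is obtained by repeating the argument verbatim with $S$ replaced by $S\repi$ (the smoothness constant $\rho$ is the same since it depends only on the constants in the assumptions, which hold for $S\repi$ as well, and $\sL_{S\repi}(\bW_0) \leq C_0^2$ likewise).

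There is no serious obstacle here; the only point requiring a little care is the choice in which form to sum the per-step gradient norms — one could alternatively bound $\|\bW_t - \bW_0\|_F^2$ directly, but that route picks up cross terms $\ip{\nabla\sL_S(\bW_j), \nabla\sL_S(\bW_k)}$ that are not obviously signed, so routing through the triangle inequality plus Cauchy--Schwarz as above is the clean path. One should also double-check the indexing convention (whether the sum runs to $t$ or $t-1$) to match the statement, but this only affects the argument cosmetically.
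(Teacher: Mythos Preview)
Your proposal is correct and essentially matches the paper's proof: both use the descent lemma from $\rho$-smoothness, telescope to bound $\sum_j \|\nabla \sL_S(\bW_j)\|_2^2$ by $\sL_S(\bW_0)$, and then convert this to a bound on $\|\bW_t - \bW_0\|_F$ via Cauchy--Schwarz. The only cosmetic difference is that the paper bounds the squared norm directly using $\|\sum_j a_j\|_2^2 \leq t \sum_j \|a_j\|_2^2$ (i.e., the same Cauchy--Schwarz step, squared), so your worry about ``cross terms'' is unnecessary --- that route is in fact exactly what the paper does.
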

\begin{proof}
  The proof is given in \cref{sec:lem:Useful_proof}.
\end{proof}
We also need the following Lemma (whose proof is very similar to \cref{lem:eigenvalues}).
\begin{lemma}
  \label{lem:Useful2}
  Consider Assumptions \ref{ass:Activation} and \ref{ass:boundedness}.
  Fix $s \geq 0 $, $i \in [n]$.
  For any $\alpha \in [0,1]$ denote
\begin{align*}
    \bW(\alpha) & \df
    \bW_s\repi 
    + 
    \alpha \pr{ \bW_s - \bW_{s}\repi - \eta \pr{ \nabla \sL_{S\deli}(\bW_s) - \nabla \sL_{S\deli}(\bW_s\repi) }
    }~,\\
    \widetilde{\bW}(\alpha)
    & \df
    \bW_s + 
    \alpha 
      \pr{ \bW_s\repi - \bW_{s} - \eta \pr{ \nabla \sL_{S\deli}(\bW_s\repi) - \nabla \sL_{S\deli}(\bW_s) } }~.
\end{align*}
If $\eta \leq 1/(2 \rho)$, then
\begin{align*}
    \min_{\alpha \in [0,1]} 
    \lambda_{\min}\pr{\nabla^2 \sL_{S\deli}\pr{\bW(\alpha)} }
     \geq 
     - \widetilde{\epsilon}~, \\
     \min_{\alpha \in [0,1]} 
     \lambda_{\min}\pr{\nabla^2 \sL_{S\deli}\pr{\widetilde{\bW}(\alpha)}}
     \geq - \widetilde{\epsilon}~.
\end{align*}
with 
\begin{align*}
  \widetilde{\epsilon} = \frac{C_x^2 B_{\phi^{\prime\prime}}}{\sqrt{m}} \pr{ 4 B_{\phi^{\prime}} C \sqrt{ \eta s} \pr{ \sqrt{\sL_S(\bW_0)} + \sqrt{\sL_{S_i}(\bW_0)} } + \sqrt{2 \sL_{S_{i}}(\bW_0)} + \sqrt{2 \sL_{S}(\bW_0)}
  }~.
\end{align*}
\end{lemma}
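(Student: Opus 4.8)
The plan is to follow the proof of the lower bound in \cref{lem:eigenvalues} almost verbatim, the only change being that the crude distance factor $(1\vee\|\bW-\btilW\|_F)$ appearing there is replaced by an explicit estimate supplied by \cref{lem:Useful}. I would start from the Hessian decomposition \eqref{equ:LossHessian}: dropping the positive semi-definite term $\nabla f_{\bW}(\bx)\nabla f_{\bW}(\bx)\tp$ and using $\|\nabla^2 f_{\bW}(\bx_j)\|_2\le C_x^2 B_{\phi^{\prime\prime}}/\sqrt m$ from \eqref{equ:HessianBound}, for \emph{any} $\bW$ one gets
\[
  \lambda_{\min}\!\pr{\nabla^2\sL_{S\deli}(\bW)}\ \ge\ -\,\frac{C_x^2 B_{\phi^{\prime\prime}}}{\sqrt m}\cdot\frac1n\sum_{j\neq i}\abs{f_{\bW}(\bx_j)-y_j}~,
\]
and, exactly as in \cref{lem:eigenvalues} (adding and subtracting $f_{\bW_0}(\bx_j)$, using the Lipschitz estimate \eqref{eq:f_diff_1}--\eqref{eq:f_diff_2} for the first part and Cauchy--Schwarz for the residuals),
\[
  \frac1n\sum_{j\neq i}\abs{f_{\bW}(\bx_j)-y_j}\ \le\ C_x B_{\phi^{\prime}}\,\|\bW-\bW_0\|_2 + \sqrt{2\,\sL_{S\deli}(\bW_0)}~.
\]
Thus the whole statement reduces to bounding $\|\bW(\alpha)-\bW_0\|_2$ and $\|\widetilde{\bW}(\alpha)-\bW_0\|_2$ uniformly over $\alpha\in[0,1]$.

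That last bound is the only new ingredient, and here is how I would obtain it. By \eqref{eq:grad_is_lip_smooth}, $\nabla\sL_{S\deli}$ is $\rho$-Lipschitz, so under $\eta\le 1/(2\rho)$ the gradient-difference term is controlled by $\eta\|\nabla\sL_{S\deli}(\bW_s)-\nabla\sL_{S\deli}(\bW_s\repi)\|_2\le\tfrac12\|\bW_s-\bW_s\repi\|_2$. Together with $\alpha\le1$ and the triangle inequality this gives
\[
  \|\bW(\alpha)-\bW_0\|_2\ \le\ \|\bW_s\repi-\bW_0\|_2+\tfrac32\|\bW_s-\bW_s\repi\|_2\ \le\ \tfrac32\|\bW_s-\bW_0\|_2+\tfrac52\|\bW_s\repi-\bW_0\|_2~,
\]
with the obvious symmetric version (roles of $\bW_s$ and $\bW_s\repi$ swapped) for $\widetilde{\bW}(\alpha)$. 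I would then invoke \cref{lem:Useful} to get $\|\bW_s-\bW_0\|_2\le\sqrt{2\eta s\,\sL_S(\bW_0)}$ and $\|\bW_s\repi-\bW_0\|_2\le\sqrt{2\eta s\,\sL_{S\repi}(\bW_0)}$; since $\tfrac32\sqrt2,\tfrac52\sqrt2\le4$ this yields
\[
  \|\bW(\alpha)-\bW_0\|_2\ ,\ \|\widetilde{\bW}(\alpha)-\bW_0\|_2\ \le\ 4\sqrt{\eta s}\pr{\sqrt{\sL_S(\bW_0)}+\sqrt{\sL_{S\repi}(\bW_0)}}\qquad(\alpha\in[0,1])~.
\]

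Finally, substituting these estimates into the first two displays and over-bounding $\sqrt{2\,\sL_{S\deli}(\bW_0)}\le\sqrt{2\,\sL_S(\bW_0)}+\sqrt{2\,\sL_{S\repi}(\bW_0)}$ (valid since $\sL_{S\deli}\le\sL_S$), one obtains for every $\alpha\in[0,1]$ that the minimal eigenvalue of $\nabla^2\sL_{S\deli}$ at $\bW(\alpha)$ and at $\widetilde{\bW}(\alpha)$ is at least $-\widetilde{\epsilon}$ with $\widetilde{\epsilon}$ precisely the claimed expression (reading the stated $C$ as $C_x$ and $S_i$ as $S\repi$); taking the minimum over $\alpha$ finishes the argument. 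The main point to be careful about is that every estimate used — the operator-norm bound on $\nabla^2 f_{\bW}$, the Lipschitz bound on $f_{\bW}$, and \cref{lem:Useful} — is valid at arbitrary parameters rather than only along the \ac{GD} trajectory, which is what legitimizes uniformity over the two segments; apart from that, the only real work is tracking the numerical constants so that they collapse to the factor $4$, so I do not anticipate a substantive obstacle here.
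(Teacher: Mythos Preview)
Your proposal is correct and follows essentially the same route as the paper's proof. The only cosmetic difference is the anchor point in the residual splitting: the paper adds and subtracts $f_{\bW_s\repi}(\bx_j)$ (respectively $f_{\bW_s}(\bx_j)$), which makes the distance term simply $\|\bW(\alpha)-\bW_s\repi\|_F\le\tfrac32\|\bW_s-\bW_s\repi\|_F$ and leaves the residual as $\sqrt{2\,\sL_{S\deli}(\bW_s\repi)}\le\sqrt{2\,\sL_{S\repi}(\bW_0)}$ via monotonicity of \ac{GD}; you instead anchor at $\bW_0$, incurring a slightly larger constant in the distance term (your $\tfrac52\sqrt2$ versus the paper's $\tfrac32\sqrt2$) but getting the residual $\sqrt{2\,\sL_{S\deli}(\bW_0)}$ directly. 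Both collapse to the same $\widetilde{\epsilon}$ after the final over-bounds, so there is no substantive gap.
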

\begin{proof}
  The proof is given in \cref{sec:lem:Useful2_proof}.
\end{proof}
\subsubsection{Proof of Lemma \ref{lem:Coecer}}
\label{sec:lem:Coecer_proof}
The proof of this Lemma follows by arguing that the operator $\bw \mapsto \nabla \sL_{S\deli}(\bw)$ is almost-co-coercive:
Recall that the operator $F ~:~ \sX \to \sX$ is co-coercive whenever
$\ip{\nabla F(\bx) - \nabla F(\by), \bx - \by} \geq \alpha \|\nabla F(\bx) - \nabla F(\by)\|^2$
holds for any $\bx, \by \in \sX$ with parameter $\alpha > 0$.
In our case, right side of the inequality will be replaced by $\alpha \|\nabla F(\bx) - \nabla F(\by)\|^2 - \ve$, where $\ve$ is a small.

Let us begin by defining the following two functions 
\begin{align*}
    \psi(\bW) = \sL_{S\deli }(\bW) - \langle \nabla \sL_{S\deli}(\bW\repi_t), \bW\rangle~,
    \quad\quad 
    \psi^{\star}(\bW) = \sL_{S\deli}(\bW) - \langle \nabla \sL_{S\deli}(\bW_{t}), \bW\rangle~.
\end{align*}
Observe that
\begin{align}
  \label{eq:mono_eq}
   & \psi(\bW_t)  - \psi(\bW_t\repi)   + 
    \psi^{\star}(\bW_t\repi) - \psi^{\star}(\bW_t)\\
    & = 
    \sL_{S\deli }(\bW_t) - \langle \nabla \sL_{S\deli}(\bW\repi_t), \bW_t\rangle
    - 
    \sL_{S\deli }(\bW_t\repi) 
    +\langle \nabla \sL_{S\deli}(\bW\repi_t), \bW_t\repi\rangle \nonumber\\
    & %x\quad\quad\quad 
    + 
    \sL_{S\deli }(\bW_t\repi)
    - \langle \nabla \sL_{S\deli}(\bW_t),\bW_{t}\repi\rangle
    - 
    \sL_{S\deli }(\bW_t)
      + 
    \langle \nabla \sL_{S\deli}(\bW_t),\bW_{t}\rangle \nonumber\\
    & = \langle \bW_t - \bW_{t}\repi, \nabla \sL_{S\deli}(\bW_t) - \nabla \sL_{S\deli}(\bW_t\repi)\rangle~, \nonumber
\end{align}
from which follows that we are interesting in giving lower bounds on $\psi(\bW_t)  - \psi(\bW_t\repi)$ and $\psi^{\star}(\bW_t\repi) - \psi^{\star}(\bW_t)$.

From Lemma \ref{lem:eigenvalues} we know the loss is $\rho$-smooth with $\rho = C_x^2  \pr{B_{\phi^{\prime}}^2 + B_{\phi^{\prime\prime}}B_{\phi} + \frac{C_yB_{\phi^{\prime\prime}} }{\sqrt{m}}}$, and thus, for any $i \in [n]$, we immediately have the upper bounds
\begin{align}
  \label{eq:psi_upper_bounds_1}
    \psi(\bW_{t} - \nabla \psi(\bW_t) )
  &\leq 
    \psi(\bW_t) - \eta\pr{ 1 - \frac{\eta \rho}{2}} \|\nabla \psi(\bW_t)\|_2^2 \\
    \psi^{\star}(\bW_{t}\repi- \eta \nabla \psi^{\star}(\bW_{t}\repi))
  &\leq  
    \psi^{\star}(\bW_t\repi) - \eta \pr{ 1 - \frac{\eta \rho}{2} } \|\nabla \psi^{\star}(\bW_t\repi)\|_2^2  \label{eq:psi_upper_bounds_2}
\end{align}
Now, in the smooth and convex case  \citep{nesterov2003introductory}, convexity would be used here to lower bound the left side of each of the inequalities by $\psi(\bW_t^{(i)})$ and $\psi^{\star}(\bW_t)$ respectively. In our case, while the functions are not convex, we can get an ``approximate'' lower bound by leveraging that the minimum Eigenvalue evaluated at the points $\bW_t,\bW_t^{(i)}$ is not too small. More precisely, we have the following lower bounds by applying Lemma \ref{lem:Useful2},
which will be shown shortly:
\begin{align}
    \psi(\bW_{t} - \eta \nabla \psi(\bW_t) )
    & \geq 
    \psi(\bW_t\repi) 
    - \frac{\epsilon}{2} \|\bW_{t} -\bW_t\repi - \eta \nabla \psi(\bW_t)\|_2^2~, \label{eq:psi_lower_bounds_1}\\
    \psi^{\star}(\bW_{t}\repi - \eta \nabla \psi^{\star}(\bW\repi_t) )
    & \geq 
      \psi^{\star}(\bW_t)   - \frac{\epsilon}{2} \|\bW_t\repi - \bW_{t} - \eta \nabla \psi^{\star}(\bW_t\repi)\|_2^2. \label{eq:psi_lower_bounds_2}
\end{align}
Combining this with \cref{eq:psi_upper_bounds_1}, \eqref{eq:psi_upper_bounds_2}, and rearranging we get:
\begin{align}
    \psi(\bW_t)  - \psi(\bW_t\repi)  
    & \geq 
    \eta\pr{ 1 - \frac{\eta \rho}{2}} \|\nabla \psi(\bW_t)\|_2^2 
    - 
    \frac{\epsilon}{2} \|\bW_{t} -\bW_t\repi - \eta \nabla \psi(\bW_t)\|_2^2~, \\
    \psi^{\star}(\bW_t\repi) - \psi^{\star}(\bW_t)
    & \geq 
    \eta\pr{ 1 - \frac{\eta \rho}{2}} \|\nabla \psi^{\star}(\bW_t\repi)\|_2^2
    - 
    \frac{\epsilon}{2} \|\bW_t\repi - \bW_{t} - \eta \nabla \psi^{\star}(\bW_t\repi)\|_2^2.
\end{align}
Adding together the two bounds and plugging into \cref{eq:mono_eq} completes the proof.

\paragraph{Proof of \cref{eq:psi_lower_bounds_1} and \cref{eq:psi_lower_bounds_2}.} All that is left to do, is to prove \cref{eq:psi_lower_bounds_1} and \eqref{eq:psi_lower_bounds_2}.
To do that, we will use \cref{lem:Useful2} while recalling the definition of  $\bW(\alpha)$ and $\widetilde{\bW}(\alpha)$ given in the Lemma.
That said, let us then define the following two functions:
\begin{align*}
    g(\alpha)
    \df
    \psi(\bW(\alpha)) 
    + \frac{\widetilde{\epsilon} \alpha^2}{2}
    \|\bW_{t} -\bW_t\repi - \eta \big( \nabla \sL_{S\deli}(\bW_t) - \nabla \sL_{S\deli}(\bW_t\repi)\big)\|_2^2~, \\
    \widetilde{g}(\alpha) 
    \df
    \psi^{\star}(\widetilde{\bW}(\alpha))
    + 
    \frac{\widetilde{\epsilon} \alpha^2}{2}
    \|\bW_{t} -\bW_t\repi - \eta \big( \nabla \sL_{S\deli}(\bW_t) - \nabla \sL_{S\deli}(\bW_t\repi)\big)\|_2^2~.
\end{align*}
Note that from Lemma \ref{lem:Useful2} we have that $g^{\prime\prime}(\alpha),\widetilde{g}^{\prime\prime}(\alpha) \geq 0$ for $\alpha \in [0,1]$. Indeed, we have with $\Delta \df \bW_{t} -\bW_t\repi - \eta \big( \nabla \sL_{S\deli}(\bW_t) - \nabla \sL_{S\deli}(\bW_t\repi)\big)$:
\begin{align*}
    g^{\prime\prime}(\alpha) 
    = 
    \ip{\Delta, \nabla^{2} \sL_{S\deli}(\bW(\alpha))
    \Delta}
    + 
    \widetilde{\epsilon} \|\Delta\|_2^2 
    \geq 0
\end{align*}
and similarly for $\widetilde{g}(\alpha)$. Therefore both $g(\cdot)$ and $\widetilde{g}(\cdot)$ are convex on $[0,1]$. The first inequality then arises from noting the follow three points. Since $g$ is convex we have $g(1) - g(0) \geq g^{\prime}(0)$ with $g^{\prime}(0) = 
\langle \nabla \psi(\bW_t\repi), \Delta \rangle = 0$ since $\nabla \psi(\bW_t\repi) = 0$. This yields 
\begin{align*}
    0 & \leq g(1) - g(0) \\
    & = 
    \psi(\bW_t - \eta \nabla \psi(\bW_t)) 
    + 
    \frac{\widetilde{\epsilon}}{2} \|\bW_{t} -\bW_t\repi - \eta \big( \nabla \sL_{S\deli}(\bW_t) - \nabla \sL_{S\deli}(\bW_t\repi)\big)\|_2^2
    - 
    \psi(\bW_t\repi) 
\end{align*}
which is almost \cref{eq:psi_lower_bounds_1}: The missing step is showing that $\widetilde{\epsilon} \leq \epsilon$.
This comes by the uniform boundedness of the loss, that is, having $\ell(\bW_0,z) \leq C_0$ a.s.\ we can upper-bound
\begin{align*}
    \widetilde{\epsilon} \leq 
    2 \cdot \frac{C_x^2 \sqrt{C_0} B_{\phi^{\prime\prime}} }{\sqrt{m}} 
    \big( 4 B_{\phi^{\prime}} C_x \sqrt{\eta s} + \sqrt{2} \big)
    = \epsilon
\end{align*}
This proves \cref{eq:psi_lower_bounds_1}, while \cref{eq:psi_lower_bounds_2} comes by following similar steps and considering $\widetilde{g}(1) - \widetilde{g}(0) \geq \widetilde{g}^{\prime}(0)$.

\subsubsection{ Proof of Lemma \ref{lem:Useful}}
\label{sec:lem:Useful_proof}
Recalling the Hessian \eqref{equ:LossHessian} we have for any parameter $\bW$ and data point $z = (\bx, y)$,
\begin{align*}
    \|\nabla^2 \ell(\bW,z) \|_2
    & \leq 
    \|\nabla f_{\bW}(\bx)\|_2^2 
    + 
    \|\nabla^2 f_{\bW}(\bx)\|_2 |f_{\bW}(\bx) - y|\\
    & \leq 
    C_x^2 \pr{ 
    B_{\phi^{\prime}}^2
    +B_{\phi^{\prime\prime}}B_{\phi} +\frac{B_{\phi^{\prime\prime}} C_y}{\sqrt{m}}
    }
\end{align*}
That is we have from \eqref{equ:HessianBound} the bound $\|\nabla^2 f_{\bW}(\bx)\|_2  \leq \frac{C^2_x }{\sqrt{m}}  B_{\phi^{\prime\prime}}$, meanwhile we can trivially bound 
\begin{align*}
    |f_{\bW}(\bx) - y|
    & \leq 
    \frac{1}{\sqrt{m}} \sum_{j=1}^{m} |\phi(\langle (\bW)_j,\bx \rangle)|  
    + C_y \\
    & \leq 
    \sqrt{m} B_{\phi} + C_y.
\end{align*}
and 
\begin{align*}
    \|\nabla f_{\bW}(\bx)\|_2^2
    & = 
    \|\bx\|_2^2 \cdot \frac{1}{m} \sum_{j=1}^{m} \phi^{\prime}(\langle (\bW)_j,\bx\rangle)\\
    & \leq 
    C_x^2  B_{\phi^{\prime}}^2.
\end{align*}
The loss is therefore $\rho$-smooth with $\rho = C_x^2 \pr{ B_{\phi^{\prime}}^2+B_{\phi}B_{\phi^{\prime\prime}} +\frac{C_y B_{\phi^{\prime\prime}}}{\sqrt{m}}}$. Following standard arguments we then have for $ j \in \mathbb{N}_0 $
\begin{align*}
    \sL_{S}(\bW_{j+1}) 
    \leq 
    \sL_{S}(\bW_{j}) 
    - \eta \pr{1 - \frac{\eta \rho}{2} }
    \|\nabla \sL_{S}(\bW_j)\|_F^2
\end{align*}
which when rearranged and summed over $j$ yields 
\begin{align*}
    \eta\big(1 - \frac{\eta \rho}{2}\big)  
    \sum_{j=0}^{t}
    \|\nabla \sL_{S}(\bW_j)\|_F^2
    \leq 
    \sum_{j=0}^{t}\sL_{S}(\bW_{j})  - \sL_{S}(\bW_{j+1}) 
    = 
    \sL_{S}(\bW_0) - \sL_{S}(\bW_{t+1})
\end{align*}
We also note that 
\begin{align*}
    \bW_{t+1} - \bW_{0}
    = 
    - \eta \sum_{s=0}^{t} \nabla \sL_{S}(\bW_{s})
\end{align*}
and therefore  by convexity of the squared norm we have $\|\bW_{t+1} - \bW_{0}\|_F^2
= \eta^2 \|\sum_{s=0}^{t} \nabla \sL_{S}(\bW_{s})\|_F^2 \leq \eta^2t \sum_{s=0}^{t} \|\nabla \sL_{S}(\bW_{s})\|_F^2 $. Plugging this in we get  when $\eta \rho \leq 1/2$
\begin{align*}
    \frac{3}{4} \cdot \frac{1}{\eta t} \|\bW_{t+1} - \bW_{0}\|_F^2
    \leq 
    \sL_{S}(\bW_0)
\end{align*}
Rearranging then yields the inequality. An identical set of steps can be performed for the cases $\bW_{t}\repi$ for $i \in [n]$.

\subsubsection{Proof of Lemma \ref{lem:Useful2}}
\label{sec:lem:Useful2_proof}
Looking at \eqref{equ:LossHessian} we note the first matrix is positive semi-definite and therefore for any $\bW \in \mathbb{R}^{d m}$:
\begin{align*}
    \lambda_{\min}( \nabla^2 \sL_{S\deli}(\bW) ) 
    & \geq  
    - \lambda_{\max}\pr{
    \frac{1}{n} \sum_{j \in [n] : j \neq i} \nabla^2 f_{\bW}(\bx_i) \big( f_{\bW}(\bx_j) - y_j\big)
    } \\
    & \geq 
    - \frac{C_x^2 B_{\phi^{\prime\prime}}}{\sqrt{m}} \cdot
    \frac{1}{n} \sum_{j \in [n] : j \neq i}  | f_{\bW}(x_j) - y_j|
\end{align*}
where we have used the operator norm of the Hessian $\|\nabla^2 f_{\bW}(\bx)\|_2$ bound \eqref{equ:HessianBound}.
We now choose $\bW = \bW(\alpha)$ and thus need to bound $\frac{1}{n} \sum_{j \in [n] : j \neq i}  |f_{\bW(\alpha)}(x_j) - y_i|$ and $\frac{1}{n} \sum_{j \in [n] : j \neq i}  |f_{\widetilde{\bW}(\alpha)}(\bx_i) - y_i|$. Note that we then have for any iterate $\bW_t$ with $t \in \mathbb{N}_0$,
\begin{align*}
    \frac{1}{n} \sum_{j \in [n] : j \neq i}  |f_{\bW(\alpha)}(\bx_i) - y_i| 
    & \leq 
    \frac{1}{n} \sum_{j \in [n] : j \neq i}  |f_{\bW(\alpha)}(\bx_i) - f_{\bW_t\repi}(\bx_i)|
    + 
    \frac{1}{n} \sum_{j \in [n] : j \neq i}  | f_{\bW_t\repi}(\bx_i) - y_i| \\
    & \leq 
    B_{\phi^{\prime}} C_x  
    \|\bW(\alpha) - \bW_t\repi\|_F
    +
      \sqrt{2 \sL_{S\deli}(\bW_t\repi)}
\end{align*}
where the first term on the r.h.s.\ is bounded using Cauchy-Schwarz inequality as in \cref{eq:f_diff_1}-\eqref{eq:f_diff_2},
and the second term is bounded by Jensen's inequality.
A similar calculation yields 
\begin{align*}
    \frac{1}{n} \sum_{j=1, j\not= i}^{n}  |f_{\widetilde{\bW}(\alpha)}(\bx_i) - y_i| 
    \leq
    B_{\phi^{\prime}} C_{x}
    \|\widetilde{\bW}(\alpha) - \bW_t\|_F
    + 
  \sqrt{2 \sL_{S\deli}(\bW_t\repi)}~.
\end{align*}
Since the loss is $\rho$-smooth by \cref{lem:eigenvalues} we then have 
\begin{align*}
    \|\bW(\alpha) - \bW_t\repi\|_F
   &  \leq 
   \alpha\big( 
    \|\bW_t - \bW_t\repi\|_F 
    + \eta \|\nabla \sL_{S\deli}(\bW_t) - \nabla \sL_{S\deli}(\bW_t\repi)\|_F
    \big) \\
    & \leq 
    (1+\eta \rho)
    \|\bW_t - \bW_t\repi\|_F \\
    & \leq 
    \frac{3}{2}
    \big( \|\bW_t - \bW_0 \|_F + \|\bW_0 - \bW_t\repi\|_F\big) \\
    & \leq
    \frac{3}{2} \sqrt{2 \eta s}\big(\sqrt{\sL_{S}(\bW_0)} + \sqrt{\sL_{S\repi}(\bW_0)}\big)
\end{align*}
where at the end we used Lemma \ref{lem:Useful}. A similar calculation yields the same bound for $ \|\widetilde{\bW}(\alpha) - \bW_t\|_F$. Bringing together we get 
\begin{align*}
    \lambda_{\min}( \nabla^2 \sL_{S\deli}(\bW(\alpha)) ) 
    & \geq
    - \frac{C_x^2 B_{\phi^{\prime\prime}}}{\sqrt{m}} 
    \pr{ 4 B_{\phi^{\prime}} C_{x} \sqrt{ \eta s}\big(\sqrt{\sL_{S}(\bW_0)} + \sqrt{\sL_{S\repi}(\bW_0)}\big)  +  \sqrt{2 \sL_{S\deli}(\bW_t\repi)}
    }\\
    \lambda_{\min}( \nabla^2 \sL_{S\deli}(\widetilde{\bW}(\alpha)) ) 
    & \geq 
    - \frac{C_x^2 B_{\phi^{\prime\prime}}}{\sqrt{m}} 
      \pr{ 4 B_{\phi^{\prime}} C_{x} \sqrt{\eta s}\big(\sqrt{\sL_{S}(\bW_0)} + \sqrt{\sL_{S\repi}(\bW_0)}\big)  +  \sqrt{2 \sL_{S\deli}(\bW_t)}
      }
\end{align*}
The final bound arises from noting that $\sL_{S\deli}(\bW_t) \leq \sL_{S}(\bW_t) \leq \sL_{S}(\bW_0)$ and $\sL_{S\deli}(\bW_t\repi) \leq \sL_{S\repi}(\bW\repi_t) \leq \sL_{S\repi}(\bW_0)$. 

\section{Connection between $\oracle$ and \ac{NTK}}
\label{sec:W_W0_NTK}
This section is dedicated to the proof of \cref{lem:oracle_NTK}.
We will first need the following standard facts about the \ac{NTK}.
\begin{lemma}[NTK Lemma]
  \label{lem:NTK}
  For any $\bW, \btilW \in \R^{d \times m}$ and any $\bx \in \R^d$,
  \begin{align*}
    f_{\bW}(\bx)
    =
    f_{\btilW}(\bx)
    +
    \sum_{k=1}^m u_k \phi'\pr{\ip{\bx, \btilW_{k}}} \ip{\bW_k - \btilW_{k}, \bx}    
    +
    \eps(\bx)
  \end{align*}
  where
  \begin{align*}
    \eps(\bx) =
    \frac12 \sum_{k=1}^m u_k \pr{\int_0^1 \phi''\pr{\tau \ip{\bx, \bW_k} + (1-\tau) \ip{\bx, \btilW_{k}}} \diff \tau} \ip{\bx, \bW_k - \btilW_{k}}^2~.
  \end{align*}

  Note that
  \begin{align*}
    |\eps(\bx)| \leq \frac{B_{\phi''} \|\bx\|}{2 \sqrt{m}} \cdot \|\bW - \btilW\|_F^2~.
  \end{align*}
\end{lemma}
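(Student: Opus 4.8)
The plan is to derive the identity by a neuron-by-neuron first-order Taylor expansion of the scalar activation $\phi$, and then to control the quadratic remainder using $|\phi''|\le B_{\phi''}$ together with the normalisation $|u_k|=1/\sqrt{m}$ of the output layer. Concretely, I fix $\bx$ and, for each hidden unit $k\in[m]$, set $a_k=\ip{\bx,\btilW_k}$ and $b_k=\ip{\bx,\bW_k}$, so that $b_k-a_k=\ip{\bW_k-\btilW_k,\bx}$. Applying Taylor's theorem with the integral form of the remainder to $\phi$ on the chord from $a_k$ to $b_k$ gives
\begin{align*}
  \phi(b_k)
  =
  \phi(a_k)+\phi'(a_k)(b_k-a_k)
  +(b_k-a_k)^2\int_0^1(1-\tau)\,\phi''\!\pr{(1-\tau)a_k+\tau b_k}\diff\tau~.
\end{align*}
Multiplying by $u_k$, summing over $k$, and using $f_{\bW}(\bx)=\sum_{k}u_k\phi(\ip{\bx,\bW_k})$: the zeroth-order terms reassemble into $f_{\btilW}(\bx)$, the first-order terms into $\sum_k u_k\phi'(\ip{\bx,\btilW_k})\ip{\bW_k-\btilW_k,\bx}$, and the remainders into $\eps(\bx)$ (the constant $\tfrac12$ in the displayed form of $\eps$ being $\int_0^1(1-\tau)\diff\tau$). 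This is exactly the claimed expansion.

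For the norm bound I estimate $\eps(\bx)$ termwise: $\bigl|\int_0^1(1-\tau)\phi''(\cdot)\diff\tau\bigr|\le\tfrac12 B_{\phi''}$, $|u_k|=1/\sqrt m$, and, by Cauchy--Schwarz, $\ip{\bW_k-\btilW_k,\bx}^2\le\|\bx\|^2\|\bW_k-\btilW_k\|^2$; summing $\sum_k\|\bW_k-\btilW_k\|^2=\|\bW-\btilW\|_F^2$ then yields $|\eps(\bx)|\le\frac{B_{\phi''}\|\bx\|^2}{2\sqrt m}\|\bW-\btilW\|_F^2$, which is the asserted estimate (with $\|\bx\|^2$, which is immaterial since $\|\bx\|\le C_x$ under \cref{ass:boundedness}).

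There is no substantive obstacle here; this is a routine Taylor argument. The only points needing care are (i) producing the remainder in the integral-over-the-chord form appearing in the statement rather than a Lagrange/mean-value form, and (ii) tracking the $1/\sqrt m$ coming from $\bu$ so that it lands in front of the final bound --- these are precisely the features that make the lemma useful in \cref{sec:W_W0_NTK}, where one needs $\eps$ to be negligible for overparameterised $m$.
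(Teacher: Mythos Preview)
Your proof is correct and follows exactly the paper's approach: a per-neuron Taylor expansion with integral remainder, followed by Cauchy--Schwarz and $|u_k|=1/\sqrt{m}$ for the bound. You are also right that the honest integral remainder carries the $(1-\tau)$ weight inside the integral (the displayed $\eps$ with $\tfrac12\int_0^1\phi''$ is not literally the Taylor remainder unless $\phi''$ is constant on the chord) and that Cauchy--Schwarz produces $\|\bx\|^2$ rather than $\|\bx\|$; both are harmless slips in the lemma's statement, since only the inequality $|\eps(\bx)|\le \tfrac{B_{\phi''}\|\bx\|^2}{2\sqrt m}\|\bW-\btilW\|_F^2$ is ever used downstream.
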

\begin{proof}
  By Taylor theorem,
  \begin{align*}
    f_{\bW}(\bx)
    &=
      f_{\btilW}(\bx)
      +
      \sum_k u_k \phi'\pr{\ip{\bx, \btilW_k}} \ip{\bx, \bW_k - \btilW_k}\\
    &+
      \underbrace{\frac12 \sum_k u_k \pr{\int_0^1 \phi''\pr{\tau \ip{\bx, \bW_k} + (1-\tau) \ip{\bx, \btilW_k}} \diff \tau} \ip{\bx, \bW_k - \btilW_k}^2}_{\epsilon(\bx)}~.
  \end{align*} 
  Cauchy-Schwarz inequality gives us
  \begin{align*}
    |\eps(\bx)| \leq \frac{B_{\phi''} \|\bx\|}{2 \sqrt{m}} \cdot \|\bW - \btilW\|_F^2~.
  \end{align*}
\end{proof}
We will use the following proposition \citep{du2018gradient,arora2019fine}:
\begin{proposition}[Concentration of \ac{NTK} gram matrix]
  \label{prop:K_concentration}
  With probability at least $1 - \delta$ over $\bW_0$,
  \begin{align*}
    \|\hat{\bK} - \bK\|_2 \leq B_{\phi'} \sqrt{\frac{\ln\pr{\frac{2 n}{\delta}}}{2 m}}~.
  \end{align*}
\end{proposition}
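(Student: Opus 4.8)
The plan is to reduce the statement to a textbook application of Hoeffding's inequality together with a union bound, exactly as in \citep{du2018gradient,arora2019fine}. First I would observe that, because $\bu \sim \Udist$ has $u_k^2 = 1/m$ deterministically, the empirical gram matrix depends on the random initialisation only through $\bW_0$, and its entries are
\[
  \hat{\bK}_{ij} = \ip{\nabla f_{\bW_0}(\bx_i), \nabla f_{\bW_0}(\bx_j)} = \frac1m \sum_{k=1}^m \ip{\bx_i, \bx_j}\, \phi'\pr{\ip{(\bW_0)_k, \bx_i}} \phi'\pr{\ip{(\bW_0)_k, \bx_j}}~,
\]
while $\bK_{ij}$ is the expectation of the same expression over a single row $\bw$ drawn from the (i.i.d.) row-distribution of $\bW_0$. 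Hence, for each fixed pair $(i,j)$, the scalar $\hat{\bK}_{ij} - \bK_{ij}$ is an average of $m$ i.i.d., mean-zero random variables, and by Cauchy--Schwarz together with \cref{ass:Activation} and \cref{ass:boundedness} each summand lies in a bounded interval.

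The key steps are then: (i) for a fixed entry $(i,j)$, apply Hoeffding's inequality to this empirical average of $m$ bounded terms, giving a sub-Gaussian tail $\P\pr{\abs{\hat{\bK}_{ij}-\bK_{ij}} > t} \le 2\exp\pr{-2mt^2/r^2}$ with $r$ the range of the summand; (ii) union bound over the $\binom{n}{2}$ distinct entries — using that $\hat{\bK}-\bK$ is symmetric — and set the total failure probability to $\delta$, which yields, with probability at least $1-\delta$ over $\bW_0$, a uniform entrywise bound $\max_{i,j}\abs{\hat{\bK}_{ij}-\bK_{ij}} \le B_{\phi'}\sqrt{\ln(2n/\delta)/(2m)}$ after collecting constants; (iii) pass from this entrywise control to the claimed spectral-norm control via a standard inequality, e.g.\ $\|\hat{\bK}-\bK\|_2 \le \|\hat{\bK}-\bK\|_F$, or the row-sum bound $\|\hat{\bK}-\bK\|_2 \le \max_i \sum_j \abs{\hat{\bK}_{ij}-\bK_{ij}}$ valid for symmetric matrices.

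I do not expect any substantive obstacle: the whole argument is a few lines and is essentially quoted from prior work. The only points requiring care are confirming, via \cref{ass:Activation} and \cref{ass:boundedness}, that each per-neuron contribution is indeed bounded so that Hoeffding applies with the stated range, and the routine bookkeeping of the union-bound failure probability together with the constant absorbed when converting the entrywise bound into the operator-norm bound.
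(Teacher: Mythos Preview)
Your plan---Hoeffding per entry, union bound, then $\|\cdot\|_2\le\|\cdot\|_F$---is exactly the paper's proof. There is, however, one point you dismiss as ``routine bookkeeping'' that actually carries the argument. In the paper $\hat{\bK}\df\tfrac{1}{n}\bPhi_0^\top\bPhi_0$, so each entry carries an extra factor $1/n$ relative to the expression $\ip{\nabla f_{\bW_0}(\bx_i),\nabla f_{\bW_0}(\bx_j)}$ you wrote down. With your normalisation, the entrywise bound in step (ii) is of order $1/\sqrt{m}$, and then either route you propose in step (iii)---Frobenius or row-sum---picks up a full factor of $n$ (there are $n^2$ entries, or $n$ terms per row), yielding $\|\hat{\bK}-\bK\|_2\leqC n/\sqrt{m}$ rather than $1/\sqrt{m}$. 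That is not a constant to be absorbed.

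With the paper's $1/n$ in place, Hoeffding gives $\abs{\hat{\bK}_{ij}-\bK_{ij}}=\sO\pr{\tfrac{1}{n\sqrt{m}}}$ for each entry, and summing $n^2$ squared entries yields $\|\hat{\bK}-\bK\|_F^2\le n^2\cdot\sO\pr{\tfrac{1}{n^2 m}}=\sO\pr{\tfrac{1}{m}}$, which is exactly the claim. So the conversion in step (iii) works only because the $n^2$ from the entry count cancels against the $1/n^2$ from the definition of $\hat{\bK}$; once you restore that normalisation, your proof is the paper's.
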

\begin{proof}
  Since each entry is independent, by Hoeffding's inequality we have for any $t \geq 0$,
\begin{align*}
  \P\pr{n |(\hat{\bK})_{i,j} - (\bK)_{i,j}| \geq t}
  \leq 2 e^{-2 n t^2 / B_{\phi'}^2}~,
\end{align*}
and applying the union bound
\begin{align*}
  \|\hat{\bK} - \bK\|_F^2 \leq \frac{B_{\phi'}^2 \ln\pr{\frac{2 n}{\delta}}}{2 m}~.
\end{align*}
\end{proof}
Now we are ready to prove the main Theorem of this section (in the main text we only report the second result).
\begin{nameddef}[\cref{lem:oracle_NTK} (restated)]  
  Denote
  \begin{align*}
    \bPhi_0
  \df
    \begin{bmatrix}
    u_1 \bX \, \diag\pr{\bphi'(\bX\tp \bW_{0,1})}\\
    \vdots\\
    u_m \bX \, \diag\pr{\bphi'(\bX\tp \bW_{0,m})}
  \end{bmatrix}
  \end{align*}
  and $\hat{\bK} \df \frac1n \bPhi_0\tp \bPhi_0$.
  Assume that $m \geqC (\eta T)^5$.
  Then,
  \begin{align*}
    \oracle
    =
    \sO\pr{
    \frac{1}{\eta T} \ip{\by, (n \hat{\bK})^{-1} \by}
    }
    \quad \mathrm{as} \quad \eta T \to \infty~.
  \end{align*}

  Consider \cref{ass:Activation} and that $\eta T = n$.
  Moreover, assume that entries of $\bW_0$ are i.i.d., $\bK = \E[ \hat{\bK} ~|~ S, \bu]$ with $\lmin(\bK) \geqC 1/n$,
  and assume that $\bu \sim \Udist$ independently from all sources of randomness.
  Then, with probability least $1-\delta$ over $(\initparam)$,
  \begin{align*}
    \oracle
    = \stilO_P\pr{
      \frac1n \ip{\by, (n \bK)^{-1} \by}
    }
    \quad \mathrm{as} \quad n \to \infty~.
  \end{align*}
\end{nameddef}
\begin{proof}
  The proof of the first inequality will follow by
  relaxation of the oracle \ac{R-ERM} $\oracle$ to the Moore-Penrose pseudo-inverse solution to a linearised problem given by \cref{lem:NTK}.
  The proof of the second inequality will build on the same idea, in addition making use of the concentration of entries of $\hat{\bK}$ around $\bK$.

Define
\begin{align*}
  f^{\mathrm{lin}}_{\bW}(\bx) &\df \sum_{k=1}^m u_k \phi'\pr{\ip{\bx, \bW_{0,k}}} \ip{\bW_k - \bW_{0,k}, \bx}~,\\
  \sL_S^{\mathrm{lin}}(\bW) &\df \frac12 \sum_{i=1}^n \pr{y_i - f^{\mathrm{lin}}_{\bW}(\bx)}^2~.
\end{align*}
Then for the square loss we have
\begin{align*}
  \pr{f_{\bW}(\bx_i) - y_i}^2
  &=
    \pr{f_{\bW_0}(\bx_i) + f^{\mathrm{lin}}_{\bW}(\bx_i) + \eps(\bx_i) - y_i}^2\\
  &\leq
    2 \pr{f^{\mathrm{lin}}_{\bW}(\bx_i) - (y_i - f_{\bW_0}(\bx_i))}^2 + 2 \eps(\bx_i)^2
\end{align*}
and so,
\begin{align*}
  \sL(\bW)
  \leq
  \sL^{\mathrm{lin}}(\bW) + \frac{B_{\phi''}^2}{m} \cdot \|\bW - \bW_0\|_F^4
\end{align*}
where we observe that
\begin{align*}
  \sL^{\mathrm{lin}}(\bW) = \frac{1}{n}\|\bPhi_0\tp (\bW - \bW_0) - (\by - \bhy_0)\|^2
\end{align*}
and with $\bPhi_0$, the matrix of \ac{NTK} features, defined in the statement.
Solving the above undetermined least-squares problem using the Moore-Penrose pseudo-inverse we get
\begin{align*}
  \bW^{\mathrm{pinv}} - \bW_0 = \pr{\bPhi_0 \bPhi_0\tp}^{\dagger} \bPhi_0 (\by - \bhy_0)~,
\end{align*}
and so
\begin{align*}
  \|\bW^{\mathrm{pinv}} - \bW_0\|_F^2
  &=  (\by - \bhy_0)\tp \bPhi_0\tp \pr{\bPhi_0 \bPhi_0\tp}^{\dagger2 } \bPhi_0 (\by - \bhy_0)\\
  &=  (\by - \bhy_0)\tp \pr{\bPhi_0\tp \bPhi_0}^{-1} (\by - \bhy_0)\\
  &=  (\by - \bhy_0)\tp (n \hat{\bK})^{-1} (\by - \bhy_0)
\end{align*}
where the final step can be observed by \ac{SVD} of $\Phi_0$.
Since $\sL_{S}(\bW^{\mathrm{pinv}}) = 0$,
\begin{align*}
  \oracle
  =
  \sO\pr{
  \frac{1}{\eta T} \ip{(\by - \bhy_0), (n \hat{\bK})^{-1} (\by - \bhy_0)}
  }
  \quad \mathrm{as} \quad \eta T \to \infty~.
\end{align*}
This proves the first result.

Now we prove the second result involving $\bK$.
We will first handle the empirical risk by concentration between $\hat{\bK}$ and $\bK$.
For $\balpha \in \R^n$ define $\bW_{\balpha} = \bPhi_0 \balpha + \bW_0$.
Then,
\begin{align*}
  \sL^{\mathrm{lin}}(\bW_{\balpha})
  &= \frac{1}{n}\|\bPhi_0\tp \bPhi_0 \balpha - (\by - \bhy_0)\|^2\\
  &= \frac{1}{n}\|n (\hat{\bK} - \bK) \balpha + n \bK \balpha - (\by - \bhy_0)\|^2\\
  &\leq \frac2n \|n (\hat{\bK} - \bK) \balpha\|^2 + \frac2n \|n \bK \balpha - (\by - \bhy_0)\|^2\\
  &\leq 2 n \|\hat{\bK} - \bK\|_2^2 \|\balpha\|_2^2 + \frac2n \|n \bK \balpha - (\by - \bhy_0)\|^2
\end{align*}

Plug into the above $\hat{\balpha} = (n \bK)^{-1} (\by - \bhy_0)$ (note that $\bK$ is full-rank by assumption)
\begin{align*}
  \sL^{\mathrm{lin}}(\bW_{\hat{\balpha}})
  &\leq 2 n \|\hat{\bK} - \bK\|_2^2 \|\hat{\balpha}\|_2^2\\
  &\leq n \cdot \frac{B_{\phi'}^2 \ln\pr{\frac{2 n}{\delta}}}{m} \cdot \pr{(\by - \bhy_0)\tp (n \bK)^{-2} (\by - \bhy_0)}\\
  &\leq \|\by - \bhy_0\|^2 \cdot \frac{B_{\phi'}^2 \ln\pr{\frac{2 n}{\delta}}}{m} \cdot \frac{1}{n \lmin(\bK)^2}\\
  &= 2 \sL_S(\bW_0) \cdot \frac{B_{\phi'}^2 \ln\pr{\frac{2 n}{\delta}}}{m} \cdot \frac{1}{\lmin(\bK)^2}
\end{align*}
where the last inequality hold w.p.\ at least $1-\delta$ by \cref{prop:K_concentration}.

Now we pay attention to the quadratic term within $\oracle$:
\begin{align*}
  &\|\bW_{\hat{\balpha}} - \bW_0\|_2^2
  =
  \|\bPhi_0 \hat{\balpha}\|_2^2\\
  &=
  \|\bPhi_0 (n \bK)^{-1} (\by - \bhy_0)\|_2^2\\
  &=
    (\by - \bhy_0)\tp (n \bK)^{-1} (n \hat{\bK}) (n \bK)^{-1} (\by - \bhy_0)\\
  &=
    \underbrace{(\by - \bhy_0)\tp (n \bK)^{-1} (n \hat{\bK} - n \bK) (n \bK)^{-1} (\by - \bhy_0)}_{(i)}
    +
    \underbrace{(\by - \bhy_0)\tp (n \bK)^{-1} (\by - \bhy_0)}_{(ii)}~.
\end{align*}
We will show that $(i)$ is ``small'':
\begin{align*}
  &(\by - \bhy_0)\tp (n \bK)^{-1} (n \hat{\bK} - n \bK) (n \bK)^{-1} (\by - \bhy_0)\\
  &\leq
    \|\by - \bhy_0\|^2 \|(n \bK)^{-2}\| n \|\hat{\bK} - \bK\|_2\\
  &\leq
    \|\by - \bhy_0\|^2
    \|(n \bK)^{-2}\| \cdot n
    B_{\phi'} \sqrt{\frac{\ln\pr{\frac{2 n}{\delta}}}{2 m}}\\
  &\leq
    2 \sL_S(\bW_0)
    \cdot \frac{1}{\lmin(\bK)^2} \cdot
    B_{\phi'} \sqrt{\frac{\ln\pr{\frac{2 n}{\delta}}}{2 m}}
\end{align*}
where we used \cref{prop:K_concentration} once again.
Putting all together w.p.\ at least $1-\delta$ over $\bW_0$ we have
\begin{align*}
  \oracle
  &= \sO_P\Bigg(
  \frac{1}{\eta T}\ip{(\by - \bhy_0), (n \bK)^{-1} (\by - \bhy_0)}\\
  &\quad\quad+
  \frac{2 \sL_S(\bW_0)}{\lmin(\bK)^2}  \cdot \frac{B_{\phi'}^2 \ln\pr{\frac{2 n}{\delta}}}{m}
  +  
    \frac{1}{\eta T} \cdot \frac{2 \sL_S(\bW_0)}{\lmin(\bK)^2} \cdot B_{\phi'} \sqrt{\frac{\ln\pr{\frac{2 n}{\delta}}}{2 m}}
    \Bigg) \quad \mathrm{as} \quad \eta T \to \infty~.
\end{align*}
Moreover, assuming that $\lmin(\bK) \geqC 1/n$ and $\eta T = n$, the above turns into
\begin{align*}
  \oracle
  &= \stilO_P\pr{
    \frac1n \ip{(\by - \bhy_0), (n \bK)^{-1} (\by - \bhy_0)}
    } \quad \mathrm{as} \quad n \to \infty~.
\end{align*}
The final bit is to note that
\begin{align*}
  \ip{\bhy_0, (n\bK)^{-1} \bhy_0}
  \leq
  \frac{\|\bhy_0\|_2^2}{n \lmin(\bK)}
  \leqC
  \|\bhy_0\|_2^2
\end{align*}
can be bounded w.h.p.\ by randomising $\bu \sim \Udist$:
For any $i \in [n]$ and $\delta \in (0,1)$ by Hoeffding's inequality we have:
\begin{align*}
  \P\pr{f_{\bW_0}(\bx_i) \geq B_{\phi} \sqrt{\frac{\ln\pr{\frac{1}{\delta}}}{2}}}
  &\geq
  \P\pr{f_{\bW_0}(\bx_i) \geq \sqrt{ \frac{\ln\pr{\frac{1}{\delta}}}{2} \frac1m \sum_{k=1}^m \phi\pr{\ip{(\bW_0)_k, \bx}}^2}}\\
  &\geq 1 - \delta~.
\end{align*}
Taking a union bound over $i \in [n]$ completes the proof of the second result.
\end{proof}

\pagebreak
\section{Additional Proofs}
\label{sec:additional_proofs}
\begin{nameddef}[\cref{cor:risk_noise_free} (restated)]
  Assume the same as in \cref{thm:gen_gap} and \cref{lem:OptError}.
  Then,
  \begin{align*}
    \E\br{\sL(\bW_T) \bmid \bW_0,\bu}
    \leq
    \pr{1
    +
    C \cdot \frac{\eta T}{n} \pr{ 1 + \frac{\eta T}{n} }
    }
  \E\br{\oracle
  \bmid \bW_0,\bu
  }~.
  \end{align*}
\end{nameddef}
\begin{proof}
Considering Theorem \ref{thm:gen_gap} with $t = T-1$, and noting that $\sL_{S}(\bW_T) \leq \frac{1}{T} \sum_{j=0}^{T} \sL_{S}(\bW_j)$ then yields  
\begin{align*}
  &\E[\sL(\bW_T) ~|~ \initparam]
    \leq 
    \pr{1 + b
    \pr{ \frac{\eta T }{n} + \frac{\eta^2 T^2}{n^2} }
    }
    \frac{1}{T} \sum_{j=0}^{T} \E[\sL_{S}(\bW_j) ~|~ \initparam]\\
  & \leq 
    \pr{1 \! + \! b
    \pr{ \frac{\eta T }{n} + \frac{\eta^2 T^2}{n^2} }
    }\\
  &\, \cdot \E\br{
    \min_{\bW \in \mathbb{R}^{d \times m }}
    \!
    \Big\{ 
    \sL_{S}(\bW) 
    \! + \!
    \frac{\|\bW - \bW_{0}\|_F^2}{\eta t}
    \! + \!
    \frac{\widetilde{b} }{\sqrt{m}} \cdot
    \frac{1}{T} \sum_{j=0}^{T}
    \pr{ 1 \vee \|\bW \! - \! \bW_j\|_F}^3
    \Big\}
    \bmid \initparam}
\end{align*}
where at the end we applied Lemma \ref{lem:OptError} to bound $\frac{1}{T} \sum_{j=0}^{T} \E[\sL_{S}(\bW_j)|\bW_0]$. The constants $b,\widetilde{b}$ are then defined in Theorem \ref{thm:gen_gap} and Lemma \ref{lem:OptError}. Note from smoothness of the loss we have
\[
    \|\bW - \bW_j\|_F^3 
    \leq 
    2^{3/2}\big( \|\bW - \bW_0\|_F^3  + \|\bW_0 - \bW_j\|_F^3 \big) 
    \leq 
    2^{3/2}\big( \|\bW - \bW_0\|_F^3 + (\eta j C_0)^{3/2} \big),
\]
    in particular from the properties of graident descent $\|\bW_0 - \bW_j\|_F^2 \leq \eta j \sL_{S}(\bW_0)$ for $j \in [T]$. Plugging in then yields the final bound.
  \end{proof}
  \vfill
\pagebreak

\end{document}